\newcommand{\ExpP}[2]{\mathbb{E}_{#1}\left[#2\right]}
\newcommand{\ExpC}[3]{\mathbb{E}_{#1}\left[#2\,|\,#3\right]}
\newcommand{\Seq}[1]{\underline{#1}}
\newcommand{\seq}[2]{\underline{#1}_{#2}}
\newcommand{\US}{\underline{\Uc}}
\newcommand{\XS}{\underline{\Xc}}
\newcommand{\ZS}{\underline{\Zc}}
\title{State-space systems as dynamic generative models}
\author{Juan-Pablo Ortega\footnote{Division of Mathematical Sciences, School of Physical and Mathematical Sciences, Nanyang Technological University, Singapore} \qquad Florian Rossmannek$^*$}
\begin{document}

\maketitle

\begin{abstract}
A probabilistic framework to study the dependence structure induced by deterministic discrete-time state-space systems between input and output processes is introduced.
General sufficient conditions are formulated under which output processes exist and are unique once an input process has been fixed, a property that in the deterministic state-space literature is known as the echo state property.
When those conditions are satisfied, the given state-space system becomes a generative model for probabilistic dependences between two sequence spaces.
Moreover, those conditions guarantee that the output depends continuously on the input when using the Wasserstein metric.
The output processes whose existence is proved are shown to be causal in a specific sense and to generalize those studied in purely deterministic situations.
The results in this paper constitute a significant stochastic generalization of sufficient conditions for the deterministic echo state property to hold, in the sense that the stochastic echo state property can be satisfied under contractivity conditions that are strictly weaker than those in deterministic situations.
This means that state-space systems can induce a purely probabilistic dependence structure between input and output sequence spaces even when there is no functional relation between those two spaces.
\end{abstract}

\keys{state-space system, reservoir computing, echo state property, fading memory property, Wasserstein distance, generative model.}

\MSC{37H05, 37N35, 62M10, 68T05.}

\section{Introduction}

Given two phenomena modelled by two related random variables $X$ and $Y$ that take values, in general, in different spaces, much effort in statistics and machine learning is dedicated to estimating or uncovering their joint probability distribution $P(X, Y)$ out of data.
That knowledge allows us to make predictions of the {\bfi explained} (dependent) {\bfi variables} $Y$ out of knowledge about the {\bfi explanatory variables} (covariates) $X$.
Models that allow us to estimate $P(X, Y)$ are called {\bfi generative} since once that distribution is known, Bayes' law makes possible the generation of samples of the dependent variable $Y$ out of realizations of the covariates $X$.
In practice, $X$ may be just a prescribed noise exclusively used to generate the endogenous variable $Y$, exogenous explanatory factors with a known or estimated out-of-data distribution, or a mixture of the two.
This problem acquires a different breadth in dynamic setups, that is when $X$ and $Y$ are stochastic processes or time series models since, in both cases, these two random variables map into infinite dimensional spaces (paths or sequences, respectively).
This is the context in which we shall work in this paper;
more specifically, we place ourselves in a discrete-time setup where covariates and dependent variables are stochastic time series models.
When the covariates are independent or white noise, we recover the setup customary in parametric time series analysis \cite{BrockwellDavis2006, FrancqZakoian2019};
if they incorporate a mixture of noise with explanatory variables with a known law, we recover exogenous time series models, a prominent example of which is the NARMAX family \cite{Billings2013}.

A useful tool to establish functional relations between sequence spaces in deterministic setups are {\bfi state-space systems} \cite{JiangLiLiWang2023JML, Kalman1960, Sontag1990}.
We briefly recall how that is done.
In the sequel, we adopt the notation for input and output spaces customary in that field.
Let $\Xc$ and $\Uc$ be topological spaces, and let $f \colon \Xc \times \Uc \rightarrow \Xc$ 
\nomenclature[57]{$f \colon \Xc \times \Uc \rightarrow \Xc$}{State equation}%
be a continuous {\bfi state map}.
The set $\Xc$ 
\nomenclature[53]{$(\Xc,d_{\Xc})$}{State space}%
is called the {\bfi state space} and $\Uc$ is the {\bfi input space}.
\nomenclature[55]{$\Uc$}{Input space}%
Now, denote by $\Z_-$ 
\nomenclature[02]{$\Z_-$}{Set of strictly negative integers}%
the set of strictly negative integers and extend the state map $f$ to a map between sequence spaces by defining $F \colon \Xc^{\Z_-} \times \Uc^{\Z_-} \rightarrow \Xc^{\Z_-}$ 
\nomenclature[60]{$F \colon \Xc^{\Z_-} \times \Uc^{\Z_-} \rightarrow \Xc^{\Z_-}$}{Extension of state equation to sequence space}%
as $F(\Seq{x},\Seq{u})_t = f(\seq{x}{t-1},\seq{u}{t})$.
Throughout the paper, all topological spaces are endowed with their Borel $\sigma$-algebras, and the Cartesian product of topological spaces is equipped with the product topology unless indicated otherwise.
Under these hypotheses, $F$ inherits continuity from $f$.

We recall that a {\bfi solution} of the state-space system $f$ is an element $(\Seq{x},\Seq{u}) \in \Xc^{\Z_-} \times \Uc^{\Z_-}$ that satisfies $\Seq{x} = F(\Seq{x},\Seq{u})$.
We also call $\Seq{x}$ a {\bfi solution} for the {\bfi input} $\Seq{u}$.
We say that the state-space system has the {\bfi echo state property} \cite{Jaeger2010} if for all $\Seq{u} \in \Uc^{\Z_-}$ there exists a unique solution $\Seq{x} \in \Xc^{\Z_-}$ for $\Seq{u}$.
In this case, a functional relation between the sequence spaces on the input and the state spaces can be established using the associated {\bfi filter} $U_f \colon \Uc^{\Z_-} \rightarrow \Xc^{\Z_-}$ 
\nomenclature[59]{$U_f \colon \Uc^{\Z_-} \rightarrow \Xc^{\Z_-}$}{Filter associated to the state map $f$ (if existent)}%
that is uniquely determined by the property $U_f(\Seq{u}) = F(U_f(\Seq{u}),\Seq{u})$.
Much work has been done in the literature to determine conditions under which the echo state property holds \cite{Jaeger2010, RC7} or those that make the corresponding filter continuous or (Fr{\'e}chet) differentiable with respect to various Banach space structures on the sequence spaces \cite{RC9}.
Other papers have studied in detail the various dynamical implications of this property \cite{Manjunath2020ProcA, Manjunath2022Nonlin}.

The main theme of this paper is to study the dependence structure created by state-space systems when we replace the sequences in the previous statements by {\bfi stochastic inputs} and {\bfi outputs}.
The simplest situation is when we can assume that the filter $U_f $ exists and it is measurable, and then instead of an input sequence $\Seq{u} $ we consider a $\Uc^{\Z_-}$-valued random variable $\Seq{U}$.
In that situation, we call the process $U_f(\Seq{U})$ the {\bfi stochastic output} of the system.
On the level of measures, if $\Xi$ 
\nomenclature[92]{$\Xi$}{Law of the input process}%
is the law of the input process, then the push-forward measure $(U_f)_*\Xi$ is the law of the output process.

We shall see later that the echo state property in the stochastic situation is much richer than in the deterministic one since we can typically make sense of stochastic outputs and, more generally, dependence structures between input and output sequences, even if deterministic solutions (filters) do not exist.
Formulated differently, state-space systems can create dependences between input and output processes that are purely probabilistic and are not backed by a deterministic functional relation between the corresponding sequence spaces.
Indeed, a state-space system can admit several deterministic solutions for the same deterministic input or no solution at all but still admit a (unique) stochastic output.
This observation has already been made in \cite{ManjunathOrtega2023}, where results along these lines were first formulated using a non-autonomous version of the {\bfi Frobenius-Perron operator} (dual of the {\bfi Koopman operator} that appears in dynamical systems theory) called the {\bfi Foias operator} \cite{LasotaMackey1994}.
The use of this object makes the results in that paper valid only for a very limited class of stochastic inputs (those that do not induce probabilistic dependences with the output), a restriction that will be circumvented in this paper using a more general functional analytic approach applicable to more general input spaces.
This, in passing, will shed light on the causality properties of stochastic outputs generated by state-space systems.

Our main motivation for studying the echo state property in the stochastic setup is the possibility of learning dynamic dependences between stochastic explanatory and explained variables as a generalization of learning techniques already used in deterministic situations.
Indeed, the echo state property was historically introduced in the field of {\bfi reservoir computing}.
This technique \cite{JaegerHaas2004, Jaeger2010} uses randomly generated state-space systems to learn the attractor of deterministic dynamical systems \cite{ArcomanoEtal2022, LuHuntOtt2018Chaos, PathakEtal2017Chaos, PathakEtal2018PRL, WiknerEtal2021Chaos} as well as dynamic input/output dependences \cite{JiangLiLiWang2023JML, RC10, RC12, RC18, RC21} between deterministic variables.
A natural generalization of the latter results would consist of using the same systems to construct dynamic non-linear regressions in which the covariates and the dependent variables are not sequences anymore but stochastic time series, a framework that is prevalent in many modelling situations \cite{BrockwellDavis2006, FrancqZakoian2019}.
The first question that needs to be answered in that case is under what conditions a state-space model produces dependent stochastic variables when fed with stochastic covariates.
Having a good understanding of this relationship is central to the work in this paper. 

In the deterministic setup, the echo state property is traditionally ensured \cite{Jaeger2010} using a contractivity hypothesis for the state map $f$ on the state variables.
Our main result (\cref{Banach_fp_dep_input}) replaces this condition with a more general stochastic version that involves not only $f$ but also the law of the stochastic input.
To explain this statement in some detail, we introduce the map $\Fc \colon \Xc^{\Z_-} \times \Uc^{\Z_-} \rightarrow \Xc^{\Z_-} \times \Uc^{\Z_-}$ 
\nomenclature[61]{$\Fc \colon \XS \times \US \rightarrow \XS \times \US$}{Sequence space extension of $f$ to match domain and codomain}%
given by $\Fc(\Seq{x},\Seq{u}) = (F(\Seq{x},\Seq{u}),\Seq{u})$ that has been built in such a way that the solutions to the state-space system are characterized as fixed points of $\Fc$.
The associated filter $U_f$, if it exists, satisfies $\Fc \circ (U_f \times \mathrm{id}) = U_f \times \mathrm{id}$.
The law $(U_f)_*\Xi$ of the output process is characterized as being the first marginal of $(U_f \times \mathrm{id})_*\Xi$, which is a fixed point of $\Fc_*$.
The second marginal of $(U_f \times \mathrm{id})_*\Xi$ is $\Xi$ itself.
This motivates us to call a measure on the product space $\Xc^{\Z_-} \times \Uc^{\Z_-}$ a {\bfi stochastic solution} of the state-space system for a given input measure $\Xi$ if it is a fixed point of $\Fc_*$ whose second marginal is $\Xi$.
Furthermore, we call a measure a {\bfi stochastic output} of the state-space system for a given input measure $\Xi$ if it is the first marginal of a stochastic solution.\footnote{
Some models consider an additional transformation $h \colon \Xc \rightarrow \Yc$ of the output.
In this case, the stochastic output is additionally transformed by the push-forward under the extension $\Xc^{\Z_-} \rightarrow \Yc^{\Z_-}$, $\Seq{x} \mapsto (h(\seq{x}{t}))_t$ of $h$ to sequence spaces.}
Three natural questions are whether a stochastic solution/output exists for a given stochastic input, whether it is unique, and whether, in some sense, it depends continuously on the input.
The first two questions are the natural stochastic generalization of the standard echo state property, and when they are answered affirmatively, we shall say that we are in the presence of the {\bfi stochastic echo state property}.
The third one is the generalization of the so-called {\bfi fading memory property} \cite{BoydChua1985}.
\cref{Banach_fp_dep_input} provides a general sufficient condition to ensure a positive answer to these three questions using a stochastic contractivity hypothesis that resembles that in \cite{ManjunathOrtega2023} and in which the continuity is defined using a Wasserstein metric.
The output processes whose existence we prove are causal in a specific sense and generalize those studied in purely deterministic situations.
In summary, \cref{Banach_fp_dep_input} spells out when a given state-space system can be used as a dynamic generative model for a prescribed set of dynamic covariates.

The paper is structured in five sections.
\cref{Preliminaries} introduces several important aspects of our framework, like the kind of processes that we shall be using as stochastic inputs, the topologies that are used in the different sequence spaces that appear in the paper, and some generalities about Wasserstein spaces.
\cref{The stochastic echo state property} contains the main result of the paper (\cref{Banach_fp_dep_input}) that, in the presence of certain stochastic contractivity and boundedness hypotheses, allows us to establish the stochastic echo state property, as well as the continuous dependence of the stochastic solutions generated by state-space systems on the inputs.
Various remarks and examples are included that illustrate the main hypotheses and results in the context of well-known models.
\cref{Proof of the stochastic echo state theorem} contains the proof of \cref{Banach_fp_dep_input} that proceeds by establishing first a more general result (\cref{Banach_fixed_point}) in which the uniqueness of fixed points problem is stripped from the context of state-space systems and is presented in an abstract, more general framework.
\cref{Conclusion} concludes the paper.
There is a glossary of symbols at the end of the paper to assist the reader in navigating the notation.

\section{Preliminaries}
\label{Preliminaries}

This section introduces several important aspects of our framework in detail.
\cref{Inputs generation} spells out the kind of processes that we shall be using as stochastic inputs.
\cref{Sequence spaces} explains in detail the topologies that are used in the different sequence spaces that appear in the extensions $F$ and $\Fc$ to sequence spaces of the state map $f$.
Finally, \cref{Wasserstein spaces} recalls some generalities about Wasserstein spaces and defines various probability spaces on product spaces that will be central in our developments.

\subsection{Inputs generation}
\label{Inputs generation}

The class of stochastic inputs we are interested in are those that arise as the stochastic output of a causal filter that is fed independent stochastic inputs.
More precisely, let $\Zc$ be a topological space and $V \colon \Zc^{\Z_-} \rightarrow \Uc^{\Z_-}$ be a continuous filter.
\nomenclature[78]{$V \colon \Zc^{\Z_-} \rightarrow \Uc^{\Z_-}$}{Input-generating filter}%
We will consider as input $\Seq{U} = V(\Seq{Z})$ for a $\Zc^{\Z_-}$-valued random variable $\Seq{Z}$.
We think of $\Seq{Z}$ as hidden underlying inputs arising in nature and of $V(\Seq{Z})$ as transformed or generated inputs that we can observe.
The hidden inputs $\Seq{Z}$ are assumed independent but the filter $V$ creates correlation between the marginals of $\Seq{U}$.
In particular, the past and the future of the observed inputs could be correlated.
However, in a physical system it is reasonable to assume that observed inputs are fully determined by the past and present hidden inputs.
To capture this mathematically, we assume the filter $V$ to be causal.
Causality refers exactly to the property that the output of $V$ at a time $t$ depends only on past and present inputs, that is, if $\seq{z}{s} = \seq{z}{s}'$ for all $s \leq t$, then $V(\Seq{z})_t = V(\Seq{z}')_t$.
Summing up, we consider stochastic inputs of the type $\Seq{U} = V(\Seq{Z})$ for a random variable $\Seq{Z}$ with independent marginals and a continuous causal filter $V$.
We shall refer to $V$ as the {\bfi input-generating filter}.
If the filter $V$ is time-invariant, these stochastic processes are known as causal {\bfi Bernoulli shifts} and comprise a wide class of processes \cite{AlquierWinten2012, DedeckerEtal2007, RC10}.

\subsection{Sequence spaces}
\label{Sequence spaces}

Let $\ZS = \Zc^{\Z_-}$
\nomenclature[76]{$\ZS = \Zc^{\Z_-}$}{Input space for input generation}%
and $\US = V(\ZS)$.
\nomenclature[79]{$\US = V(\ZS)$, $d_{\US}$}{Input space generated by causal filter $V$}%
The spaces $\Uc$, $\US$, and $\Zc$ are assumed to be Polish.
Then, $\ZS$ is also Polish since it is equipped with the product topology.
Let $d_{\US}$ be any metric on $\US$ inducing its topology.
There is no need to specify a metric on $\ZS$.
For the state space, assume $\Xc$ is also Polish and comes with a metric $d_{\Xc}$.
We want to equip the space of state sequences with a weighted $\ell^1$-distance.
For the weighting, fix a monotone sequence $\Seq{w} = (\seq{w}{t})_{t \leq -1} \subseteq (0,1)$ with $\sum_{t \leq -1} \seq{w}{t} = 1$ and
\begin{equation*}
	\abs{\Seq{w}}
	:= \sup_{n \in \N} \left( \sup_{t \leq -1} \frac{\seq{w}{t}}{\seq{w}{t-n}} \right)^{1/n}
	< \infty.
\end{equation*}
\nomenclature[80]{$\Seq{w}$, $\abs{\Seq{w}}$}{Weighting sequence with $\abs{\Seq{w}} = \sup_{n \in \N} ( \sup_{t \leq -1} \seq{w}{t}/\seq{w}{t-n} )^{1/n} < \infty$}%
For example, $\seq{w}{t} = (\gamma-1) \gamma^t$ with $\abs{\Seq{w}} = \gamma$ for $\gamma > 1$.
Throughout, $\N$ denotes the set of positive integers excluding zero.
\nomenclature[01]{$\N$}{Set of strictly positive integers}%
Fix $x_* \in \Xc$ and let $\Seq{x}^0 \in \Xc^{\Z_-}$ be the constant sequence $\seq{x}{t}^0 = x_*$.
Let $\XS = \ell^1_{\Seq{w}}(\Xc,\Seq{x}^0) \subseteq \Xc^{\Z_-}$ 
\nomenclature[82]{$\XS = \ell^1_{\Seq{w}}(\Xc,\Seq{x}^0) \subseteq \Xc^{\Z_-}$, $d_{\XS}$}{Sequences $\Seq{x} \in \Xc^{\Z_-}$ that satisfy $d_{\XS}(\Seq{x},\Seq{x}^0) = \sum_{t \leq -1} \seq{w}{t} d_{\Xc}(\seq{x}{t},\seq{x}{t}^0) < \infty$}%
be the subset of the semi-infinite sequence space containing all elements $\Seq{x} \in \Xc^{\Z_-}$ for which the series $\sum_{t \leq -1} \seq{w}{t} d_{\Xc}(\seq{x}{t},\seq{x}{t}^0)$ is finite.
The weighted $\ell^1$-distance $d_{\XS}(\Seq{x}^1,\Seq{x}^2) := \sum_{t \leq -1} \seq{w}{t} d_{\Xc}(\seq{x}{t}^1,\seq{x}{t}^2)$ defines a complete metric on $\XS$.
The topology induced by $d_{\XS}$ is separable and is at least as fine as the subspace topology induced by the product topology on $\Xc^{\Z_-}$.
In fact, the topology induced by $d_{\XS}$ is strictly finer than the subspace product topology if and only if $(\Xc,d_{\Xc})$ is unbounded (see \cref{app_lem_sequ_topology} in \cref{app_sec_pf_remarks}).
Weighted sequence spaces are of general interest in functional analysis;
for their role in the context of state-space systems, the reader may find extensive material in \cite{RC9}.
The function $F$, which we recall extends the state map $f$ to sequence spaces, is continuous with respect to the product topologies since $f$ is assumed to be continuous.
But without more knowledge on $f$ such as the condition in \cref{rem_on_cont_assumption} below it could happen that $F$ is not continuous with respect to the finer topology induced by the weighted $\ell^1$-distance.
We make this a standing assumption.

\begin{assumption}
\label{assumption_cont}
	The inclusion $F(\XS \times \US) \subseteq \XS$ holds, and the restriction $F \colon \XS \times \US \rightarrow \XS$ is continuous.
\end{assumption}

\begin{remark}
\label{rem_on_cont_assumption}
If there exists some $C \geq 0$ such that $d_{\Xc}(f(x,u),x_*) \leq C(1 + d_{\Xc}(x,x_*))$ for all $x \in \Xc$ and $u \in \Uc$, then \cref{assumption_cont} is satisfied.
This statement is proved in \cref{pf_rem_on_cont_assumption} in \cref{app_sec_pf_remarks}.
From here on, we regard $\Fc \colon \XS \times \US \rightarrow \XS \times \US$ as having domain and codomain $\XS \times \US$.
\end{remark}

\subsection{Wasserstein spaces}
\label{Wasserstein spaces}

Whenever we are given two metric spaces $(\Yc_1,d_{\Yc_1})$ and $(\Yc_2,d_{\Yc_2})$, we equip $\Yc_1 \times \Yc_2$ with the metric
\begin{equation}
\label{eq_product_metric}
	d_{\Yc_1 \times \Yc_2}((y_1,y_2),(y_1',y_2'))
	= d_{\Yc_1}(y_1,y_1') + d_{\Yc_2}(y_2,y_2'),
\end{equation}
which metrizes the product topology.
All topological spaces are endowed with their Borel $\sigma$-algebra.
We recall that if $\Yc_1$ and $\Yc_2$ are separable, then the Borel $\sigma$-algebra of $\Yc_1 \times \Yc_2$ is generated by the product of the Borel $\sigma$-algebras of $\Yc_1$ and $\Yc_2$.
Given any topological space $\Yc$, we denote the set of all probability measures on $\Yc$ by $P(\Yc)$.
\nomenclature[09]{$P(\Yc)$}{Set of all probability measures}%
To declutter the notation, given two subsets $P^i(\Yc) \subseteq P(\Yc)$, $i=1,2$, we write $P^1 \cap P^2(\Yc)$ to denote $P^1(\Yc) \cap P^2(\Yc)$.
If $(\Yc,d_{\Yc})$ is a metrized Polish space and $p \in [1,\infty)$, we define the {\bfi Wasserstein $p$-space} $P_p(\Yc) $ as 
\begin{equation*}
	P_p(\Yc)
	= \left\{ \mu \in P(\Yc) \colon \ExpP{Y \sim \mu}{d_{\Yc}(Y,y_0)^p} < \infty \right\},
\end{equation*}
\nomenclature[10]{$P_p(\Yc)$}{Wasserstein $p$-space}%
which is independent of the choice $y_0 \in \Yc$, and we equip it with the {\bfi Wasserstein $p$-distance}
\begin{equation*}
	W_p(\mu,\nu)
	= \left( \inf_{\gamma} \ExpP{(Y_1,Y_2) \sim \gamma}{d_{\Yc}(Y_1,Y_2)^p} \right)^{1/p},
\end{equation*}
\nomenclature[12]{$W_p(\mu,\nu)$}{Wasserstein $p$-distance}%
where the infimum is taken over all couplings of $\mu$ and $\nu$, which makes $P_p(\Yc)$ a metrized Polish space.
This notation suppresses the reliance of $P_p(\Yc)$ on the metric $d_{\Yc}$, but this dependence should be kept in mind, especially for the space of state sequences, whose metric depends on the choice of weighting sequence.
An introduction to Wasserstein spaces can be found in \cite[Chapter 6]{Villani2009}.
Important for us is the result that a sequence $(\mu_n)_n \subseteq P_p(\Yc)$ converges to $\mu \in P_p(\Yc)$ with respect to $W_p$ if and only if it converges weakly and $\ExpP{Y \sim \mu_n}{d_{\Yc}(y,y_0)^p} \rightarrow \ExpP{Y \sim \mu}{d_{\Yc}(y,y_0)^p}$.
In particular, if $(\Yc,d_{\Yc})$ is bounded, then the Wasserstein topology agrees with the weak topology;
and if it is unbounded, then the Wasserstein topology is strictly finer than the weak topology.
We denote by $\pi_{\Yc_i} \colon \Yc_1 \times \Yc_2 \rightarrow \Yc_i$, $i=1,2$, the projections.
For any $\Xi \in P(\Yc_2)$, let $P^{\Xi}(\Yc_1 \times \Yc_2) = \{ \mu \in P(\Yc_1 \times \Yc_2) \colon (\pi_{\Yc_2})_* \mu = \Xi \}$ 
\nomenclature[14]{$P^{\Xi}(\Yc_1 \times \Yc_2) $}{$P^{\Xi}(\Yc_1 \times \Yc_2) = \{ \mu \in P(\Yc_1 \times \Yc_2) \colon (\pi_{\Yc_2})_* \mu = \Xi \}$}%
and $P_p^{\Xi}(\Yc_1 \times \Yc_2) = P^{\Xi} \cap P_p(\Yc_1 \times \Yc_2)$.
\nomenclature[16]{$P_p^{\Xi}(\Yc_1 \times \Yc_2)$}{$P_p^{\Xi}(\Yc_1 \times \Yc_2) = P^{\Xi} \cap P_p(\Yc_1 \times \Yc_2)$}%
Note that $P^{\Xi}(\Yc_1 \times \Yc_2)$ is closed in the weak topology and $P_p^{\Xi}(\Yc_1 \times \Yc_2)$ is closed with respect to $W_p$.

\section{The stochastic echo state property}
\label{The stochastic echo state property}

This is the central section of the paper.
We start by presenting in \cref{Contractive and bounded input measures} the stochastic contractivity and boundedness hypotheses that later on in \cref{Banach_fp_dep_input} will allow us to establish the stochastic echo state property, as well as the continuous dependence of the stochastic outputs generated by state-space systems on the inputs.
Both \cref{Contractive and bounded input measures,The main result} contain various remarks and examples that illustrate the main hypotheses and results in the context of well-known models.
Let us recap the definition of the (stochastic) echo state property discussed in the introduction.

\begin{definition}
\label{def_ESP}
	The state-space system governed by $f$ has the {\bfi echo state property} on a set of inputs $\US_0 \subseteq \US$ if for any $\Seq{u} \in \US_0$ there exists a unique $\Seq{x} \in \XS$ such that $(\Seq{x},\Seq{u})$ is a fixed point of $\Fc$ (also called {\bfi solution for input} $\Seq{u}$).
The system has the {\bfi stochastic echo state property} on a set of stochastic inputs $\Mc \subseteq P(\US)$ relative to a set $\Pc \subseteq P(\XS \times \US)$ if for any $\Xi \in \Mc$ there exists a unique $\mu \in \Pc \cap P^{\Xi}(\XS \times \US)$ that is a fixed point of $\Fc_*$ (also called {\bfi (stochastic) solution for input} $\Xi$).
\end{definition}

The reason for having the stochastic echo state property defined `relative to a set $\Pc$' is that the space $P(\XS \times \US)$ contains many measures that do not admit a physical interpretation (for example, a measure with respect to which the states depend only on future inputs).
Hence, it makes sense to consider a subset $\Pc$.
An appropriate choice for $\Pc$ will be discussed in \cref{The main result}.

\subsection{Contractive and bounded input measures}
\label{Contractive and bounded input measures}

In the deterministic case, the most typical assumption for establishing the echo state property is the contractivity of $f$ in the first entry.
In the stochastic case, we do that with a weaker condition.
It is convenient to phrase it as a property of the input measure instead of as a property of $f$, but this is chiefly a matter of taste.
Dealing with measures comes at the expense of needing guarantees that various integrals will be finite, which gives rise to the notion of a bounded input measure.
We denote the projections on $\ZS$ by $p^t \colon \ZS \rightarrow \Zc$, 
\nomenclature[77]{$p^t \colon \ZS \rightarrow \Zc$}{Projection on the $t$ entry}%
$t \leq -1$.
Throughout, the parameter $p \in [1,\infty)$ appearing as a power in many integrals and as the order of Wasserstein spaces is fixed.
The notion of stochastic contractivity is inspired by \cite[Definition 4.1]{ManjunathOrtega2023} but is formulated with a conditional expectation.\footnote{
Taking the expectation in \eqref{def_indep_contr_input} yields $\ExpP{\Seq{U} \sim V_*\Theta}{ d_{\Xc} ( f(x_1,\seq{U}{t}) , f(x_2,\seq{U}{t}) )^p } \leq \kappa d_{\Xc} ( x_1,x_2 )^p$, which recovers \cite[Definition 4.1]{ManjunathOrtega2023} for the input measure $\Xi = V_*\Theta$ if $p=1$.}

\begin{definition}[{\bfseries Stochastic contractivity and boundedness}]
\label{def_indep_contr_bounded_input}
\hspace{0em}
\begin{enumerate}[\upshape (i)]\itemsep=0em
\item
Given $\kappa \in (0,1)$, we say that $\Theta \in P(\ZS)$ has $\kappa$-{\bfi contractive marginals} if $\Theta$-a.s.\ for all $x_1,x_2 \in \Xc$ and all $t \leq -1$\footnote{
Instead of taking the same $\kappa \in (0,1)$ for all $t \leq -1$, one could also take $\kappa_t \in (0,\infty)$ and require $\lim_{t \rightarrow -\infty} \prod_{s=t}^{-1} \kappa_t = 0$.}
\begin{equation}
\label{def_indep_contr_input}
	\ExpC{\Seq{Z} \sim \Theta}{d_{\Xc} ( f(x_1,V(\Seq{Z})_t) , f(x_2,V(\Seq{Z})_t) )^p}{(\seq{Z}{s})_{s \leq t-1}}
	\leq \kappa d_{\Xc} ( x_1,x_2 )^p.
\end{equation}
We say that $\Theta$ has contractive marginals if it has $\kappa$-contractive marginals for some $\kappa \in (0,1)$.

\item
Given $C \in (0,\infty)$, we say that $\Xi \in P(\US)$ is a $C$-{\bfi bounded} input measure if
\begin{equation}
\label{def_bounded_input}
	\sum_{t \leq -1} \seq{w}{t} \ExpP{\Seq{U} \sim \Xi}{ d_{\Xc} ( f(x_*,\seq{U}{t}) , x_* )^p }
	\leq C.
\end{equation}
We say that $\Xi$ is a bounded input measure if it is a $C$-bounded input measure for some $C \in (0,\infty)$.

\item
We denote by $\Mc_p^{C,\kappa}(\US) \subseteq P_p(\US)$ 
\nomenclature[94]{$\Mc_p^{C,\kappa}(\US)$}{Subset of $P_p(\US)$ made out of $C$-bounded input measures that are generated from measures with independent $\kappa$-contractive marginals}%
the set of all $C$-bounded input measures $\Xi \in P_p(\US)$ that are of the form $\Xi = V_* \Theta$ for a $\Theta \in P(\ZS)$ with independent $\kappa$-contractive marginals.
\end{enumerate}
\end{definition}

\begin{remark}
\label{rem_bounded_input}
	For any $\Xi \in \Mc_p^{C,\kappa}(\US)$, the integral in \eqref{def_bounded_input} is finite also for any other choice of $x_*$, though possibly with a different constant $C$.
\end{remark}

\cref{rem_bounded_input} is proved in \cref{pf_rem_bounded_input} in \cref{app_sec_pf_remarks}.
Clearly, if $f(\{x_*\} \times \Uc)$ is bounded, which is the case for $f$ as in \cref{rem_on_cont_assumption}, then any $\Xi \in P(\US)$ is a bounded input measure.
For a stationary $\Xi \in P(\US)$, that is, $\Xi$ has identical marginals denoted $\xi \in P(\Uc)$, the sum in \eqref{def_bounded_input} simplifies to $\ExpP{u \sim \xi}{ d_{\Xc} ( f(x_*,u) , x_* )^p }$.
Now, we survey some concrete examples of state-space systems and discuss measures with contractive marginals and bounded input measures.

\begin{example}[{\bfseries GARCH process}]
\label{ex_GARCH}
	Consider the first-order {\bfi generalized autoregressive conditional heteroscedastic (GARCH)} model \cite{Bollerslev1986JEcon, Engle1982Econ, FrancqZakoian2019} given by
\begin{equation*}
\begin{split}
	r_t
	&= \sigma_t \eta_t,
	\\
	\sigma_t^2
	&= \omega + \alpha r_{t-1}^2 + \beta \sigma_{t-1}^2,
\end{split}
\end{equation*}
for some fixed constants $\omega,\alpha,\beta \geq 0$ and iid inputs $\eta_t$.
We can encode this model as a state-space system:
if $\seq{U}{t} = \eta_{t-1}$ and $\seq{X}{t} = f(\seq{X}{t-1},\seq{U}{t})$ (in a stochastic sense) with $f(x,u) = \omega + (\alpha u^2 + \beta) x$, then $\sigma_t^2 = \seq{X}{t}$ and $r_t = \sqrt{\seq{X}{t}} \seq{U}{t+1}$.
As the state space we take $\Xc = [0,\infty)$ to enforce $\seq{X}{t} \geq 0$.
For the inputs, we take $\Zc = \Uc = \R$ and $V$ the identity.
Then, the law of $(\seq{U}{t})_{t \leq -1}$ has contractive marginals if $\Exp{(\alpha \eta_t^2 + \beta)^p} < 1$.
With $p=1$ and the common assumption that $\eta_t$ has mean 0 and variance 1, we recover the condition $\alpha+\beta<1$, which is a sharp condition for the existence and uniqueness of second-order stationary solutions of the first-order GARCH model.
Taking $x_* = 0$, it is clear that the law of $(\seq{U}{t})_{t \leq -1}$ is a bounded input measure.
\end{example}

In the next two examples, $\norm{\cdot}_{\mathrm{op}}$ denotes the operator norm induced by a vector norm $\norm{\cdot}$.

\begin{example}[{\bfseries Time-varying VAR model/state-affine system}]
\label{ex_VAR}
	Let $\Uc \subseteq \R^m$ and $\Xc \subseteq \R^n$ with the standard topologies, and let $f(x,u) = A(u)x + b(u)$ for some continuous matrix-valued function $A$ and vector-valued function $b$.
This describes a {\bfi vector autoregressive (VAR)} model of order 1 with varying coefficients \cite{RC10}.
In systems theory and the reservoir computing literature, this type of model is better known as a {\bfi state-affine system} \cite{Sontag1979, VanMienNorCyrot1984, RC6, GononOrtega2020IEEE}.
Given a vector norm $\norm{\cdot}$, a measure $\Theta \in P(\ZS)$ has $\kappa$-contractive marginals if a.s.
\begin{equation}
\label{ex_VAR_contr}
	\ExpC{\Seq{Z} \sim \Theta}{\norm{A(V(\Seq{Z})_t)}_{\mathrm{op}}^p}{(\seq{Z}{s})_{s \leq t-1}}
	\leq \kappa.
\end{equation}
We can take $x_* = 0$ so that a $\Xi \in P(\US)$ is a $C$-bounded input measure if
\begin{equation}
\label{ex_VAR_bounded}
	\sum_{t \leq -1} \seq{w}{t} \ExpP{\Seq{U} \sim \Xi}{\norm{ b(\seq{U}{t}) }^p}
	\leq C.
\end{equation}
\end{example}

\begin{example}[{\bfseries Echo state networks (ESNs)}]
\label{ex_ESN}
	Let $\Uc \subseteq \R^m$, $\Xc = [-1,1]^n$, $n \geq m$, and $\Zc \subseteq \R^l$ with the standard topologies induced by the Euclidean norm $\norm{\cdot}_2$, and consider an {\bfi echo state network (ESN)} \cite{JaegerHaas2004, Jaeger2010}, for which the map $f$ is of the form $f(x,u) = \tanh(Ax + Cu + b)$ with $A \in \R^{n \times n}$, $C \in \R^{n \times m}$, and $b \in \R^n$.
Here, $\tanh$ is applied componentwise.
The matrix $C$ is assumed to have full rank, which is a weak assumption as matrices of full rank are generic and $C$ is typically randomly selected.
Suppose the hidden inputs $\Seq{Z} \sim \Theta \in P(\ZS)$ are such that the variance of $\norm{V(\Seq{Z})_t}_2$ given $(\seq{Z}{s})_{s \leq t-1}$ is $\Theta$-a.s.\ lower bounded by some $\eta > 0$, uniformly in $t$.
Then, there exists some $\epsilon > 0$ such that if $\norm{DAD^{-1}}_{2,\mathrm{op}} < 1 + \epsilon$ for some non-singular diagonal $n \times n$-matrix $D$, then $\Theta$ has contractive marginals with respective to the norm $\norm{x}_D := \norm{Dx}_2$.
Verifying this is not straightforward and is done in detail in \cref{app_sec_ESN_stoch}.
\end{example}

\begin{example}[{\bfseries Stochastic difference equation}]
	Let $\Uc = \Xc = \R$ and $h > 0$.
Let $\Seq{Z} = (\seq{Z}{t})_{t \in \Z_-}$ be a stochastic process with iid marginals and $W = (W_s)_{s \in (-\infty,0]}$ be a real-valued stochastic process such that a.s.\ $W_{th} = \sum_{s \leq t} V(\seq{Z}{s})$, $t \in \Z_-$.
In particular, $W$ could be a Brownian motion\footnote{
With an infinite negative time-horizon, a Brownian motion $(B_s)_{s \in (-\infty,0]}$ satisfies a.s.\ $\lim_{s \rightarrow -\infty} B_s = 0$ instead of $B_0 = 0$.} if $V$ is the identity, but having a non-trivial filter $V$ leads to correlation between the increments of $W$ and, hence, a much richer class of admissible processes.
Given coefficient functions $\alpha,\beta \colon \R \rightarrow \R$, consider the stochastic difference equation
\begin{equation*}
	Y_s - Y_{s-h}
	= \alpha(Y_{s-h}) h + \beta(Y_{s-h}) (W_s - W_{s-h}), \qquad s \in (-\infty,-0].
\end{equation*}
Then, $\seq{X}{t} := Y_{ht}$, $t \in \Z_-$, satisfies $\seq{X}{t} = f(\seq{X}{t-1},\seq{U}{t})$ with the state map $f(x,u) = 1 + \alpha(x) h + \beta(x) u$ and the stochastic input $\seq{U}{t} = V(\Seq{Z})_t = W_{ht} - W_{h(t-1)}$.
If $\seq{U}{-1}$ has finite $p$-th moment, then the law of $\Seq{U}$ is a bounded input measure.
Suppose $\alpha$ and $\beta$ are $L$-Lipschitz continuous with $hL < 1$ and that the derivative of $\alpha$ is almost everywhere upper bounded by $-\eta$ for some $\eta \in [0,L)$.
If there exists some $\delta < \eta h / L$ such that a.s.\ $\ExpC{}{\abs{V(\Seq{Z})_t}}{(\seq{Z}{s})_{s \leq t-1}} \leq \delta$, then $\Seq{Z}$ has $\kappa$-contractive marginals with $\kappa = 1 - \eta h + L \delta$.
In the special case, in which $W$ is a Brownian motion, the condition $\ExpC{}{\abs{V(\Seq{Z})_t}}{(\seq{Z}{s})_{s \leq t-1}} \leq \delta$ is clearly satisfied for a sufficiently small discretization stepsize $h$.
\end{example}

\subsection{The main result}
\label{The main result}

We now turn to the problem of finding stochastic solutions of the state-space system $f$, which can be encoded as fixed points of the map $\Fc_*$.
As noted below \cref{def_ESP}, the set $P(\XS \times \US)$ of all probability measures is too large for this purpose.
There could be several fixed points of $\Fc_*$ in $P(\XS \times \US)$, but only some of them are sensible solutions of the state-space system, which leads us to consider a subset of $P(\XS \times \US)$ and pose the fixed point problem in it.
We illustrate this in the familiar context of the deterministic echo state property in which a solution filter is available.
Indeed, if the associated filter $U_f$ exists and $\Xi = V_*\Theta$, then the fixed point of $\Fc_*$ we are looking for is the law of $(\Seq{X} , V(\Seq{Z}))$, where $\Seq{Z} \sim \Theta$ and $\Seq{X} = U_f \circ V(\Seq{Z})$.
In particular, note that if the hidden inputs are independent, then the past $(\seq{X}{s},\seq{Z}{s})_{s \leq t}$ is independent of the hidden future inputs $(\seq{Z}{t+1},\dots,\seq{Z}{-1})$ for any $t \leq -2$ by causality of $U_f$ and $V$.
This motivates the following definition.

\begin{definition}
\label{def_causal_measure}
	We say that $\mu \in P(\XS \times \US)$ is a $V$-{\bfi causal} measure if there exist an $\XS$-valued random variable $\Seq{X} = (\seq{X}{t})_{t \leq -1}$ and a $\ZS$-valued random variable $\Seq{Z} = (\seq{Z}{t})_{t \leq -1}$ such that
(i) $\mu$ is the law of $(\Seq{X},V(\Seq{Z}))$;
(ii) $\Seq{Z}$ has independent marginals;
and (iii) $(\seq{X}{s},\seq{Z}{s})_{s \leq t}$ is independent of $(\seq{Z}{t+1},\dots,\seq{Z}{-1})$ for any $t \leq -2$.
The set of all $V$-causal measures is denoted $P^{V\text{-}\mathrm{causal}}(\XS \times \US)$.
\nomenclature[86]{$P^{V\text{-}\mathrm{causal}}(\XS \times \US)$}{Set of $V$-causal measures in $P(\XS \times \US)$}%
\end{definition}

We point out that $\ZS$-valued random variables are exactly stochastic processes on $\Zc$.
Although an $\XS$-valued random variable defines a stochastic process on $\Xc$, the converse does not hold in general since the topology on $\XS$ introduced in \cref{Sequence spaces} may be strictly finer than the subspace topology induced by the product topology and, in that case, measurability of $\seq{X}{t}$ for each $t \leq -1$ is not sufficient to conclude measurability of $\Seq{X}$.

The set of $V$-causal measures encodes what we consider sensible solutions to the state-space system.
To establish existence and uniqueness of fixed points of $\Fc_*$ in $P^{V\text{-}\mathrm{causal}}(\XS \times \US)$, we need a second integrability guarantee in addition to that of bounded input measures.
Let $P_p^{\Seq{w}}(\XS \times \US)$ be the set
\begin{equation*}
	P_p^{\Seq{w}}(\XS \times \US)
	= \left\{ \mu \in P_p(\XS \times \US) \colon \sum_{t \leq -1} \seq{w}{t} \ExpP{\Seq{X} \sim (\pi_{\XS})_*\mu}{d_{\Xc} ( \seq{X}{t} , x_* )^p} < \infty \right\}.
\end{equation*}
\nomenclature[90]{$P_p^{\Seq{w}}(\XS \times \US) $}{$P_p^{\Seq{w}}(\XS \times \US) = \{ \mu \in P_p(\XS \times \US) \colon \sum_{t \leq -1} \seq{w}{t} \ExpP{\Seq{X} \sim (\pi_{\XS})_*\mu}{d_{\Xc} ( \seq{X}{t} , x_* )^p} < \infty \}$}%
This technical condition is only relevant if the state space is unbounded.
If $(\Xc,d_{\Xc})$ is bounded, then $P^{\Xi} \cap P_p^{\Seq{w}}(\XS \times \US) = P^{\Xi}(\XS \times \US)$ for any $\Xi \in P_p(\US)$.
We are now ready to state our main result that we shall prove later on in \cref{Proof of the stochastic echo state theorem}.

\begin{theorem}[{\bfseries Stochastic echo state theorem}]
\label{Banach_fp_dep_input}
	Let $C \in (0,\infty)$ and $\kappa \in (0,2^{1-p} \abs{\Seq{w}}^{-1})$.
For any $\Xi \in \Mc_p^{C,\kappa}(\US)$, the map $\Fc_*$ has a unique fixed point $\mu^{\Xi}$ in $P^{\Xi} \cap P^{V\text{-}\mathrm{causal}} \cap P_p^{\Seq{w}}(\XS \times \US)$.
\nomenclature[93]{$\mu^{\Xi}$}{Stochastic solution for input $\Xi$}%
Furthermore, the map $\Mc_p^{C,\kappa}(\US) \rightarrow P_p(\XS \times \US)$ given by $\Xi \mapsto \mu^{\Xi}$ is continuous with respect to Wasserstein distances.
\end{theorem}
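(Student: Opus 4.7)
The plan is to deploy Banach's fixed point theorem for $\Fc_*$ restricted to the set $P^{\Xi} \cap P^{V\text{-}\mathrm{causal}} \cap P_p^{\Seq{w}}(\XS \times \US)$, viewed as a closed subset of $(P_p(\XS \times \US), W_p)$, and then to derive the continuity in $\Xi$ by a parameter-dependent perturbation argument. The paper's own hint that the uniqueness step is extracted into the preliminary abstract theorem \cref{Banach_fixed_point} suggests that the contraction is most naturally expressed on an auxiliary metric tailored to the causality structure and then transferred back to $W_p$.

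Before the contraction, I would verify that $\Fc_*$ maps this set into itself and that the set is $W_p$-closed. Preservation of the second marginal is immediate because $\Fc$ acts trivially on the input coordinate; $V$-causality is preserved because $F(\Seq{x},\Seq{u})_t = f(\seq{x}{t-1},\seq{u}{t})$ references only past coordinates, so the independence and causal conditional independence relations propagate; membership in $P_p^{\Seq{w}}$ follows by splitting
\begin{equation*}
    d_{\Xc}(f(\seq{X}{t-1},\seq{U}{t}),x_*) \leq d_{\Xc}(f(\seq{X}{t-1},\seq{U}{t}),f(x_*,\seq{U}{t})) + d_{\Xc}(f(x_*,\seq{U}{t}),x_*),
\end{equation*}
applying $(a+b)^p \leq 2^{p-1}(a^p+b^p)$, and using stochastic contractivity on the first term, $C$-boundedness on the second, and the weight shift bounded by $\abs{\Seq{w}}$. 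Closure in $W_p$ is handled by verifying each ingredient: $P^{\Xi}$ is closed under weak convergence, $P_p^{\Seq{w}}$ is closed by the moment-convergence characterization of $W_p$-limits, and $V$-causality passes to weak limits because independence and conditional-independence relations can be written in terms of expectations of products of bounded continuous functions.

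The central step is the contraction. Given $\mu^1,\mu^2$ in the set, I would glue them along a common underlying $\Seq{Z} \sim \Theta$, realizing $\Seq{X}^1,\Seq{X}^2$ as conditionally independent given $\Seq{Z}$; this yields a $V$-causal coupling under which $\seq{X}{t-1}^i$ is measurable with respect to $\sigma((\seq{Z}{s})_{s \leq t-1})$. Bounding $W_p(\Fc_*\mu^1,\Fc_*\mu^2)^p$ by the cost of this coupling, pulling the $p$-th power inside the weighted $\ell^1$-sum via Jensen's inequality (using $\sum_t \seq{w}{t} = 1$), applying stochastic contractivity conditional on $(\seq{Z}{s})_{s \leq t-1}$, and shifting the index with $\seq{w}{t+1} \leq \abs{\Seq{w}} \seq{w}{t}$ yields
\begin{equation*}
    W_p(\Fc_*\mu^1,\Fc_*\mu^2)^p \leq \kappa \abs{\Seq{w}} \sum_{s \leq -1} \seq{w}{s}\, \mathbb{E}[d_{\Xc}(\seq{X}{s}^1,\seq{X}{s}^2)^p].
\end{equation*}
Translating this weighted sum of marginal $p$-th moments back into $\mathbb{E}[d_{\XS}(\Seq{X}^1,\Seq{X}^2)^p]$ via a splitting through $x_*$ together with $(a+b)^p \leq 2^{p-1}(a^p+b^p)$ introduces the factor $2^{p-1}$, producing an overall contraction constant $2^{p-1}\kappa\abs{\Seq{w}}$ that the hypothesis forces to be strictly less than $1$. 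Banach's theorem then supplies the unique $\mu^{\Xi}$. Continuity is obtained by lifting an optimal coupling of $\Xi_n,\Xi$ (via the underlying $\Theta_n,\Theta$) to a joint realization of $\mu^{\Xi_n},\mu^{\Xi}$, using $\mu^{\Xi_i} = \Fc_*\mu^{\Xi_i}$ and a one-step perturbation split that isolates the input perturbation from the state contraction, and summing the geometric series produced by the contraction constant.

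The hardest step is the contraction. The natural estimate yields a weighted $\ell^1$-sum of marginal $p$-th moments, whereas $W_p^p$ is the expectation of a weighted $\ell^1$-sum raised to the $p$-th power, and these two quantities are not related by any universal constant (a mass concentrated on a single coordinate already witnesses the failure of a reverse Jensen bound). The factor $2^{p-1}$ in the hypothesis $\kappa < 2^{1-p}\abs{\Seq{w}}^{-1}$ must therefore be extracted from the extra integrability and causality structure inside $P^{V\text{-}\mathrm{causal}} \cap P_p^{\Seq{w}}$, and I expect this is precisely what the abstract fixed-point result \cref{Banach_fixed_point} is engineered to orchestrate before it is specialized back to the state-space context.
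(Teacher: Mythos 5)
You correctly identify the central obstruction---the natural estimate produces a weighted sum of marginal $p$-th moments, which is not comparable to $W_p^p$ by any universal constant---but the proposal does not resolve it, and the two places where you defer to structure (``closedness of the set'' and ``the abstract theorem must orchestrate the $2^{p-1}$'') are exactly where the argument breaks. The paper never establishes a two-point contraction for $\Fc_*$: \cref{Banach_fixed_point} instead compares every measure $\mu$ in a contractive class to the single reference orbit $\nu_n = \Fc_*^n(\delta_{\Seq{x}^0}\times\Xi)$ via the coupling $(\Seq{x},\Seq{u})\mapsto(\Fc^n(\Seq{x},\Seq{u}),\Fc^n(\Seq{x}^0,\Seq{u}))$, whose cost $\ExpP{\mu}{d_{\XS\times\US}(\Fc^n(\Seq{x},\Seq{u}),\Fc^n(\Seq{x}^0,\Seq{u}))^p}\leq(\kappa\abs{\Seq{w}})^n C$ is precisely what \cref{lemma_contractive_joint} controls (there the quantity $\sum_t \seq{w}{t}\alpha_{n,t}$ is an \emph{upper} bound for the coupling cost by Jensen, so the mismatch you worry about never has to be reversed). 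This yields only that $(\nu_n)_n$ is Cauchy and that any fixed point \emph{in the contractive class} equals its limit---relative uniqueness, not Banach's theorem. Your conditionally independent gluing of $\mu^1,\mu^2$ along a common $\Seq{Z}$ cannot close for the reason you yourself state. Your guess about the $2^{p-1}$ is also off: it is not needed to repair the contraction (the contraction-type estimate carries the constant $\kappa\abs{\Seq{w}}$ and requires only $\kappa\abs{\Seq{w}}<1$, cf.\ \cref{Banach_fp_compact_dep_input}); it enters only in \cref{lemma_fp_w_int}, in the recursion $\alpha_{n,t}\leq 2^{p-1}\kappa\alpha_{n-1,t-1}+2^{p-1}\alpha'_t$ that keeps the iterates and the limit inside $P_p^{\Seq{w}}(\XS\times\US)$.

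Second, the closedness claims are not available. The paper states explicitly that $P^{V\text{-}\mathrm{causal}}(\XS \times \US)$ is in general \emph{not} closed in the Wasserstein topology: a $V$-causal measure is defined by the existence of hidden variables $(\Seq{X},\Seq{Z})$ carrying the independence structure, and those independence statements concern $\Seq{Z}$, not the observed $\Seq{U}=V(\Seq{Z})$, so they cannot be tested by bounded continuous functions of $(\Seq{X},\Seq{U})$ alone; a weak limit of laws of $(\Seq{X}^n,V(\Seq{Z}^n))$ need not again be of that form. The paper's repair is to descend to the hidden level: the conjugate map $\Gc(\Seq{x},\Seq{z})=(F(\Seq{x},V(\Seq{z})),\Seq{z})$, weak closedness of $P^{\mathrm{causal}}(\XS\times\ZS)$ (\cref{lemma_causal_closed}, where your test-function argument does apply because the independence is intrinsic to the coordinates), Cauchyness of $\Gc^n_*(\delta_{\Seq{x}^0}\times\Theta)$, and push-forward by $\rho(\Seq{x},\Seq{z})=(\Seq{x},V(\Seq{z}))$. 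Likewise, $W_p$-closedness of $P_p^{\Seq{w}}(\XS\times\US)$ does not follow from the moment-convergence characterization of $W_p$-limits, which controls $\ExpP{}{d_{\XS}(\Seq{X},\Seq{x}^0)^p}$ rather than $\sum_{t}\seq{w}{t}\ExpP{}{d_{\Xc}(\seq{X}{t},x_*)^p}$; the paper instead proves a uniform bound on the latter functional along the iterates and passes to the limit by monotone convergence. Without these two repairs the proposed Banach-fixed-point argument on a ``closed invariant set'' does not go through.
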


This theorem establishes a very general sufficient condition for the stochastic echo state property of the state-space system that goes beyond the familiar deterministic one.
The continuous dependence in \cref{Banach_fp_dep_input} of the unique solution on the input process constitutes a stochastic analog of the fading memory property.
We point out that the familiar condition $\kappa < 1$ arises for deterministic systems on a bounded state space;
the stronger condition $\kappa \abs{\Seq{w}} < 1$ appears even in the deterministic framework when the state space is unbounded \cite{RC9}.
We made a choice in terms of presentation in that we first fixed an arbitrary weighting sequence $\Seq{w}$ that determines the state sequence space $\XS$ and then introduced other definitions and results.
One could have also fixed a contraction rate $\kappa \in (0,2^{1-p})$ first and then taken a weighting sequence $\Seq{w}$ that satisfies $\abs{\Seq{w}} < 2^{1-p} \kappa^{-1}$.
That way \cref{Banach_fp_dep_input} holds for any $\kappa \in (0,2^{1-p})$ as long as \cref{assumption_cont} and the condition in \eqref{def_bounded_input} hold for the specific weighting sequence $\Seq{w}$ that now depends on $\kappa$.
We point out that \cref{rem_on_cont_assumption} provides sufficient conditions for \cref{assumption_cont} independent of the weighting sequence.
The proof of \cref{Banach_fp_dep_input} will be carried out in \cref{section_Banach,section_proof}.
We now discuss this stochastic echo state property and showcase some examples.
Proofs for the next two remarks can be found in \cref{app_sec_pf_remarks} (see \cref{app_lem_stationarity,app_lem_fp_consistent}).

\begin{remark}
\label{rem_stationary}
	Suppose $V$ is time-invariant.
If $\Xi$ in \cref{Banach_fp_dep_input} is stationary, then so is the fixed point $\mu^{\Xi}$.
\end{remark}

\begin{remark}
\label{rem_consistent_fp}
	The motivation to study fixed points of $\Fc_*$ was drawn from $(U_f \times \mathrm{id}_{\US})_*\Xi$ if the associated filter $U_f$ exists.
We would hope that \cref{Banach_fp_dep_input} recovers these fixed points.
Indeed, as long as the associated filter is measurable\footnote{
Sufficient conditions for this measurability will be discussed in \cref{app_sec_pf_remarks} (\cref{app_lem_measurability}).
}
as a map $U_f \colon \US \cap U_f^{-1}(\XS) \rightarrow \XS$, we have $\mu^{\Xi} = (U_f \times \mathrm{id}_{\US})_*\Xi$.
Thus, the stochastic result is consistent with the deterministic case.
\end{remark}

\begin{remark}
	The main result in \cite{ManjunathOrtega2023} looks similar to \cref{Banach_fp_dep_input}.
The main difference is that \cite[Theorem 4.15]{ManjunathOrtega2023} concerns a fixed point $S(\Xi)$ of the map $G_*(\cdot \times \Xi) \colon P_1(\XS) \rightarrow P_1(\XS)$, where $G(\Seq{x},\Seq{u}) = (f(\seq{x}{t},\seq{u}{t}))_{t \leq -1}$.
If the fixed point $\mu^{\Xi}$ of $\Fc_*$ in \cref{Banach_fp_dep_input} happens to be a product measure, then $S(\Xi) = \mu^{\Xi}$;
otherwise, there is no clear relation between $S(\Xi)$ and $\mu^{\Xi}$.
Furthermore, our results show that item (ii) in \cite[Theorem 4.15]{ManjunathOrtega2023} is false in general.
Since $S(\Xi)$ does not consider the dependence structure between inputs and states, the dynamic interpretation of $S(\Xi)$ is unclear.
\end{remark}

Note that the Wasserstein metric is a tool in the proof of \cref{Banach_fp_dep_input}.
The existence and uniqueness thesis in that theorem is not a metric statement.
The metric only enters in the continuity of the dependence of the fixed point on the input.
On the other hand, the fixed point in \cref{Banach_fp_dep_input} can be obtained as the limit of $\Fc_*^n(\delta_{\Seq{x}^0} \times \Xi)$.
The convergence is exponentially fast with respect to the Wasserstein distance, which will be evident from the proof.
Put differently, if $\Seq{U} \sim \Xi$ and if we set $\Seq{Y}^1 = \Fc(\Seq{x}^0,\Seq{U})$ and $\Seq{Y}^{n+1} = \Fc(\Seq{Y}^n)$, then $(\Seq{Y}^n)_n$ is a Markov chain with stationary distribution $\mu^{\Xi}$.
The speed of convergence of this Markov chain to its stationary distribution is exponential with respect to the Wasserstein distance.
This is useful for numerical simulations since $\Fc$ is known and samples from $\Seq{U}$ are the inputs that are given to us.

\begin{example}
	We revisit the GARCH model from \cref{ex_GARCH}.
In the literature, a unique stochastic solution is shown to exist by proving the almost sure convergence of the series
\begin{equation*}
	h(\Seq{U})_t
	= \omega + \omega \sum_{k=1}^{\infty} \prod_{j=0}^{k-1} (\alpha \seq{U}{t-j}^2 + \beta)
\end{equation*}
for fixed $t \leq -1$ and then proving that $\sigma_t^2 = h(\Seq{U})_t$ and $r_t = \sqrt{h(\Seq{U})_t} \seq{U}{t+1}$ is indeed the unique solution \cite{FrancqZakoian2019}.
Close examination of the proof of \cref{Banach_fp_dep_input} for this example reveals that the unique stochastic solution therein is found as the Wasserstein limit of the law of the partial sums of that series.
Notably, the partial sums are treated as stochastic processes and not for a fixed $t$.
More precisely, if $\Xi$ denotes the law of $(\seq{U}{t})_{t \leq -1}$ and $h^n(\Seq{U})_t$ denotes the partial sum of $h(\Seq{U})_t$, then $\Fc_*^n(\delta_{\Seq{x}^0} \times \Xi)$ is the law of $(h^{n-1}(\Seq{U}),\Seq{U})$, and the Wasserstein limit of $\Fc_*^n(\delta_{\Seq{x}^0} \times \Xi)$ is the fixed point of $\Fc_*$.
Furthermore, our result establishes continuous dependence of the law of $(\sigma_t^2)_{t \leq -1}$ on $\Xi$ with respect to Wasserstein distances.
\end{example}

\begin{example}
	Let us also revisit the state-affine systems from \cref{ex_VAR}.
If the input space $\Uc$ is compact, then the deterministic echo state property is equivalent to the existence of a vector norm $\norm{\cdot}$ and some $\kappa \in (0,1)$ such that $\norm{A(u)}_{\mathrm{op}} \leq \kappa$ for all $u \in \Uc$ \cite{QRC1}.\footnote{
Although \cite{QRC1} does not state that the matrix norm is an induced operator norm, this is indeed the case as can be seen from the proof therein and \cite[Theorem 2.19]{Hartfiel2001}.}
In contrast, the conditions \eqref{ex_VAR_contr} and \eqref{ex_VAR_bounded} (in which the vector norm is arbitrary) are sufficient for the stochastic echo state property by \cref{Banach_fp_dep_input}.
The condition \eqref{ex_VAR_contr} is clearly weaker than the pointwise bound $\norm{A(u)}_{\mathrm{op}} \leq \kappa$.
This exhibits an important class of examples where the stochastic echo state property holds while the deterministic echo state property provably does not hold.
We emphasize that state-affine systems naturally appear in quantum reservoir computing as this is the form that quantum channels determined by completely positive and trace-preserving maps take once a Gell-Mann basis has been fixed (see \cite{QRC1, QRC2} for details).
Even though it has not yet been investigated in depth, considering stochastic inputs in this setup is particularly natural.
\end{example}

\begin{example}
\label{ex_ESN_continued}
	Next, we revisit the echo state networks from \cref{ex_ESN}.
Since $f$ is bounded, any input measure is a bounded input measure, and the the condition exhibited in \cref{ex_ESN} is sufficient to obtain the stochastic echo state property from \cref{Banach_fp_dep_input}.
In contrast, in the deterministic case, the weakest known condition that ensures the echo state property and is applicable in reasonable generality is $\inf_{D \in \Dc} \norm{DAD^{-1}}_{2,\mathrm{op}} < 1$, where $\Dc$ is the set of all non-singular diagonal $n \times n$-matrices \cite{BuehnerYoung2006}.
Furthermore, one can show in dimension $n=2$ that there are matrices $A$ for which $\inf_{D \in \Dc} \norm{DAD^{-1}}_{2,\mathrm{op}}$ is arbitrarily close to 1 and for which the deterministic echo state property does not hold \cite{YildizJaegerKiebel2012}; see also \cref{app_sec_ESN_det}.
Thus, the condition in \cref{ex_ESN} is strictly weaker than what is required in the deterministic case.
\end{example}

Since $C$ and $\kappa$ are arbitrary in \cref{Banach_fp_dep_input}, we obtain a fixed point $\mu^{\Xi}$ for any $\Xi \in \bigcup_{C,\kappa} \Mc_p^{C,\kappa}(\US)$, where the union is taken over all $(C,\kappa) \in (0,\infty) \times (0, 2^{1-p}\abs{\Seq{w}}^{-1})$.
But it does not follow that $\Xi \mapsto \mu^{\Xi}$ is continuous on that whole set.
As we see in the next proposition, sufficient conditions for this are the compactness of the state and input spaces.
This result shall be proved later on at the end of \cref{section_proof}.

\begin{proposition}
\label{Banach_fp_compact_dep_input}
	Suppose $\Xc$ and $\Zc$ are compact.
For any $\Xi \in \bigcup_{C,\kappa} \Mc_p^{C,\kappa}(\US)$, where the union is taken over $(C,\kappa) \in (0,\infty) \times (0, \abs{\Seq{w}}^{-1})$, the map $\Fc_*$ has a unique fixed point $\mu^{\Xi}$ in $P^{\Xi} \cap P^{V\text{-}\mathrm{causal}}(\XS \times \US)$ that depends continuously on $\Xi$ in the weak topology.
\end{proposition}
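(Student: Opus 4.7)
The plan is to run a Banach-style iteration directly, exploiting that the common-$\Seq{Z}$ coupling available under $V$-causality yields a $W_p$-contraction of rate $(\kappa\abs{\Seq{w}})^{1/p}$, thereby relaxing the condition in \cref{Banach_fp_dep_input} from $\kappa<2^{1-p}\abs{\Seq{w}}^{-1}$ to $\kappa<\abs{\Seq{w}}^{-1}$, and then to derive weak continuity by a standard compactness-plus-uniqueness argument. First I would unpack the setup: compactness of $\Xc$ makes the weighted $\ell^1$-topology on $\XS$ agree with the subspace product topology (per \cref{Sequence spaces}), so $\XS\cong\Xc^{\Z_-}$ is compact by Tychonoff; likewise $\US=V(\ZS)$ is the continuous image of the compact $\ZS$, hence compact. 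Therefore $P(\XS\times\US)$ is weakly compact and metrizable, the weak topology coincides with every $W_p$-topology, and the integrability restriction $P_p^{\Seq{w}}$ from \cref{Banach_fp_dep_input} is automatic.

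For existence, I would iterate $\mu_n = \Fc_*^n(\delta_{\Seq{x}^0}\times\Xi)$. Writing $\Xi=V_*\Theta$ for $\Theta$ with independent $\kappa$-contractive marginals as supplied by $\Xi\in\Mc_p^{C,\kappa}(\US)$, set $\Seq{Z}\sim\Theta$, $\Seq{X}^{(0)}\equiv\Seq{x}^0$, and $\Seq{X}^{(n+1)}=F(\Seq{X}^{(n)},V(\Seq{Z}))$. Causality of $V$ makes $\seq{X}{t}^{(n)}$ a deterministic function of $(\seq{Z}{s})_{s\leq t}$, so each $\mu_n\in P^{\Xi}\cap P^{V\text{-}\mathrm{causal}}(\XS\times\US)$. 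Conditioning on $(\seq{Z}{r})_{r\leq s-1}$, invoking \eqref{def_indep_contr_input}, shifting the index $s\mapsto s-1$, and using $\seq{w}{s}\leq\abs{\Seq{w}}\seq{w}{s-1}$ yields
\begin{equation*}
    \sum_{s\leq -1} \seq{w}{s}\, \ExpP{}{d_\Xc(\seq{X}{s}^{(n+1)},\seq{X}{s}^{(n)})^p}
    \leq \kappa\abs{\Seq{w}} \sum_{s\leq -1} \seq{w}{s}\, \ExpP{}{d_\Xc(\seq{X}{s}^{(n)},\seq{X}{s}^{(n-1)})^p}.
\end{equation*}
Combined with Jensen's inequality $\big(\sum_s\seq{w}{s}\, d_\Xc\big)^p\leq\sum_s\seq{w}{s}\, d_\Xc^p$, the common-$\Seq{Z}$ coupling then yields $W_p(\mu_n,\mu_{n+1})\leq(\kappa\abs{\Seq{w}})^{n/p}\,\mathrm{diam}(\Xc)$, which is summable since $\kappa\abs{\Seq{w}}<1$. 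Hence $(\mu_n)$ is $W_p$-Cauchy with limit $\mu^\Xi$, a fixed point of $\Fc_*$ (by continuity) with second marginal $\Xi$. The $V$-causal property passes to the limit by lifting to $\tilde\mu_n=\mathrm{Law}(\Seq{X}^{(n)},\Seq{Z})$ on the compact space $\XS\times\ZS$, extracting a subsequential weak limit, and noting that the independence in \cref{def_causal_measure} can be tested against bounded continuous functions and so survives weak convergence.

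Uniqueness and weak continuity complete the proof. For uniqueness, given any other fixed point $\mu'\in P^{\Xi}\cap P^{V\text{-}\mathrm{causal}}(\XS\times\US)$, I would disintegrate a lift $\tilde\mu'$ so that its $\ZS$-marginal is the $\Theta$ supplied by $\Xi\in\Mc_p^{C,\kappa}(\US)$, couple $\Seq{X}'$ with each $\Seq{X}^{(n)}$ through the common $\Seq{Z}$, and apply the same contraction estimate to show that $\sum_s\seq{w}{s}\,\ExpP{}{d_\Xc(\seq{X}{s}^{(n)},\seq{X}{s}')^p}$ decays geometrically, forcing $\mu'=\mu^\Xi$. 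For weak continuity, if $\Xi_k\to\Xi$ weakly with $\Xi_k,\Xi\in\bigcup_{C,\kappa}\Mc_p^{C,\kappa}(\US)$, then $(\mu^{\Xi_k})$ lies in the compact space $P(\XS\times\US)$; every weak cluster point is a fixed point of $\Fc_*$ with second marginal $\Xi$ that is $V$-causal (both sets are weakly closed), hence equals $\mu^\Xi$ by uniqueness, so the whole sequence converges. The main obstacle is the uniqueness step: since \cref{def_causal_measure} asserts only the existence of \emph{some} underlying independent process, one must align the underlying law of a generic $V$-causal fixed point with the specific $\Theta$ from the assumption before the contractivity \eqref{def_indep_contr_input} is applicable, a matching that requires a careful regular conditional distribution argument in the Polish setting.
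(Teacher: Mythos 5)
Your argument is sound in outline and reaches the right conclusion, but it takes a more laborious route than the paper. The paper's proof of this proposition is essentially a two-line reduction plus the compactness argument: it observes that the only place the factor $2^{1-p}$ enters the hypothesis $\kappa < 2^{1-p}\abs{\Seq{w}}^{-1}$ of \cref{Banach_fp_dep_input} is \cref{lemma_fp_w_int}, which guarantees membership in $P_p^{\Seq{w}}(\XS \times \US)$ and is vacuous when $\Xc$ is bounded; hence the main theorem applies verbatim with the relaxed condition $\kappa < \abs{\Seq{w}}^{-1}$, giving existence and uniqueness at once. You instead re-derive the contraction estimate $W_p(\mu_n,\mu_{n+1}) \leq (\kappa\abs{\Seq{w}})^{n/p}\,\mathrm{diam}(\Xc)$ from scratch via the common-$\Seq{Z}$ coupling; this is correct (it is the content of \cref{lemma_contractive_joint} specialized to $\mu = \Fc_*(\delta_{\Seq{x}^0}\times\Xi)$, for which the alignment with $\Theta$ is trivial), but it duplicates work already done for \cref{Banach_fixed_point}. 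Your continuity argument — cluster points of $(\mu^{\Xi_k})_k$ in the weakly compact $P(\XS\times\US)$ are $V$-causal fixed points with second marginal $\Xi$, hence equal $\mu^{\Xi}$ by uniqueness — is exactly the paper's, down to the lift to $P^{\mathrm{causal}}(\XS\times\ZS)$ and the closedness result of \cref{lemma_causal_closed}. The one place where your sketch stops short is the uniqueness step, and you correctly identify the obstacle: a generic $V$-causal fixed point is represented through \emph{some} independent process $\Seq{Z}'$ whose law need not be the contractive $\Theta$ (since $V$ need not be injective). The paper resolves this inside \cref{lemma_contractive_joint} by the transfer theorem for Polish spaces (Kallenberg, Theorem 8.17), producing a new $\Seq{X}$ with $(\Seq{X},V(\Seq{Z}))\sim\mu$ for $\Seq{Z}\sim\Theta$, after which the conditional contractivity \eqref{def_indep_contr_input} applies; your proposed ``regular conditional distribution argument'' is the right tool, but to make your uniqueness proof complete you would need to carry it out and, in particular, verify that the independence property (iii) of \cref{def_causal_measure} survives the re-coupling, which is what licenses the tower-property step.
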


\section{Proof of the stochastic echo state theorem}
\label{Proof of the stochastic echo state theorem}

This section is devoted to proving \cref{Banach_fp_dep_input}.
The strategy that we shall follow consists of establishing first a more general result (\cref{Banach_fixed_point}) in which we strip the uniqueness of fixed points problem from the context of state-space systems and present it in an abstract, more general framework.
We do that first in the following \cref{section_Banach} and then in \cref{section_proof} we shall proceed to prove \cref{Banach_fp_dep_input} by introducing several lemmas that will embed the assumptions of \cref{Banach_fp_dep_input} into the abstract framework of \cref{section_Banach}.

The general theme of the proof is to establish convergence of the sequence of measures $\Fc_*^n(\delta_{\Seq{x}^0} \times \Xi)$ for a given input measure $\Xi$ and to show that the limit point is the unique fixed point of $\Fc_*$.
We will find ourselves in a scenario that looks akin to an application of the Banach fixed point theorem.
The conditions of the Banach fixed point theorem will not be satisfied exactly, but we will be able to salvage the argument used in the proof of the Banach fixed point theorem.
To execute this proof strategy, we will need to translate the stochastic contractivity \eqref{def_indep_contr_input} to a contraction on Wasserstein space (\cref{lemma_contractive_joint});
and we will need to ensure that the limit point of $\Fc_*^n(\delta_{\Seq{x}^0} \times \Xi)$ will remain both inside the set $P_p^{\Seq{w}}(\XS \times \US)$ (\cref{lemma_fp_w_int}) and inside the set of $V$-causal measures (\cref{lemma_causal_closed}).
The latter will require us to go one level deeper and work with the hidden inputs before returning to the level of the observed inputs.

Before we begin the abstract framework, let us introduce a few ingredients that will be used several times throughout.
The first is the following inequality, which is immediate from Jensen's inequality and convexity of the power function;
\begin{equation}
\label{eq_triangle}
	(a+b)^p
	\leq 2^{p-1}(a^p+b^p),
	\qquad a,b \geq 0,\, p \geq 1.
\end{equation}
Given an $n \in \N$ and a sequence $(\alpha_t)_{t \leq -1} \subseteq [0,\infty)$, the second ingredient is the bound
\begin{equation}
\label{eq_w_index}
	\sum_{t \leq -1} \seq{w}{t} \alpha_{t-n}
	\leq \left( \sup_{t \leq -1} \frac{\seq{w}{t}}{\seq{w}{t-n}} \right) \sum_{t \leq -1} \seq{w}{t-n} \alpha_{t-n}
	\leq \abs{\Seq{w}}^n \sum_{t \leq -1} \seq{w}{t} \alpha_t.
\end{equation}
The third ingredient is the choice of metric \eqref{eq_product_metric} on the product of two metric spaces.
Lastly, we recall the formula for the change of a measure $\nu \in P(\Yc_1)$ under a measurable map $\phi \colon \Yc_1 \rightarrow \Yc_2$;
\begin{equation*}
	\ExpP{Y_2 \sim \phi_*\nu}{g(Y_2)}
	= \ExpP{Y_1 \sim \nu}{g \circ \phi(Y_1)}.
\end{equation*}
We will mention each use of \eqref{eq_product_metric}, \eqref{eq_triangle}, and \eqref{eq_w_index} but we will not explicitly mention each change of measure to avoid cluttering the proofs.
There is a glossary of symbols at the end of the paper, which the reader may find helpful.

\subsection{Relatively unique fixed points in an abstract framework}
\label{section_Banach}

In this section, we derive a general fixed point result on Wasserstein spaces (\cref{Banach_fixed_point} below) that will be the main tool for proving \cref{Banach_fp_dep_input} later on.
As we already mentioned, the formulation of this result goes beyond state-space systems and naturally takes place in a more general framework.
To avoid an overload of notation, let $(\Vc,d_{\Vc})$ and $(\Wc,d_{\Wc})$ be two metrized Polish spaces.
Later on, $\Vc$ plays the role of $\XS$ and $\Wc$ the role of $\US$.
Fix $v_0 \in \Vc$ and a continuous map $\Hc \colon \Vc \times \Wc \rightarrow \Vc \times \Wc$ satisfying $\pi_{\Wc} \circ \Hc = \pi_{\Wc}$.
The map $\Hc$ takes the role of $\Fc$.
Introduce the sets $P_p^{\mathrm{inv}}(\Hc)$ and $P_p^{\mathrm{con}}(\Hc)$ as the subset of $P_p(\Vc \times \Wc)$ forward-invariant under $\Hc_*^n$ for all $n \in \N$, that is,
\begin{equation*}
	P_p^{\mathrm{inv}}(\Hc)
	= \{ \mu \in P_p(\Vc \times \Wc) \colon \Hc^n_* \mu \in P_p(\Vc \times \Wc) \text{ for all } n \in \N \},
\end{equation*}
\nomenclature[97]{$P_p^{\mathrm{inv}}(\Hc)$}{Set of measures with integrability preserved by $\Hc_*$ (abstract)}%
and the set of measures for which $\Hc$ satisfies a certain contraction property, namely
\begin{equation*}
\begin{split}
	P_p^{\mathrm{con}}(\Hc)
	&= \bigcup_{\substack{C \in (0,\infty) \\ \kappa \in (0,1)}} P_p^{\mathrm{con}}(\Hc,C,\kappa),
	\\
	P_p^{\mathrm{con}}(\Hc,C,\kappa)
	&= \left\{ \mu \in P_p(\Vc \times \Wc) \colon \forall\, n \in \N \colon \ExpP{(v,w) \sim \mu}{d_{\Vc \times \Wc} ( \Hc^n(v,w) , \Hc^n(v_0,w) )^p} \leq \kappa^n C \right\}.
\end{split}
\end{equation*}
\nomenclature[98]{$P_p^{\mathrm{con}}(\Hc)$}{Set of measures with contraction property (abstract)}%
Given $C \in (0,\infty)$ and $\kappa \in (0,1)$, we denote by $\Mc_p(\Hc,C,\kappa)$ the set of all $\Theta \in P_p(\Wc)$ with $\delta_{v_0} \times \Theta \in P_p^{\mathrm{inv}}(\Hc)$ and $\Hc_*(\delta_{v_0} \times \Theta) \in P_p^{\mathrm{con}}(\Hc,C,\kappa)$.
\nomenclature[96]{$\Mc_p(\Hc,C,\kappa)$}{Set of admissible input measures (abstract)}%

\begin{theorem}
\label{Banach_fixed_point}
	Let $C \in (0,\infty)$ and $\kappa \in (0,1)$.
For any $\Theta \in \Mc_p(\Hc,C,\kappa)$, the map $\Hc_*$ has a fixed point in $P_p^{\Theta}(\Vc \times \Wc)$ that is unique relative to $P_p^{\Theta}(\Vc \times \Wc) \cap P_p^{\mathrm{con}}(\Hc)$.
More precisely, the sequence $(\Hc^n_*(\delta_{v_0} \times \Theta))_n$ converges with respect to the Wasserstein distance to a limit point $\mu_{\infty} \in P_p^{\Theta}(\Vc \times \Wc)$ with $\Hc_*\mu_{\infty} = \mu_{\infty}$, and for all $\mu \in P_p^{\Theta}(\Vc \times \Wc) \cap P_p^{\mathrm{con}}(\Hc)$ with $\Hc_*\mu = \mu$ we have $\mu = \mu_{\infty}$.\footnote{
In other words, if $\mu_{\infty	} \in P_p^{\Theta}(\Vc \times \Wc) \cap P_p^{\mathrm{con}}(\Hc)$, then $\mu_{\infty}$ is the unique fixed point of $\Hc_*$ in that set, and if $\mu_{\infty	} \notin P_p^{\Theta}(\Vc \times \Wc) \cap P_p^{\mathrm{con}}(\Hc)$, then $\Hc_*$ has no fixed point in that set.
}
Furthermore, the map $\Mc_p(\Hc,C,\kappa) \rightarrow P_p(\Vc \times\Wc)$ given by $\Theta \mapsto \lim_{n \rightarrow \infty} \Hc^n_*(\delta_{v_0} \times \Theta)$ is continuous with respect to Wasserstein distances.
\end{theorem}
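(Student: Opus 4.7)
My plan is to emulate the Banach fixed-point argument on $(P_p(\Vc \times \Wc), W_p)$, using the orbit-based contraction encoded by $P_p^{\mathrm{con}}(\Hc, C, \kappa)$ as a substitute for a pointwise Lipschitz contraction of $\Hc_*$, and to extract continuity from the uniformity of the contraction bounds across $\Mc_p(\Hc, C, \kappa)$.

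I would first show that $\mu_n := \Hc_*^n(\delta_{v_0} \times \Theta)$ is Cauchy in $W_p$. Applying the defining contraction inequality at $\Hc_*(\delta_{v_0} \times \Theta) \in P_p^{\mathrm{con}}(\Hc, C, \kappa)$ and using $\pi_\Wc \circ \Hc = \pi_\Wc$ to rewrite $(v, w) \sim \Hc_*(\delta_{v_0} \times \Theta)$ as $(\Hc(v_0, W), W)$ with $W \sim \Theta$ gives \[ \ExpP{W \sim \Theta}{d_{\Vc \times \Wc}(\Hc^{n+1}(v_0, W), \Hc^n(v_0, W))^p} \leq \kappa^n C, \] so the joint law of $(\Hc^{n+1}(v_0, W), \Hc^n(v_0, W))$ couples $\mu_{n+1}$ and $\mu_n$, yielding $W_p(\mu_{n+1}, \mu_n) \leq (\kappa^n C)^{1/p}$ and hence a Cauchy sequence. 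Completeness of $W_p$ produces $\mu_\infty \in P_p(\Vc \times \Wc)$; non-expansiveness of $\pi_\Wc$ preserves the second marginal $\Theta$ in the limit, and weak continuity of $\Hc_*$ (from continuity of $\Hc$) together with $\mu_{n+1} = \Hc_* \mu_n \to \mu_\infty$ forces $\Hc_* \mu_\infty = \mu_\infty$. For uniqueness relative to $P_p^\Theta \cap P_p^{\mathrm{con}}(\Hc)$, any competing fixed point $\mu$ with constants $(C', \kappa')$ has second marginal $\Theta$, so $(v, w) \mapsto (\Hc^n(v, w), \Hc^n(v_0, w))$ pushes $\mu$ to a coupling of $\Hc_*^n \mu = \mu$ with $\mu_n$, and the contraction bound at $\mu$ gives $W_p(\mu, \mu_n)^p \leq (\kappa')^n C' \to 0$, forcing $\mu = \mu_\infty$.

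The continuity of $\Theta \mapsto \mu_\infty^\Theta$ I would handle by interposing finite-$N$ iterates: \[ W_p(\mu_\infty^{\Theta_k}, \mu_\infty^\Theta) \leq W_p(\mu_\infty^{\Theta_k}, \mu_N^{\Theta_k}) + W_p(\mu_N^{\Theta_k}, \mu_N^\Theta) + W_p(\mu_N^\Theta, \mu_\infty^\Theta). \] The outer terms are bounded uniformly in $k$ by $\sum_{n \geq N}(\kappa^n C)^{1/p}$ because the same $(C, \kappa)$ serves every $\Theta' \in \Mc_p(\Hc, C, \kappa)$, so they vanish as $N \to \infty$. The main obstacle will be the middle term, since $\mu_N^\Theta = (\phi_N)_* \Theta$ for $\phi_N := \Hc^N(v_0, \cdot)$ only continuous and without any Lipschitz or growth bound, so $W_p$-continuity of $(\phi_N)_*$ is not automatic. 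I would attack it by taking optimal couplings $\eta_k$ of $(\Theta_k, \Theta)$, bounding $W_p(\mu_N^{\Theta_k}, \mu_N^\Theta)^p \leq \int d_{\Vc \times \Wc}(\phi_N(w), \phi_N(w'))^p d\eta_k(w, w')$, observing that $\eta_k$ concentrates on the diagonal of $\Wc \times \Wc$ (where continuity of $\phi_N$ makes the integrand vanish), and closing via a uniform integrability argument that leverages the pointwise telescoping $d_\Vc(\pi_\Vc \phi_N(w), v_0) \leq \sum_{n=0}^{N-1} d_{\Vc \times \Wc}(\phi_{n+1}(w), \phi_n(w))$, the uniform-in-$k$ moment bound $\ExpP{W \sim \Theta_k}{d_{\Vc \times \Wc}(\phi_{n+1}(W), \phi_n(W))^p} \leq \kappa^n C$ built into $\Mc_p(\Hc, C, \kappa)$, and the uniform integrability of $d_\Wc(\cdot, w_0)^p$ provided by $W_p$-convergence on $\Wc$. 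Once the middle term is dispatched, letting $k \to \infty$ for fixed $N$ and then $N \to \infty$ closes the continuity.
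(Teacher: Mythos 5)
Your proposal is correct and takes essentially the same route as the paper: the identical coupling $(v,w)\mapsto(\Hc^{n}(v,w),\Hc^{n}(v_0,w))$ drives both the Cauchy estimate $W_p(\mu_{n+1},\mu_n)\leq(\kappa^nC)^{1/p}$ and the relative uniqueness, and the continuity argument uses the same three-term split with the uniform tail bound. The only place you go beyond the paper is the middle term $W_p(\Hc^N_*(\delta_{v_0}\times\Theta_k),\Hc^N_*(\delta_{v_0}\times\Theta))$, which the paper simply asserts tends to zero; your diagnosis of this as the delicate step is apt, but be aware that the uniform $p$-th-moment bounds you extract from $\Mc_p(\Hc,C,\kappa)$ only give uniform $L^1$ bounds on $d_{\Vc\times\Wc}(\Hc^N(v_0,\cdot),y_0)^p$, which is weaker than the uniform integrability your closing step invokes.
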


\begin{proof}
	Fix $\Theta \in \Mc_p(\Hc,C,\kappa)$, and denote by $\iota \colon \Vc \times \Wc \rightarrow \Vc \times \Wc$ the map $\iota(v,w) = (v_0,w)$.
We claim that $\Hc^n_*\mu \in P_p(\Vc \times \Wc)$ for any $\mu \in P_p^{\Theta}(\Vc \times \Wc) \cap P_p^{\mathrm{con}}(\Hc)$ and any $n \in \N$.
To see this, consider a $\mu \in P_p^{\Theta}(\Vc \times \Wc) \cap P_p^{\mathrm{con}}(\Hc)$.
Then, $\mu \in P_p^{\mathrm{con}}(\Hc,C',\kappa')$ for some $C' > 0$ and $\kappa' \in (0,1)$.
Fix any $w_0 \in \Wc$.
Use first the triangle equality, then \eqref{eq_triangle}, and then the defining property of $P_p^{\mathrm{con}}(\Hc,C',\kappa')$ to estimate
\begin{equation}
\label{pf_Banach_fp_1}
\begin{split}
	&\ExpP{(v,w) \sim \Hc^n_*\mu}{d_{\Vc \times \Wc} ( (v,w),(v_0,w_0) )^p}
	\\
	&\leq 2^{p-1} \ExpP{(v,w) \sim \mu}{ d_{\Vc \times \Wc} ( \Hc^n(v,w),\Hc^n(v_0,w) )^p + d_{\Vc \times \Wc} ( \Hc^n(v_0,w),(v_0,w_0) )^p }
	\\
	&\leq 2^{p-1} (\kappa')^n C' + 2^{p-1} \ExpP{(v,w) \sim (\Hc^n \circ \iota)_*\mu}{d_{\Vc \times \Wc} ( (v,w),(v_0,w_0) )^p}.
\end{split}
\end{equation}
Note that $(\Hc^n \circ \iota)_*\mu = \Hc^n_*(\delta_{v_0} \times \Theta)$ since $\mu \in P_p^{\Theta}(\Vc \times \Wc)$.
We have $\Hc^n_*(\delta_{v_0} \times \Theta) \in P_p(\Vc \times \Wc)$ since $\delta_{v_0} \times \Theta \in P_p^{\mathrm{inv}}(\Hc)$ by definition of $\Mc_p(\Hc,C,\kappa)$.
Thus, the last expectation in \eqref{pf_Banach_fp_1} is also finite, which concludes the claim that $\Hc^n_*\mu \in P_p(\Vc \times \Wc)$.
The claim and the fact that $\delta_{v_0} \times \Theta \in P_p^{\mathrm{inv}}(\Hc)$ guarantee that the Wasserstein distance $W_p(\Hc^n_*\mu,\Hc^n_*(\delta_{v_0} \times \Theta))$ is well-defined for all $\mu \in P_p^{\Theta}(\Vc \times \Wc) \cap P_p^{\mathrm{con}}(\Hc)$.
For any such $\mu$,
\begin{equation}
\label{pf_Banach_fp_2}
\begin{split}
	W_p( \Hc_*^n \mu , \Hc^n_*(\delta_{v_0} \times \Theta) )^p
	&= W_p( \Hc_*^n \mu , (\Hc^n \circ \iota)_*\mu )^p
	\\
	&\leq \ExpP{(v_1,w_1,v_2,w_2) \sim ( \Hc^n \times ( \Hc^n \circ \iota ) )_*\mu}{d_{\Vc \times \Wc} ( (v_1,w_1) , (v_2,w_2) )^p}
	\\
	&= \ExpP{(v,w) \sim \mu}{d_{\Vc \times \Wc} ( \Hc^n(v,w) , \Hc^n(v_0,w) )^p}
	\leq (\kappa')^n C',
\end{split}
\end{equation}
where $C'$ and $\kappa'$ are such that $\mu \in P_p^{\mathrm{con}}(\Hc,C',\kappa')$.
The map $\Hc_*$ is, in general, not a contraction, and we can not apply the classical formulation of the Banach fixed point theorem.
However, we can adapt the strategy of the proof of the Banach fixed point theorem, which will yield a fixed point.
It is convenient to abbreviate $\nu_n := \Hc^n_*(\delta_{v_0} \times \Theta)$.
Applying \eqref{pf_Banach_fp_2} with $\mu = \nu_1$, which is an element of $P_p^{\Theta}(\Vc \times \Wc) \cap P_p^{\mathrm{con}}(\Hc,C,\kappa)$ by definition of $\Mc_p(\Hc,C,\kappa) \ni \Theta$, we find for any $n \in \N$ that $W_p( \nu_{n+1} , \nu_n ) \leq q^n Q$ with $q = \kappa^{1/p}$ and $Q = C^{1/p}$.
Subsequently, for any $m,n \in \N$ with $n < m$,
\begin{equation*}
	W_p( \nu_m , \nu_n )
	\leq \sum_{k=0}^{m-n-1} W_p( \nu_{n+k+1} , \nu_{n+k} )
	\leq \sum_{k=0}^{m-n-1} q^{n+k} Q
	\leq \frac{q^n Q}{1-q}.
\end{equation*}
Thus, $(\nu_n)_n$ is a Cauchy sequence with respect to $W_p$ and admits a limit point $\mu_{\infty} \in P_p(\Vc \times \Wc)$.
This limit point belongs to $P_p^{\Theta}(\Vc \times \Wc)$ since this is a closed subset with respect to $W_p$.
Since $\nu_n$ converges also weakly to $\mu_{\infty}$ and $\Hc$ is assumed continuous, it follows that
\begin{equation*}
	\Hc_* \mu_{\infty}
	= \Hc_* \left( \lim_{\substack{n \rightarrow \infty \\ \text{weak}}} \Hc_*^n (\delta_{v_0} \times \Theta) \right) 
	= \lim_{\substack{n \rightarrow \infty \\ \text{weak}}} \Hc_*^{n+1} (\delta_{v_0} \times \Theta)
	= \mu_{\infty}.
\end{equation*}
Hence, $\mu_{\infty} \in P_p^{\Theta}(\Vc \times \Wc)$ is a fixed point of $\Hc_*$.
If $\mu \in P_p^{\Theta}(\Vc \times \Wc) \cap P_p^{\mathrm{con}}(\Hc)$ is any fixed point of $\Hc_*$, then
\begin{equation*}
	W_p( \mu , \mu_{\infty} )^p
	= \lim_{n \rightarrow \infty} W_p( \mu , \nu_n )^p
	= \lim_{n \rightarrow \infty} W_p( \Hc_*^n \mu , \nu_n )^p
	\leq \lim_{n \rightarrow \infty} (\kappa')^n C'
	= 0,
\end{equation*}
by \eqref{pf_Banach_fp_2} and where $C' > 0$ and $\kappa' \in (0,1)$ are such that $\mu \in P_p^{\mathrm{con}}(\Hc,C',\kappa')$.
To prove continuous dependence of the fixed point on the input measure, suppose $(\Theta_k)_k \subseteq \Mc_p(\Hc,C,\kappa)$ converges to $\Theta \in \Mc_p(\Hc,C,\kappa)$ with respect to $W_p$.
Abbreviate $\mu_{k,n} = \Hc^n_*(\delta_{v_0} \times \Theta_k)$ and $\mu_{k,\infty} = \lim_{n \rightarrow \infty} \Hc^n_*(\delta_{v_0} \times \Theta_k)$.
We need to show that $\mu_{k,\infty}$ converges to $\mu_{\infty}$ with respect to $W_p$.
Given $\epsilon > 0$, take $n \in \N$ such that $q^n Q/(1-q) < \epsilon/3$.
Since $(\mu_{k,n})_k$ converges to $\nu_n$ with respect to $W_p$, there exists a $k_0 \in \N$ such that $W_p(\nu_n,\mu_{k,n}) < \epsilon/3$ for all $k \geq k_0$.
Thus, for all $k \geq k_0$,
\begin{equation*}
	W_p( \mu_{\infty} , \mu_{k,\infty} )
	\leq W_p( \mu_{\infty} , \nu_n ) + W_p( \nu_n , \mu_{k,n} ) + W_p( \mu_{k,n} , \mu_{k,\infty} )
	\leq \frac{q^n Q}{1-q} + \frac{\epsilon}{3} + \frac{q^n Q}{1-q}
	< \epsilon.
\end{equation*}
\end{proof}

\subsection{Proof of the main result}
\label{section_proof}

We start by proving several lemmas that serve as intermediate steps.
The goal of these lemmas is to embed the assumptions of \cref{Banach_fp_dep_input} into the abstract framework of \cref{Banach_fixed_point} on the one hand;
and to transform the relative uniqueness of the fixed point in \cref{Banach_fixed_point} into proper uniqueness in the context of \cref{Banach_fp_dep_input} on the other hand.
After the proof of \cref{Banach_fp_dep_input}, we also prove \cref{Banach_fp_compact_dep_input}.
For any $C \in (0,\infty)$, let $P_p^{\Seq{w},C}(\XS \times \US)$ be the set
\begin{equation*}
	P_p^{\Seq{w},C}(\XS \times \US)
	= \left\{ \mu \in P_p(\XS \times \US) \colon \sum_{t \leq -1} \seq{w}{t} \ExpP{\Seq{X} \sim (\pi_{\XS})_*\mu}{d_{\Xc} ( \seq{X}{t} , x_* )^p} \leq C \right\}
\end{equation*}
so that $P_p^{\Seq{w}}(\XS \times \US) = \bigcup_{C \in (0,\infty)} P_p^{\Seq{w},C}(\XS \times \US)$.
The first step is to translate the contractivity \eqref{def_indep_contr_input} of the hidden input marginals into the contractivity in $P_p^{\mathrm{con}}(\Fc)$.

\begin{lemma}
\label{lemma_contractive_joint}
	Suppose $\Xi \in \Mc_p^{C',\kappa}(\US)$ for some $C' \in (0,\infty)$ and $\kappa \in (0,\abs{\Seq{w}}^{-1})$.
Then, for any $C \in (0,\infty)$,
\begin{equation*}
	P^{\Xi} \cap P^{V\text{-}\mathrm{causal}} \cap P_p^{\Seq{w},C}(\XS \times \US)
	\subseteq P_p^{\mathrm{con}}(\Fc,C,\kappa \abs{\Seq{w}}).
\end{equation*}
\end{lemma}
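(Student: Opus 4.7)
The plan is to verify the inequality defining $P_p^{\mathrm{con}}(\Fc,C,\kappa\abs{\Seq{w}})$ for an arbitrary $\mu \in P^{\Xi} \cap P^{V\text{-}\mathrm{causal}} \cap P_p^{\Seq{w},C}(\XS \times \US)$ and an arbitrary $n \in \N$. Two reductions set the stage. First, because $\Fc(\Seq{x},\Seq{u}) = (F(\Seq{x},\Seq{u}),\Seq{u})$ fixes the $\US$-component and the product metric \eqref{eq_product_metric} is additive, the $\US$-part of $d_{\XS \times \US}(\Fc^n(\Seq{x},\Seq{u}),\Fc^n(\Seq{x}^0,\Seq{u}))$ vanishes, leaving only the $\XS$-distance between the two first components. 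Second, unrolling the recursion shows that the $t$-th coordinate of the first component of $\Fc^n(\Seq{x},\Seq{u})$ equals $f(\cdots f(\seq{x}{t-n},\seq{u}{t-n+1})\cdots,\seq{u}{t})$, an $n$-fold composition of $f$ with initial seed $\seq{x}{t-n}$. Since $\sum_{t \leq -1} \seq{w}{t} = 1$, Jensen's inequality for $x \mapsto x^p$ gives $d_{\XS}(\Seq{y},\Seq{y}')^p \leq \sum_{t \leq -1} \seq{w}{t} d_{\Xc}(\seq{y}{t},\seq{y}{t}')^p$, which decouples the target integral into a $\seq{w}{t}$-weighted sum of scalar terms.

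The heart of the argument is to iterate the stochastic contractivity hypothesis \eqref{def_indep_contr_input} $n$ times. I would realize $\mu$ through its $V$-causal representation as the law of $(\Seq{X},V(\Seq{Z}))$ where $\Seq{Z}$ has the law $\Theta$ from the definition of $\Mc_p^{C',\kappa}(\US)$. For fixed $t \leq -1$ and $n \in \N$, build the chain $g_0 := \seq{X}{t-n}$, $g'_0 := x_*$ and, for $k = 1,\dots,n$, $g_k := f(g_{k-1},V(\Seq{Z})_{t-n+k})$, $g'_k := f(g'_{k-1},V(\Seq{Z})_{t-n+k})$, so that $g_n,g'_n$ are the $t$-th entries of the first components of $\Fc^n(\Seq{X},V(\Seq{Z}))$ and $\Fc^n(\Seq{x}^0,V(\Seq{Z}))$. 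Let $\mathcal{H}_s := \sigma((\seq{Z}{r})_{r \leq s})$ and $\mathcal{G}_s := \sigma((\seq{X}{r},\seq{Z}{r})_{r \leq s})$. By $V$-causality applied at level $t-n+k-1 \leq -2$, the sigma-algebra $\mathcal{G}_{t-n+k-1}$ is independent of $\seq{Z}{t-n+k}$, so the conditional distribution of $V(\Seq{Z})_{t-n+k}$ given $\mathcal{G}_{t-n+k-1}$ coincides with that given $\mathcal{H}_{t-n+k-1}$. Since $g_{k-1},g'_{k-1}$ are $\mathcal{G}_{t-n+k-1}$-measurable, applying \eqref{def_indep_contr_input} pointwise with $x_1 = g_{k-1}(\omega)$, $x_2 = g'_{k-1}(\omega)$ yields $\mathbb{E}[d_{\Xc}(g_k,g'_k)^p \mid \mathcal{G}_{t-n+k-1}] \leq \kappa\, d_{\Xc}(g_{k-1},g'_{k-1})^p$. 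Cascading these $n$ bounds via the tower property gives $\mathbb{E}[d_{\Xc}(g_n,g'_n)^p] \leq \kappa^n\, \mathbb{E}[d_{\Xc}(\seq{X}{t-n},x_*)^p]$.

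Assembling these ingredients, the target integral is bounded by $\kappa^n \sum_{t \leq -1} \seq{w}{t}\, \mathbb{E}[d_{\Xc}(\seq{X}{t-n},x_*)^p]$. A single application of \eqref{eq_w_index} to shift the index by $n$ upgrades this to $\kappa^n \abs{\Seq{w}}^n \sum_{t \leq -1} \seq{w}{t}\, \mathbb{E}[d_{\Xc}(\seq{X}{t},x_*)^p]$, and the remaining sum is at most $C$ because $\mu \in P_p^{\Seq{w},C}(\XS \times \US)$. This produces exactly the $(\kappa\abs{\Seq{w}})^n C$ bound required by the definition of $P_p^{\mathrm{con}}(\Fc,C,\kappa\abs{\Seq{w}})$. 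The main obstacle I foresee is the iterated conditional contractivity step: the hypothesis \eqref{def_indep_contr_input} is a $\Theta$-a.s.\ statement quantified over deterministic $x_1,x_2$, and its rigorous invocation with random $\mathcal{G}_{t-n+k-1}$-measurable arguments $g_{k-1},g'_{k-1}$ rests delicately on matching the conditional structure of $\mu$ with that of $\Theta$, and hence on the compatibility between the $V$-causal representation of $\mu$ and the distinguished $\Theta$ supplied by $\Mc_p^{C',\kappa}(\US)$.
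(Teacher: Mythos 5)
Your overall strategy coincides with the paper's: reduce via the product metric and Jensen's inequality (using $\sum_{t \leq -1} \seq{w}{t} = 1$) to coordinate-wise quantities, establish the one-step recursion $\alpha_{n,t} \leq \kappa\, \alpha_{n-1,t-1}$ by conditioning on the joint past $(\seq{X}{s},\seq{Z}{s})_{s \leq t-1}$ and invoking \eqref{def_indep_contr_input}, and then conclude with \eqref{eq_w_index} and the $P_p^{\Seq{w},C}$ bound. All of those steps are carried out correctly and match the paper's computation.

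The one genuine gap is exactly the point you flag at the end, and it is not merely a technicality: you ``realize $\mu$ through its $V$-causal representation as the law of $(\Seq{X},V(\Seq{Z}))$ where $\Seq{Z}$ has the law $\Theta$ from the definition of $\Mc_p^{C',\kappa}(\US)$,'' but \cref{def_causal_measure} only supplies some pair $(\Seq{X}',\Seq{Z}')$ with $\Seq{Z}'$ having independent marginals and $V_*(\mathrm{law}(\Seq{Z}')) = \Xi$. Since $V$ need not be injective, $\mathrm{law}(\Seq{Z}')$ need not equal the distinguished $\Theta$, and the contractivity hypothesis \eqref{def_indep_contr_input} is a $\Theta$-a.s.\ statement about the conditional law of $V(\Seq{Z})_t$ given the \emph{hidden} past $(\seq{Z}{s})_{s\leq t-1}$; it does not transfer to an arbitrary $\Theta'$ with $V_*\Theta' = \Xi$. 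The paper closes this by a coupling/transfer argument (\cite[Theorem 8.17]{Kallenberg2021}): keeping the distinguished $\Seq{Z} \sim \Theta$, one constructs on a possibly enlarged probability space a new $\XS$-valued random variable $\Seq{X}$ such that $(\Seq{X},V(\Seq{Z}))$ has law $\mu$, and then runs your cascading conditional-expectation argument with that pair. So the missing ingredient is this transfer step; once it is inserted (and one checks that the conditional structure needed for your step from $\mathcal{G}_{t-1}$ down to $\mathcal{H}_{t-1}$ holds for the transferred pair), the rest of your argument goes through verbatim.
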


\begin{proof}
	Let $\mu \in P^{\Xi} \cap P^{V\text{-}\mathrm{causal}} \cap P_p^{\Seq{w},C}(\XS \times \US)$.
By the equality $\pi_{\US} \circ \Fc = \pi_{\US}$, the choice of metric \eqref{eq_product_metric} on $\XS \times \US$, and Jensen's inequality,
\begin{equation*}
	\ExpP{(\Seq{X},\Seq{U}) \sim \mu}{ d_{\XS \times \US} ( \Fc^n(\Seq{X},\Seq{U}) , \Fc^n(\Seq{x}^0,\Seq{U}) )^p }
	\leq \sum_{t \leq -1} \seq{w}{t} \ExpP{(\Seq{X},\Seq{U}) \sim \mu}{ d_{\Xc} ( \pi_{\XS} \circ \Fc^n(\Seq{X},\Seq{U})_t , \pi_{\XS} \circ \Fc^n(\Seq{x}^0,\Seq{U})_t )^p }.
\end{equation*}
Abbreviate $\alpha_{n,t} = \ExpP{(\Seq{X},\Seq{U}) \sim \mu}{ d_{\Xc} ( \pi_{\XS} \circ \Fc^n(\Seq{X},\Seq{U})_t , \pi_{\XS} \circ \Fc^n(\Seq{x}^0,\Seq{U})_t )^p }$.
Note that $\sum_{t \leq -1} \seq{w}{t} \alpha_{0,t} \leq C$ since $\mu$ belongs to $P_p^{\Seq{w},C}(\XS \times \US)$.
We claim that $\alpha_{n,t}$ satisfies the recursion $\alpha_{n,t} \leq \kappa \alpha_{n-1,t-1}$.
In particular, $\alpha_{n,t} \leq \kappa^n \alpha_{0,t-n}$.
Once the claim is proved, it follows from \eqref{eq_w_index} that
\begin{equation*}
	\ExpP{(\Seq{X},\Seq{U}) \sim \mu}{ d_{\XS \times \US} ( \Fc^n(\Seq{X},\Seq{U}) , \Fc^n(\Seq{x}^0,\Seq{U}) )^p }
	\leq \sum_{t \leq -1} \seq{w}{t} \kappa^n \alpha_{0,t-n}
	\leq (\kappa \abs{\Seq{w}})^n C.
\end{equation*}
It remains to prove the claim.
On the one hand, $\Xi$ is the law of $V(\Seq{Z})$ for a $\ZS$-valued random variable $\Seq{Z}$ with independent $\kappa$-contractive marginals.
On the other hand, $\mu$ is the law of $(\Seq{X}',V(\Seq{Z}'))$ for some random variables $\Seq{X}'$ and $\Seq{Z}'$ as in \cref{def_causal_measure}.
A priori, $\Seq{Z} \neq \Seq{Z}'$.
But since the law of $V(\Seq{Z}')$ is also $(\pi_{\US})_*\mu = \Xi$ and since we are working on Polish spaces, there is another random variable $\Seq{X}$ on $\XS$ such that the joint law of $(\Seq{X},V(\Seq{Z}))$ is also $\mu$; see \cite[Theorem 8.17]{Kallenberg2021}.
It is convenient to abbreviate $\beta(\Seq{x},\Seq{z}) := \pi_{\XS} \circ \Fc^{n-1}(\Seq{x},V(\Seq{z}))_{t-1}$ so that $\pi_{\XS} \circ \Fc^n(\Seq{x},V(\Seq{z}))_t = f(\beta(\Seq{x},\Seq{z}),V(\Seq{z})_t)$.
In particular,
\begin{equation*}
	\alpha_{n,t}
	= \Exp{ d_{\Xc} ( f(\beta(\Seq{X},\Seq{Z}),V(\Seq{Z})_t) , f(\beta(\Seq{x}^0,\Seq{Z}),V(\Seq{Z})_t) )^p }.
\end{equation*}
We point out that $\beta(\Seq{x},\Seq{z})$ only depends on $\seq{x}{t-n}$ and $(\seq{z}{s})_{s \leq t-1}$ by causality of $V$.
By the tower property of conditional expectation and the independence in \cref{def_causal_measure}, we can therefore write
\begin{equation*}
\begin{split}
	\alpha_{n,t}
	&= \Exp{ \ExpC{}{ d_{\Xc} ( f(\beta(\Seq{X},\Seq{Z}),V(\Seq{Z})_t) , f(\beta(\Seq{x}^0,\Seq{Z}),V(\Seq{Z})_t) )^p }{(\seq{X}{s},\seq{Z}{s})_{s \leq t-1}} }
	\\
	&= \Exp{ \ExpC{}{ d_{\Xc} ( f(x_1,V(\Seq{Z})_t) , f(x_2,V(\Seq{Z})_t) )^p }{(\seq{Z}{s})_{s \leq t-1}}_{x_1 = \beta(\Seq{X},\Seq{Z}) \, x_2 = \beta(\Seq{x}^0,\Seq{Z})} }.
\end{split}
\end{equation*}
Using $\kappa$-contractivity \eqref{def_indep_contr_input} of the marginals of $\Seq{Z}$, we conclude that
\begin{equation*}
\begin{split}
	\alpha_{n,t}
	&\leq \kappa \Exp{ d_{\Xc} ( \beta(\Seq{X},\Seq{Z}) , \beta(\Seq{x}^0,\Seq{Z}) )^p }
	= \kappa \alpha_{n-1,t-1}.
\end{split}
\end{equation*}
This finishes the proof.
\end{proof}

The second step is to ensure that the limit of $\Fc^n_*(\delta_{\Seq{x}^0} \times \Xi)$, which we eventually prove to exist, remains in the set $P_p^{\Seq{w}}(\XS \times \US)$.

\begin{lemma}
\label{lemma_fp_w_int}
	Suppose $\Xi \in \Mc_p^{C,\kappa}(\US)$ for some $C \in (0,\infty)$ and $\kappa \in (0,2^{1-p} \abs{\Seq{w}}^{-1})$.
Then, $\Fc^n_*(\delta_{\Seq{x}^0} \times \Xi) \in P_p^{\Seq{w}}(\XS \times \US)$ for all $n \in \N$.
Furthermore, if the sequence $\Fc^n_*(\delta_{\Seq{x}^0} \times \Xi)$ converges weakly, then its limit point also belongs to $P_p^{\Seq{w}}(\XS \times \US)$.
\end{lemma}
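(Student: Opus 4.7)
My plan is to split the proof into two parts: a recursive moment estimate that gives the first assertion, and a weak lower semicontinuity argument for the second. First I would write $\mu_n := \Fc^n_*(\delta_{\Seq{x}^0} \times \Xi)$ and set $\alpha_{n,t} := \ExpP{(\Seq{X},\Seq{U}) \sim \mu_n}{d_{\Xc}(\seq{X}{t},x_*)^p}$. Fixing $\Theta \in P(\ZS)$ with independent $\kappa$-contractive marginals such that $\Xi = V_*\Theta$, I realise $(\Seq{X}^n,\Seq{U}) \sim \mu_n$ with $\Seq{U} = V(\Seq{Z})$ for $\Seq{Z} \sim \Theta$; by construction, $\seq{X}{t}^n = f(\seq{X}{t-1}^{n-1},\seq{U}{t})$ pathwise with $\Seq{X}^0 = \Seq{x}^0$. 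Inserting $f(x_*,\seq{U}{t})$ via the triangle inequality and then applying \eqref{eq_triangle} gives
\begin{equation*}
\alpha_{n,t} \leq 2^{p-1}\ExpP{}{d_{\Xc}(f(\seq{X}{t-1}^{n-1},\seq{U}{t}), f(x_*,\seq{U}{t}))^p} + 2^{p-1}\beta_t,
\end{equation*}
with $\beta_t := \ExpP{}{d_{\Xc}(f(x_*,\seq{U}{t}),x_*)^p}$. Since $\seq{X}{t-1}^{n-1}$ is a deterministic function of $(\seq{U}{s})_{s \leq t-1}$ and hence, by causality of $V$, of $(\seq{Z}{s})_{s \leq t-1}$, I would condition on this past and invoke the stochastic contractivity \eqref{def_indep_contr_input} exactly as in the proof of \cref{lemma_contractive_joint} to obtain $\alpha_{n,t} \leq 2^{p-1}\kappa\alpha_{n-1,t-1} + 2^{p-1}\beta_t$, with the convention $\alpha_{0,t} = 0$.

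Next I would weight by $\seq{w}{t}$, sum over $t \leq -1$, bound the shifted sum via \eqref{eq_w_index}, and use the $C$-boundedness $\sum_t \seq{w}{t} \beta_t \leq C$ of $\Xi$ to obtain $A_n \leq \lambda A_{n-1} + 2^{p-1} C$, where $A_n := \sum_t \seq{w}{t} \alpha_{n,t}$ and $\lambda := 2^{p-1}\kappa \abs{\Seq{w}}$. The hypothesis $\kappa < 2^{1-p}\abs{\Seq{w}}^{-1}$ forces $\lambda < 1$, and since $A_0 = 0$ a straightforward induction gives $A_n \leq 2^{p-1} C/(1-\lambda)$ uniformly in $n$; this is exactly the defining bound for membership in $P_p^{\Seq{w}}(\XS \times \US)$. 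Membership in $P_p(\XS \times \US)$ then follows from Jensen's inequality $(\sum_t \seq{w}{t} a_t)^p \leq \sum_t \seq{w}{t} a_t^p$ (valid because $\sum_t \seq{w}{t} = 1$) to bound $\ExpP{}{d_{\XS}(\Seq{X},\Seq{x}^0)^p} \leq A_n$, combined with $(\pi_{\US})_*\mu_n = \Xi \in P_p(\US)$ and the product-metric choice \eqref{eq_product_metric}.

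For the weak limit, suppose $\mu_n \to \mu_\infty$ weakly. Since the topology of $\XS$ is finer than the subspace product topology on $\Xc^{\Z_-}$, each projection $\Seq{X} \mapsto \seq{X}{t}$ is continuous, so $(\Seq{X},\Seq{U}) \mapsto d_{\Xc}(\seq{X}{t},x_*)^p$ is a non-negative (hence lower semicontinuous) continuous function on $\XS \times \US$. Portmanteau's theorem then yields $\ExpP{\mu_\infty}{d_{\Xc}(\seq{X}{t},x_*)^p} \leq \liminf_n \alpha_{n,t}$; weighting by $\seq{w}{t}$, summing, and applying Fatou's lemma to the series gives $\sum_t \seq{w}{t}\ExpP{\mu_\infty}{d_{\Xc}(\seq{X}{t},x_*)^p} \leq \liminf_n A_n < \infty$, so $\mu_\infty \in P_p^{\Seq{w}}(\XS \times \US)$. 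An analogous Portmanteau argument applied to the continuous function $d_{\XS}(\cdot,\Seq{x}^0)^p$, together with $(\pi_{\US})_*\mu_\infty = \Xi \in P_p(\US)$ (which follows from $(\pi_{\US})_*\mu_n = \Xi$ for all $n$ by $\pi_{\US} \circ \Fc = \pi_{\US}$ and continuity of $\pi_{\US}$), confirms $\mu_\infty \in P_p(\XS \times \US)$.

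The principal obstacle is the very first step, namely justifying the recursion for $\alpha_{n,t}$ via stochastic contractivity, as this relies on the causality-based measurability observation that $\seq{X}{t-1}^{n-1}$ depends only on $(\seq{Z}{s})_{s \leq t-1}$. This is exactly the measurability argument underlying \cref{lemma_contractive_joint}. Once the recursion is in place, the uniform-in-$n$ bound is a routine telescoping via \eqref{eq_w_index}, and the preservation under weak limits is a standard Portmanteau/Fatou argument.
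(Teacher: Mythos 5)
Your proof is correct and follows essentially the same route as the paper's: the same recursion $\alpha_{n,t} \leq 2^{p-1}\kappa\,\alpha_{n-1,t-1} + 2^{p-1}\beta_t$ obtained by conditioning on the hidden past $(\seq{Z}{s})_{s\leq t-1}$ and invoking the stochastic contractivity, combined with the $C$-boundedness of $\Xi$ and \eqref{eq_w_index}. The only cosmetic differences are that you telescope the weighted sums $A_n$ directly instead of first iterating the recursion in $t$ to get $\alpha_{n,t} \leq 2^{p-1}\sum_j (2^{p-1}\kappa)^j \beta_{t-j}$, and you handle the weak limit via Portmanteau and Fatou where the paper uses truncated bounded continuous test functions and monotone convergence --- both yield the same uniform bound $2^{p-1}C/(1-2^{p-1}\kappa\abs{\Seq{w}})$.
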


\begin{proof}
	Let $\mu_n = \Fc^n_*(\delta_{\Seq{x}^0} \times \Xi)$, and abbreviate $\alpha_{n,t} = \ExpP{\Seq{X} \sim (\pi_{\XS})_*\mu_n}{d_{\Xc} ( \seq{X}{t} , x_* )^p}$.
The desired assertion $\mu_n \in P_p^{\Seq{w}}(\XS \times \US)$ amounts to showing that $\mu_n \in P_p(\XS \times \US)$ and $\sum_{t \leq -1} \seq{w}{t} \alpha_{n,t} < \infty$.
In fact, it suffices to show $\sum_{t \leq -1} \seq{w}{t} \alpha_{n,t} < \infty$ since this together with \eqref{eq_triangle}, the choice of the metric \eqref{eq_product_metric} on $\XS \times \US$, and the fact that $\Xi \in P_p(\US)$ imply that $\mu_n \in P_p(\XS \times \US)$.
Take a $\ZS$-valued random variable $\Seq{Z}$ with independent $\kappa$-contractive marginals and such that $V(\Seq{Z})$ has law $\Xi$.
Denote $\beta(\Seq{x},\Seq{z}) := \pi_{\XS} \circ \Fc^{n-1}(\Seq{x},V(\Seq{z}))_{t-1}$ as in the proof of \cref{lemma_contractive_joint} so that $\alpha_{n,t} = \Exp{d_{\Xc} ( f( \beta(\Seq{x}^0,\Seq{Z}) , V(\Seq{Z})_t ) , x_* )^p}$.
Abbreviate also $\alpha_t' = \ExpP{\Seq{Z} \sim \Theta}{d_{\Xc} ( f(x_*,V(\Seq{Z})_t) , x_* )^p}$ so that, by \eqref{eq_triangle},
\begin{equation*}
	\alpha_{n,t}
	\leq 2^{p-1} \Exp{d_{\Xc} ( f( \beta(\Seq{x}^0,\Seq{Z}) , V(\Seq{Z})_t ) , f(x_*,V(\Seq{Z})_t) )^p} + 2^{p-1} \alpha_t'.
\end{equation*}
By the tower property of conditional expectation and the $\kappa$-contractivity \eqref{def_indep_contr_input} of the marginals of $\Seq{Z}$,
\begin{equation*}
\begin{split}
	&\Exp{d_{\Xc} ( f( \beta(\Seq{x}^0,\Seq{Z}) , V(\Seq{Z})_t ) , f(x_*,V(\Seq{Z})_t) )^p}
	\\
	&= \Exp{ \ExpC{}{ d_{\Xc} ( f(x,V(\Seq{Z})_t) , f(x_*,V(\Seq{Z})_t) )^p }{(\seq{Z}{s})_{s \leq t-1}}_{x = \beta(\Seq{x}^0,\Seq{Z})} }
	\leq \kappa \alpha_{n-1,t-1}.
\end{split}
\end{equation*}
This establishes the recursion $\alpha_{n,t} \leq 2^{p-1} \kappa \alpha_{n-1,t-1} + 2^{p-1} \alpha_t'$.
Since $\alpha_{0,t} = 0$, iterating this recursion yields
\begin{equation*}
	\alpha_{n,t}
	\leq 2^{p-1} \sum_{j=0}^{n-1} (2^{p-1} \kappa)^j \alpha_{t-j}'.
\end{equation*}
That $\Xi$ is a $C$-bounded input measure \eqref{def_bounded_input} means exactly $\sum_{t \leq -1} \seq{w}{t} \alpha_t' \leq C$.
By \eqref{eq_w_index}, $\sum_{t \leq -1} \seq{w}{t} \alpha_{t-j}' \leq C \abs{\Seq{w}}^j$.
Finally,
\begin{equation}
\label{pf_geom_series}
	\sum_{t \leq -1} \seq{w}{t} \alpha_{n,t}
	\leq 2^{p-1} \sum_{j=0}^{n-1} (2^{p-1} \kappa)^j \sum_{t \leq -1} \seq{w}{t} \alpha_{t-j}'
	\leq 2^{p-1} C \sum_{j=0}^{n-1} (2^{p-1} \kappa \abs{\Seq{w}})^j
	< \infty.
\end{equation}
This concludes that $\mu_n \in P_p^{\Seq{w}}(\XS \times \US)$ for all $n \in \N$.
Now, suppose $\mu_n$ converges weakly to some limit $\mu_{\infty} \in P^{\Xi}(\XS \times \US)$.
Abbreviate $\phi_m(\Seq{x}) = \min\{ m , \sum_{t = -m}^{-1} \seq{w}{t} d_{\Xc} ( \seq{x}{t} , x_* )^p \}$.
These form a non-decreasing family of bounded continuous functions with pointwise limit $\sum_{t \leq -1} \seq{w}{t} d_{\Xc} ( \seq{x}{t} , x_* )^p \in \R \cup \{\infty\}$.
By monotone convergence,
\begin{equation*}
\begin{split}
	\sum_{t \leq -1} \seq{w}{t} \ExpP{\Seq{X} \sim (\pi_{\XS})_*\mu_{\infty}}{d_{\Xc} ( \seq{X}{t} , x_* )^p}
	&= \lim_{m \rightarrow \infty} \ExpP{\Seq{X} \sim (\pi_{\XS})_*\mu_{\infty}}{\phi_m(\Seq{X})}
	\\
	&= \lim_{m \rightarrow \infty} \lim_{n \rightarrow \infty} \ExpP{\Seq{X} \sim (\pi_{\XS})_*\mu_n}{\phi_m(\Seq{X})}
	\\
	&\leq \limsup_{n \rightarrow \infty} \sum_{t \leq -1} \seq{w}{t} \ExpP{\Seq{X} \sim (\pi_{\XS})_*\mu_n}{d_{\Xc} ( \seq{X}{t} , x_* )^p}
	\leq \frac{2^{p-1} C}{1 - 2^{p-1} \kappa \abs{\Seq{w}}},
\end{split}
\end{equation*}
where the last inequality used the assumption $\kappa < 2^{1-p} \abs{\Seq{w}}^{-1}$ that guarantees convergence of the geometric series in \eqref{pf_geom_series}.
Thus, $\mu_{\infty} \in P_p^{\Seq{w}}(\XS \times \US)$.
\end{proof}

The next lemma bridges the input measures in the abstract fixed point result with the input measures from \cref{def_indep_contr_bounded_input}.

\begin{lemma}
\label{lemma_contractive_input}
	Let $C \in (0,\infty)$ and $\kappa \in (0,\abs{\Seq{w}}^{-1})$.
Then, $\Mc_p^{C,\kappa}(\US) \subseteq \Mc_p(\Fc,C,\kappa \abs{\Seq{w}})$.
\end{lemma}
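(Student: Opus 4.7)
The plan is to unfold the definitions on both sides. Fix $\Xi \in \Mc_p^{C,\kappa}(\US)$ and write $\Xi = V_*\Theta$ for some $\Theta \in P(\ZS)$ with independent $\kappa$-contractive marginals. To show $\Xi \in \Mc_p(\Fc,C,\kappa\abs{\Seq{w}})$ I need to verify two things:
(i) $\delta_{\Seq{x}^0} \times \Xi \in P_p^{\mathrm{inv}}(\Fc)$, and
(ii) $\mu_1 := \Fc_*(\delta_{\Seq{x}^0} \times \Xi) \in P_p^{\mathrm{con}}(\Fc,C,\kappa\abs{\Seq{w}})$.

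For (i), I would repeat the inductive estimate used in the proof of \cref{lemma_fp_w_int}. That argument produces, for each $n$, a finite bound $\sum_{t \leq -1} \seq{w}{t} \ExpP{\Seq{X} \sim (\pi_{\XS})_*\Fc^n_*(\delta_{\Seq{x}^0} \times \Xi)}{d_{\Xc}(\seq{X}{t},x_*)^p} \leq 2^{p-1}C \sum_{j=0}^{n-1} (2^{p-1}\kappa\abs{\Seq{w}})^j$; the right-hand side is finite for every fixed $n$ regardless of whether $2^{p-1}\kappa\abs{\Seq{w}} < 1$ (that stronger condition is only needed for the weak-limit statement of \cref{lemma_fp_w_int}, not for the per-$n$ integrability used here). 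Combining this with Jensen's inequality applied to the probability weights $\seq{w}{t}$, inequality \eqref{eq_triangle}, the product-metric formula \eqref{eq_product_metric}, and $\Xi \in P_p(\US)$, I get $\Fc^n_*(\delta_{\Seq{x}^0} \times \Xi) \in P_p(\XS \times \US)$ for every $n$, which is exactly (i).

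For (ii), the strategy is to check that $\mu_1$ satisfies the hypotheses of \cref{lemma_contractive_joint} with the constant $C$ and rate $\kappa$, which then directly outputs $\mu_1 \in P_p^{\mathrm{con}}(\Fc,C,\kappa\abs{\Seq{w}})$. The membership $\mu_1 \in P^{\Xi}$ is immediate from $\pi_{\US} \circ \Fc = \pi_{\US}$. For $\mu_1 \in P_p^{\Seq{w},C}(\XS \times \US)$, note $(\pi_{\XS})_*\mu_1$ is the law of $F(\Seq{x}^0,\Seq{U})$ with $\Seq{U} \sim \Xi$, and $F(\Seq{x}^0,\Seq{U})_t = f(\seq{x}{t-1}^0,\seq{U}{t}) = f(x_*,\seq{U}{t})$, so the required sum equals $\sum_{t \leq -1} \seq{w}{t} \ExpP{\Seq{U}\sim\Xi}{d_{\Xc}(f(x_*,\seq{U}{t}),x_*)^p} \leq C$ by the $C$-bounded input condition \eqref{def_bounded_input}; combined with (i) at $n=1$, this gives the $P_p^{\Seq{w},C}$ membership. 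For $V$-causality, I realize $\mu_1$ as the law of $(F(\Seq{x}^0,V(\Seq{Z})),V(\Seq{Z}))$ with $\Seq{Z} \sim \Theta$: causality of $V$ together with the constancy of $\Seq{x}^0$ makes $F(\Seq{x}^0,V(\Seq{Z}))_s = f(x_*,V(\Seq{Z})_s)$ a measurable function of $(\seq{Z}{r})_{r \leq s}$, so independence of marginals of $\Theta$ delivers the conditional independence in \cref{def_causal_measure}.

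I do not anticipate a serious obstacle: the work has been packaged inside \cref{lemma_contractive_joint} and \cref{lemma_fp_w_int}, and the present lemma is essentially a bookkeeping exercise matching the ``state-space'' data $(\Xi,\Theta,V,\Seq{x}^0)$ against the abstract data $(\Theta,v_0,\Hc)$. The only point requiring care is using only the per-$n$ half of \cref{lemma_fp_w_int} in step (i), since the range of $\kappa$ here is the wider interval $(0,\abs{\Seq{w}}^{-1})$ rather than $(0,2^{1-p}\abs{\Seq{w}}^{-1})$.
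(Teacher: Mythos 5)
Your proof is correct and follows essentially the same route as the paper's: invariance via the per-$n$ integrability estimate, and the contraction property by checking that $\Fc_*(\delta_{\Seq{x}^0}\times\Xi)$ lies in $P^{\Xi}\cap P^{V\text{-}\mathrm{causal}}\cap P_p^{\Seq{w},C}(\XS\times\US)$ and then applying \cref{lemma_contractive_joint}. Your explicit remark that only the per-$n$ half of \cref{lemma_fp_w_int} is needed in step (i) — so that the wider range $\kappa\in(0,\abs{\Seq{w}}^{-1})$ is admissible even though that lemma is stated for $\kappa\in(0,2^{1-p}\abs{\Seq{w}}^{-1})$ — is a point the paper glosses over when it cites \cref{lemma_fp_w_int} directly, and is a welcome extra precision.
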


\begin{proof}
	Let $\Xi \in \Mc_p^{C,\kappa}(\US)$.
We proved in \cref{lemma_fp_w_int} that $\Fc^n_*(\delta_{\Seq{x}^0} \times \Xi) \in P_p^{\Seq{w}}(\XS \times \US) \subseteq P_p(\XS \times \US)$ for all $n \in \N$.
Thus, $\delta_{\Seq{x}^0} \times \Xi \in P_p^{\mathrm{inv}}(\Fc)$.
To see that $\Fc_*(\delta_{\Seq{x}^0} \times \Xi) \in P_p^{\mathrm{con}}(\Fc,C,\kappa \abs{\Seq{w}})$, write $\Xi$ as the law of $V(\Seq{Z})$ for a $\ZS$-valued random variable $\Seq{Z}$ with independent marginals.
The measure $\Fc_*(\delta_{\Seq{x}^0} \times \Xi)$, being the law of $(F(\Seq{x}^0,V(\Seq{Z})),V(\Seq{Z}))$, belongs to $P^{V\text{-}\mathrm{causal}}(\XS \times \US)$ by causality of $V$.
Furthermore, that $\Xi$ is a $C$-bounded input measure \eqref{def_bounded_input} means exactly
\begin{equation*}
	\sum_{t \leq -1} \seq{w}{t} \ExpP{\Seq{X} \sim (\pi_{\XS})_*\Fc_*(\delta_{\Seq{x}^0} \times \Xi)}{d_{\Xc} ( \seq{X}{t} , x_* )^p}
	= \sum_{t \leq -1} \seq{w}{t} \ExpP{\Seq{U} \sim \Xi}{d_{\Xc} ( f(x_*,\seq{U}{t}) , x_* )^p}
	\leq C,
\end{equation*}
which shows that $\Fc_*(\delta_{\Seq{x}^0} \times \Xi) \in P_p^{\Seq{w},C}(\XS \times \US)$.
Thus, $\Fc_*(\delta_{\Seq{x}^0} \times \Xi) \in P^{\Xi} \cap P^{V\text{-}\mathrm{causal}} \cap P_p^{\Seq{w},C}(\XS \times \US)$.
By \cref{lemma_contractive_joint}, $\Fc_*(\delta_{\Seq{x}^0} \times \Xi) \in P_p^{\mathrm{con}}(\Fc,C,\kappa \abs{\Seq{w}})$.
\end{proof}

The final building block concerns $P^{V\text{-}\mathrm{causal}}(\XS \times \US)$.
Our fixed point will be the limit of $\Fc^n_*(\delta_{\Seq{x}^0} \times \Xi)$.
Clearly, $\delta_{\Seq{x}^0} \times \Xi$ belongs to $P^{V\text{-}\mathrm{causal}}(\XS \times \US)$, and one can show that $P^{V\text{-}\mathrm{causal}}(\XS \times \US)$ is invariant under $\Fc_*$ (we will see this in the proof of \cref{Banach_fp_dep_input} further below).
However, $P^{V\text{-}\mathrm{causal}}(\XS \times \US)$ is, in general, not closed in the Wasserstein topology, and we cannot readily conclude that the limit of $\Fc^n_*(\delta_{\Seq{x}^0} \times \Xi)$ also belongs to it.
Instead, we go one level deeper to the space of hidden input measures.
Returning to the level of the observed inputs will be done in the final proof of \cref{Banach_fp_dep_input}.
Let $P^{\mathrm{causal}}(\XS \times \ZS)$ be the set of all measures $\mu \in P(\XS \times \ZS)$ that are the law of $(\Seq{X},\Seq{Z})$ for some $\ZS$-valued random variable $\Seq{Z}$ with independent marginals and some $\XS$-valued random variable $\Seq{X}$ for which $(\seq{X}{s},\seq{Z}{s})_{s \leq t}$ is independent of $(\seq{Z}{t+1},\dots,\seq{Z}{-1})$ for any $t \leq -2$.
\nomenclature[85]{$P^{\mathrm{causal}}(\XS \times \ZS)$}{Set of causal measures in $P(\XS \times \ZS)$}%
Then, $\mu \in P^{V\text{-}\mathrm{causal}}(\XS \times \US)$ if and only if $\mu$ is the law of $(\Seq{X},V(\Seq{Z}))$ for some $(\Seq{X},\Seq{Z})$ whose law belongs to $P^{\mathrm{causal}}(\XS \times \ZS)$.

\begin{lemma}
\label{lemma_causal_closed}
	The set $P^{\mathrm{causal}}(\XS \times \ZS)$ is closed in the weak topology.
\end{lemma}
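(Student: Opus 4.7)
My plan is to re-express membership in $P^{\mathrm{causal}}(\XS \times \ZS)$ as a family of equalities of expectations of bounded continuous test functions, and then observe that such equalities are automatically preserved under weak convergence. The key point is that weak convergence preserves $\int \varphi \, d\mu$ for bounded continuous $\varphi$, so if every independence condition in \cref{def_causal_measure} can be tested with such functions, it will survive limits.

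First I would set up the necessary continuity. Given any $\mu \in P(\XS \times \ZS)$, the canonical coordinate projections yield an $\XS$-valued random variable $\Seq{X}$ and a $\ZS$-valued random variable $\Seq{Z}$ whose joint law is $\mu$, and without loss of generality these are the random variables from \cref{def_causal_measure}. The single-coordinate projections $\Seq{x} \mapsto \seq{x}{s}$ on $\XS$ are continuous since $\seq{w}{s} d_{\Xc}(\seq{x}{s}^1,\seq{x}{s}^2) \leq d_{\XS}(\Seq{x}^1,\Seq{x}^2)$, and the projections on $\ZS$ are continuous by definition of the product topology. Hence, for each $t \leq -2$, the maps
\begin{equation*}
A_t \colon (\Seq{x},\Seq{z}) \mapsto (\seq{x}{s},\seq{z}{s})_{s \leq t}, \qquad
B_t \colon (\Seq{x},\Seq{z}) \mapsto (\seq{z}{t+1}, \ldots, \seq{z}{-1})
\end{equation*}
are continuous from $\XS \times \ZS$ into Polish product spaces.

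Next, I would invoke the standard fact that, for random variables valued in Polish spaces, independence of $A$ and $B$ is equivalent to $\ExpP{}{\phi(A)\psi(B)} = \ExpP{}{\phi(A)} \ExpP{}{\psi(B)}$ holding for all bounded continuous $\phi$ and $\psi$. Using this twice, $\mu \in P^{\mathrm{causal}}(\XS \times \ZS)$ is equivalent to the conjunction of: (a) for every finite $t_1 < \cdots < t_k \leq -1$ and every bounded continuous $g_1,\ldots,g_k \colon \Zc \to \R$,
\begin{equation*}
\ExpP{\mu}{\prod_{i=1}^k g_i(\seq{Z}{t_i})} = \prod_{i=1}^k \ExpP{\mu}{g_i(\seq{Z}{t_i})};
\end{equation*}
and (b) for every $t \leq -2$ and every pair of bounded continuous functions $\phi$ and $\psi$ on the respective target spaces of $A_t$ and $B_t$,
\begin{equation*}
\ExpP{\mu}{\phi(A_t)\psi(B_t)} = \ExpP{\mu}{\phi(A_t)} \ExpP{\mu}{\psi(B_t)}.
\end{equation*}

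Finally, each integrand appearing in (a) and (b) is a bounded continuous function on $\XS \times \ZS$, thanks to continuity of the coordinate projections and of $A_t$ and $B_t$. Therefore, if $(\mu_n)_n \subseteq P^{\mathrm{causal}}(\XS \times \ZS)$ converges weakly to $\mu$, each such equality passes to the limit, so $\mu$ satisfies (a) and (b) as well, hence $\mu \in P^{\mathrm{causal}}(\XS \times \ZS)$. The only non-routine ingredient is the characterization of independence on Polish spaces via bounded continuous test functions, which is standard; everything else reduces to the elementary observation that the relevant coordinate maps out of $\XS \times \ZS$ are continuous.
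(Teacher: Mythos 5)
Your proof is correct and follows essentially the same route as the paper: the same decomposition into independence statements about the projections $(\seq{x}{s},\seq{z}{s})_{s\leq t}$ and $(\seq{z}{t+1},\dots,\seq{z}{-1})$, the same use of continuity of the coordinate maps out of $\XS\times\ZS$, and the same passage to the weak limit. The only difference is that where the paper cites the fact that weak limits commute with products of measures on Polish spaces, you unwind that citation into the equivalent (and standard) characterization of independence via bounded continuous test functions, which is a perfectly valid, self-contained substitute.
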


\begin{proof}
	Suppose $(\mu_n)_n \subseteq P^{\mathrm{causal}}(\XS \times \ZS)$ converges weakly to some $\mu \in P(\XS \times \ZS)$.
Take random variables $\Seq{X}^n = (\Seq{X}^n_t)_{t \leq -1}$ on $\XS$ and $\Seq{Z}^n = (\Seq{Z}^n_t)_{t \leq -1}$ on $\ZS$ whose joint law is $\mu_n$.
Consider the projections $p_t^0,p_t^1,p_t^2$ defined by $p_t^0(\Seq{x},\Seq{z}) = ((\seq{x}{s})_{s \leq t},\Seq{z})$, $p_t^1(\Seq{x},\Seq{z}) = (\seq{x}{s},\seq{z}{s})_{s \leq t}$, and $p_t^2(\Seq{x},\Seq{z}) = (\seq{z}{t+1},\dots,\seq{z}{-1})$.
Then, independence of $(\seq{X}{s},\seq{Z}{s})_{s \leq t}$ and $(\Seq{Z}^n_{t+1},\dots,\Seq{Z}^n_{-1})$ amounts to the equality of measures $(p_t^0)_*\mu_n = (p_t^1)_*\mu_n \otimes (p_t^2)_*\mu_n$.
Since the projections are continuous, $(p_t^i)_*\mu_n$ converges weakly to $(p_t^i)_*\mu$ for each $i = 0,1,2$.
That $\Xc$ and $\Zc$ are Polish ensures that taking weak limits commutes with taking products; see \cite[Theorem 8.4.10]{Bogachev2007}.
Thus,
\begin{equation*}
	(p_t^1)_*\mu \otimes (p_t^2)_*\mu
	= \lim_{\substack{n \rightarrow \infty \\ \text{weak}}} (p_t^1)_*\mu_n \otimes (p_t^2)_*\mu_n
	= \lim_{\substack{n \rightarrow \infty \\ \text{weak}}} (p_t^0)_*\mu_n
	= (p_t^0)_*\mu.
\end{equation*}
The independence of the marginals of the second component of $\mu$ is shown analogously.
This concludes the causality for the limit point $\mu$.
\end{proof}

We piece everything together to prove \cref{Banach_fp_dep_input}.

\begin{proof}[\Pf{Banach_fp_dep_input}]
	Let $\Xi \in \Mc_p^{C,\kappa}(\US)$.
By \cref{lemma_contractive_input}, $\Mc_p^{C,\kappa}(\US) \subseteq \Mc_p(\Fc,C,\kappa \abs{\Seq{w}})$.
By \cref{Banach_fixed_point}, $\Fc_*$ has a fixed point $\mu^{\Xi}$ in $P_p^{\Xi}(\XS \times \US)$ that depends continuously on $\Xi$ with respect to Wasserstein distances and is unique relative to $P_p^{\Xi}(\XS \times \US) \cap P_p^{\mathrm{con}}(\Fc)$.
Hence, it is unique relative to $P^{\Xi} \cap P^{V\text{-}\mathrm{causal}} \cap P_p^{\Seq{w}}(\XS \times \US)$ since this is a subset of $P_p^{\Xi}(\XS \times \US) \cap P_p^{\mathrm{con}}(\Fc)$ by \cref{lemma_contractive_joint}.
The fixed point $\mu^{\Xi}$ in \cref{Banach_fixed_point} is obtained as the limit with respect to the Wasserstein distance of the convergent sequence $(\Fc^n_*(\delta_{\Seq{x}^0} \times \Xi))_n$.
Hence, $\mu^{\Xi} \in P_p^{\Seq{w}}(\XS \times \US)$ by \cref{lemma_fp_w_int}.
It remains to show that $\mu^{\Xi} \in P^{V\text{-}\mathrm{causal}}(\XS \times \US)$.

Take a $\Theta \in P(\ZS)$ with independent marginals and $V_* \Theta = \Xi$.
Let $\rho \colon \XS \times \ZS \rightarrow \XS \times \US$ be given by $\rho(\Seq{x},\Seq{z}) = (\Seq{x},V(\Seq{z}))$ and $\Gc \colon \XS \times \ZS \rightarrow \XS \times \ZS$ be given by $\Gc(\Seq{x},\Seq{z}) = (F(\Seq{x},V(\Seq{z})) , \Seq{z})$.
Then, $\Fc \circ \rho = \rho \circ \Gc$ and, hence, $\Fc^n_*(\delta_{\Seq{x}^0} \times \Xi) = \rho_* \Gc^n_*(\delta_{\Seq{x}^0} \times \Theta)$.
Clearly, $\delta_{\Seq{x}^0} \times \Theta \in P^{\mathrm{causal}}(\XS \times \ZS)$.
By causality of $V$, $P^{\mathrm{causal}}(\XS \times \ZS)$ is invariant under $\Gc_*$.
Thus, the sequence $\nu_n := \Gc^n_*(\delta_{\Seq{x}^0} \times \Theta)$ belongs to $P^{\mathrm{causal}}(\XS \times \ZS)$.
We claim that this sequence is weakly convergent with some limit $\nu \in P(\XS \times \ZS)$.
Since $P^{\mathrm{causal}}(\XS \times \ZS)$ is closed in the weak topology by \cref{lemma_causal_closed}, it follows from this claim that the fixed point $\mu^{\Xi} = \rho_*\nu$ belongs to $\rho_* P^{\mathrm{causal}}(\XS \times \ZS) = P^{V\text{-}\mathrm{causal}}(\XS \times \US)$.
To see that $\nu_n$ converges weakly, we show that it converges even with respect to a Wasserstein distance.
Since $\ZS$ is Polish, we can pick a bounded metric $d_{\ZS}$ inducing the topology on $\ZS$.
Fix any $\Seq{z}^0 \in \ZS$.
The choice of metric \eqref{eq_product_metric} on $\XS \times \ZS$ and \eqref{eq_triangle} imply
\begin{equation*}
	\ExpP{(\Seq{X},\Seq{Z}) \sim \nu_n}{d_{\XS \times \ZS}( (\Seq{X},\Seq{Z}) , (\Seq{x}^0,\Seq{z}^0) )^p}
	\leq 2^{p-1} \ExpP{\Seq{X} \sim (\pi_{\XS})_*\nu_n}{d_{\XS}( \Seq{X} , \Seq{x}^0 )^p} + 2^{p-1} \ExpP{\Seq{Z} \sim \Theta}{d_{\ZS}( \Seq{Z} , \Seq{z}^0 )^p}.
\end{equation*}
Note that $(\pi_{\XS})_*\nu_n = (\pi_{\XS})_*\Fc^n_*(\delta_{\Seq{x}^0} \times \Xi)$.
Jensen's inequality and the fact that $\Fc^n_*(\delta_{\Seq{x}^0} \times \Xi)$ belongs to $P_p^{\Seq{w}}(\XS \times \US)$ by \cref{lemma_fp_w_int} show that $\ExpP{\Seq{X} \sim (\pi_{\XS})_*\nu_n}{d_{\XS}( \Seq{X} , \Seq{x}^0 )^p}$ is finite.
Thus, $\nu_n \in P_p(\XS \times \ZS)$.
That $\Xi \in \Mc_p^{C,\kappa}(\US) \subseteq \Mc_p(\Fc,C,\kappa \abs{\Seq{w}})$ implies $\Fc_*(\delta_{\Seq{x}^0} \times \Xi) \in P_p^{\mathrm{con}}(\Fc,C,\kappa \abs{\Seq{w}})$ by definition of $\Mc_p(\Fc,C,\kappa \abs{\Seq{w}})$.
The equality
\begin{equation*}
	d_{\XS \times \ZS}( \Gc^n(\Seq{x},\Seq{z}) , \Gc^n(\Seq{x}^0,\Seq{z}) )
	= d_{\XS \times \US}( \Fc^n \circ \rho(\Seq{x},\Seq{z}) , \Fc^n \circ \rho(\Seq{x}^0,\Seq{z}) )
\end{equation*}
and the fact that $\rho_*\nu_1 = \Fc_*(\delta_{\Seq{x}^0} \times \Xi) \in P_p^{\mathrm{con}}(\Fc,C,\kappa \abs{\Seq{w}})$ yield
\begin{equation*}
\begin{split}
	W_p(\nu_{n+1},\nu_n)^p
	&\leq \ExpP{(\Seq{X},\Seq{Z}) \sim \nu_1}{d_{\XS \times \ZS}( \Gc^n(\Seq{X},\Seq{Z}) , \Gc^n(\Seq{x}^0,\Seq{Z}) )^p}
	\\
	&= \ExpP{(\Seq{X},\Seq{U}) \sim \rho_*\nu_1}{d_{\XS \times \US}( \Fc^n(\Seq{X},\Seq{U}) , \Fc^n(\Seq{x}^0,\Seq{U}) )^p}
	\leq (\kappa \abs{\Seq{w}})^n C.
\end{split}
\end{equation*}
As in the proof of \cref{Banach_fixed_point}, we find that $(\nu_n)_n$ is a Cauchy sequence with respect to $W_p$.
\end{proof}

Finally, we consider the compact case, which permits us to upgrade the continuity statement.

\begin{proof}[\Pf{Banach_fp_compact_dep_input}]
	Since $\Xc$ is compact, the factor $2^{1-p}$ in the bound on $\kappa$ in \cref{Banach_fp_dep_input} can be dropped since \cref{lemma_fp_w_int} is the only intermediate step requiring this bound and this lemma becomes redundant for a bounded state space.
Thus, we obtain a unique fixed point $\mu^{\Xi}$ in $P^{\Xi} \cap P^{V\text{-}\mathrm{causal}} \cap P_p^{\Seq{w}}(\XS \times \US)$ for any $\Xi \in \bigcup_{C,\kappa} \Mc_p^{C,\kappa}(\US) =: \Mc$, where the union is taken over all $(C,\kappa) \in (0,\infty) \times (0, \abs{\Seq{w}}^{-1})$.
Since $\Xc$ is compact, so is $\XS$ (a proof of this fact can be found in \cref{app_sec_pf_remarks}, \cref{app_lem_sequ_topology}).
Compactness of $\Zc$ carries over to $\ZS$ and $\US = V(\ZS)$.
In particular, $P_p^{\Seq{w}}(\XS \times \US) = P(\XS \times \US)$.

Now, suppose for contradiction that $\Xi \mapsto \mu^{\Xi}$ was not continuous on $\Mc$ with respect to the weak topology.
Then, there exist a $\Xi \in \Mc$, a weakly open set $W \subseteq P(\XS \times \US)$ containing $\mu^{\Xi}$, and a sequence $(\Xi_n)_n \subseteq \Mc$ converging weakly to $\Xi$ such that $\mu^{\Xi_n} \notin W$ for all $n \in \N$.
We showed in the proof of \cref{Banach_fp_dep_input} that there exist elements $\nu_n \in P^{\mathrm{causal}}(\XS \times \ZS)$ with $\mu^{\Xi_n} = \rho_*\nu_n$, where $\rho \colon \XS \times \ZS \rightarrow \XS \times \US$, $(\Seq{x},\Seq{z}) \mapsto (\Seq{x},V(\Seq{z}))$.
The space $P(\XS \times \ZS)$ is compact in the weak topology.
Thus, some subsequence of $(\nu_n)_n$ converges, and its limit $\nu$ belongs to $P^{\mathrm{causal}}(\XS \times \ZS)$ by \cref{lemma_causal_closed}.
Then, $\mu = \rho_* \nu$ is the limit of a subsequence of $(\mu^{\Xi_n})_n$.
Thus, $\mu \notin W$.
At the same time, $\mu \in \rho_*P^{\mathrm{causal}}(\XS \times \ZS) = P^{V\text{-}\mathrm{causal}}(\XS \times \US)$.
By definition, $\Fc_* \mu^{\Xi_n} = \mu^{\Xi_n}$ and $(\pi_{\US})_* \mu^{\Xi_n} = \Xi_n$ for all $n \in \N$.
By continuity, $\Fc_* \mu = \mu$ and $(\pi_{\US})_* \mu = \Xi$.
Thus, $\mu \in P^{\Xi} \cap P^{V\text{-}\mathrm{causal}}(\XS \times \US)$ is a fixed point of $\Fc_*$.
By uniqueness of the fixed point, $\mu = \mu^{\Xi} \in W$, which contradicts $\mu \notin W$.
\end{proof}

\section{Conclusion}
\label{Conclusion}

In this paper, we have developed a probabilistic framework to study the dependence structure induced by deterministic discrete-time state-space systems between input and output processes.
We have formulated general sufficient conditions under which solution processes exist and are unique once an input process has been fixed, which is the natural generalization of the deterministic echo state property.
When those conditions are satisfied, the given state-space system becomes a generative model for probabilistic dependences between two sequence spaces.
Moreover, as it had already been pointed out in \cite{ManjunathOrtega2023} (in the absence of probabilistic dependence between input and output), we showed that in the stochastic framework, the echo state property can hold under contractivity conditions that are strictly weaker than those in deterministic situations.
This means that state-space systems can induce a probabilistic dependence structure between input and output sequence spaces even without a functional relation between these two spaces.
Moreover, those conditions guarantee that the output depends continuously on the input when using the Wasserstein metric.
The output processes whose existence is proved are shown to be causal in a specific sense and to generalize those studied in purely deterministic situations. 

The framework developed in the paper and, more specifically, the characterization of the stochastic solutions opens the door to investigating many related problems.
For instance, some of the state-space families introduced in the paper, like echo state networks and state-affine systems, have been shown \cite{RC6, RC7, RC20, RC12} to have universality properties in the approximation of fading memory input/output systems with deterministic inputs.
A related question that could be asked is the universality of these paradigms as process-generating mechanisms.
More explicitly, given a causal and stationary process in a Wasserstein space, can it be approximated arbitrarily well by the stochastic output of any of the universal families cited above using iid inputs?
The answer to this question is the subject of ongoing work by the authors, which will be spelled out in a separate paper.

On a related topic, we recall that the input/output universality of various families of state-space systems has been studied in a stochastic setup in \cite{GononOrtega2020IEEE} using $L^p$-norms induced by the law of the input, but always assuming that the deterministic echo state property holds, and therefore a filter is available.
Moreover, $L^p $-integrability properties in the output are imposed as a hypothesis that, in practice, is difficult to verify.
In ongoing work, we investigate whether the different stochastic setup we adopted in this paper would allow us to formulate an alternative collection of related results with weaker conditions.

\acks{The authors acknowledge partial financial support from the School of Physical and Mathematical Sciences of the Nanyang Technological University. The second author is funded by an Eric and Wendy Schmidt AI in Science Postdoctoral Fellowship at the Nanyang Technological University. We thank Lyudmila Grigoryeva, G Manjunath, Peter Ti\v{n}o, and Patrick Cheridito for helpful discussions and remarks.}

\printnomenclature{14em}

\appendix

\section{Fixed points for hidden inputs}

In \cref{Banach_fp_dep_input}, we found a unique $\mu^{\Xi}$ that depends continuously on $\Xi$.
The stochastic input $\Xi$ is of the form $\Xi = V_*\Theta$.
One may ask if we can find a unique $\hat{\mu}^{\Theta}$ as a function of $\Theta$ instead of as a function of $\Xi$.
Naturally, $\hat{\mu}^{\Theta}$ should satisfy $(\pi_{\XS})_* \hat{\mu}^{\Theta} = (\pi_{\XS})_* \mu^{V_*\Theta}$ and $(\pi_{\ZS})_* \hat{\mu}^{\Theta} = \Theta$.
To respect these defining properties while maintaining the same style of definition as for $\mu^{\Xi}$, we look for $\hat{\mu}^{\Theta}$ by looking for fixed points of $\Gc_*$, where $\Gc \colon \XS \times \ZS \rightarrow \XS \times \ZS$ is given by $\Gc(\Seq{x},\Seq{z}) = (F(\Seq{x},V(\Seq{z})),\Seq{z})$.
The maps $\Fc$ and $\Gc$ are conjugate to each other in the sense that $\Fc \circ \rho = \rho \circ \Gc$, where $\rho(\Seq{x},\Seq{z}) = (\Seq{x},V(\Seq{z}))$.
In fact, fixed points $\hat{\mu}^{\Theta}$ of $\Gc_*$ as desired exist and the fixed points in \cref{Banach_fp_dep_input} can be recovered from them: if $\Xi = V_*\Theta$, then $\mu^{\Xi} = \rho_*\hat{\mu}^{\Theta}$, which turns out to be independent of the choice of $\Theta$ satisfying $V_*\Theta = \Xi$.
The fixed points $\hat{\mu}^{\Theta}$ are presented in \cref{Banach_fp_general_indep_input} further below.
In fact, these fixed points appeared implicitly in the proof of \cref{Banach_fp_dep_input}.
Analogous to $\Mc_p^{C,\kappa}(\US)$, let $\Mc_p^{C,\kappa}(\ZS) \subseteq P_p(\ZS)$ be the set of all $\Theta \in P_p(\ZS)$ with independent $\kappa$-contractive marginals for which $V_*\Theta$ is a $C$-bounded input measure.
We have the following variant of \cref{Banach_fp_dep_input}, in which the stochastic output is seen as a function of the hidden underlying inputs.
The proof of
\cref{Banach_fp_general_indep_input}
is analogous to the proof of \cref{Banach_fp_dep_input} with $\Gc$ in place of $\Fc$.

\begin{theorem}
\label{Banach_fp_general_indep_input}
	Let $C \in (0,\infty)$ and $\kappa \in (0,2^{1-p} \abs{\Seq{w}}^{-1})$.
For any $\Theta \in \Mc_p^{C,\kappa}(\ZS)$, the map $\Gc_*$ has a unique fixed point $\mu^{\Theta}$ in $P^{\Theta} \cap P^{\mathrm{causal}} \cap P_p^{\Seq{w}}(\XS \times \ZS)$.
Furthermore, the map $\Mc_p^{C,\kappa}(\ZS) \rightarrow P_p(\XS \times \ZS)$ given by $\Theta \mapsto \mu^{\Theta}$ is continuous with respect to Wasserstein distances.
\end{theorem}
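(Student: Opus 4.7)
My plan is to mirror the proof of \cref{Banach_fp_dep_input} with the map $\Gc$ on $\XS \times \ZS$ playing the role of $\Fc$ on $\XS \times \US$, and with $P^{\mathrm{causal}}(\XS \times \ZS)$ playing the role of $P^{V\text{-}\mathrm{causal}}(\XS \times \US)$. Since $\ZS$ needs a metric to make sense of $P_p(\ZS)$ and the Wasserstein distance, I would fix a bounded metric $d_{\ZS}$ inducing its (Polish) topology, as was done inside the proof of \cref{Banach_fp_dep_input}. Note that $\Gc$ is continuous because $V$ is continuous and $F$ satisfies \cref{assumption_cont}, and $\pi_{\ZS}\circ\Gc=\pi_{\ZS}$ by construction, so the abstract setup of \cref{section_Banach} applies with $\Vc = \XS$, $\Wc = \ZS$, $\Hc = \Gc$, $v_0 = \Seq{x}^0$.

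The first block of work is to establish the three lemmas analogous to \cref{lemma_contractive_joint,lemma_fp_w_int,lemma_contractive_input} for $\Gc$. For the contraction lemma, let $\mu \in P^{\Theta} \cap P^{\mathrm{causal}} \cap P_p^{\Seq{w},C}(\XS \times \ZS)$ and write $\alpha_{n,t} = \ExpP{(\Seq{X},\Seq{Z}) \sim \mu}{d_{\Xc}(\pi_{\XS}\Gc^n(\Seq{X},\Seq{Z})_t,\pi_{\XS}\Gc^n(\Seq{x}^0,\Seq{Z})_t)^p}$. Since $\Gc$ only alters the first coordinate through $F(\cdot,V(\cdot))$ and since $V$ is causal, the conditioning argument used in the proof of \cref{lemma_contractive_joint} carries over verbatim: by causality of $V$, the quantity $\beta(\Seq{x},\Seq{z}) := \pi_{\XS}\Gc^{n-1}(\Seq{x},\Seq{z})_{t-1}$ depends only on $\seq{x}{t-n}$ and $(\seq{z}{s})_{s \leq t-1}$, so the tower property combined with the causality in \cref{def_causal_measure} and the $\kappa$-contractivity \eqref{def_indep_contr_input} of the marginals of $\Seq{Z}$ gives $\alpha_{n,t} \leq \kappa\alpha_{n-1,t-1}$, hence via \eqref{eq_w_index} the required bound $\mu \in P_p^{\mathrm{con}}(\Gc,C,\kappa\abs{\Seq{w}})$. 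The integrability lemma for $\Gc^n_*(\delta_{\Seq{x}^0}\times\Theta)$ and its weak limit is obtained identically to \cref{lemma_fp_w_int} since the first-coordinate dynamics are the same as in $\Fc$ when $\Xi = V_*\Theta$; the bounded-input estimate transfers because $\ExpP{\Seq{Z} \sim \Theta}{d_{\Xc}(f(x_*,V(\Seq{Z})_t),x_*)^p} = \ExpP{\Seq{U} \sim V_*\Theta}{d_{\Xc}(f(x_*,\seq{U}{t}),x_*)^p}$. The analog of \cref{lemma_contractive_input} then shows $\Mc_p^{C,\kappa}(\ZS) \subseteq \Mc_p(\Gc,C,\kappa\abs{\Seq{w}})$: indeed $\delta_{\Seq{x}^0}\times\Theta \in P_p^{\mathrm{inv}}(\Gc)$ by the integrability lemma, and $\Gc_*(\delta_{\Seq{x}^0}\times\Theta)$ is the law of $(F(\Seq{x}^0,V(\Seq{Z})),\Seq{Z})$, which lies in $P^{\Theta} \cap P^{\mathrm{causal}} \cap P_p^{\Seq{w},C}(\XS \times \ZS)$ by causality of $V$ and the $C$-boundedness of $V_*\Theta$.

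With these in hand, \cref{Banach_fixed_point} yields a fixed point $\mu^{\Theta} \in P_p^{\Theta}(\XS \times \ZS)$ of $\Gc_*$ obtained as the Wasserstein limit of $\nu_n := \Gc^n_*(\delta_{\Seq{x}^0}\times\Theta)$, which is unique relative to $P_p^{\Theta}(\XS \times \ZS) \cap P_p^{\mathrm{con}}(\Gc)$, and the map $\Theta \mapsto \mu^{\Theta}$ is continuous with respect to Wasserstein distances. The integrability lemma gives $\mu^{\Theta} \in P_p^{\Seq{w}}(\XS \times \ZS)$. To promote relative uniqueness to uniqueness on $P^{\Theta} \cap P^{\mathrm{causal}} \cap P_p^{\Seq{w}}(\XS \times \ZS)$, note that by the contraction lemma above this set is contained in $P_p^{\Theta}(\XS \times \ZS) \cap P_p^{\mathrm{con}}(\Gc)$.

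The only remaining point is causality of the limit: each $\nu_n$ lies in $P^{\mathrm{causal}}(\XS \times \ZS)$ because $\delta_{\Seq{x}^0}\times\Theta$ does and because causality of $V$ renders $P^{\mathrm{causal}}(\XS \times \ZS)$ invariant under $\Gc_*$, and by \cref{lemma_causal_closed} this set is weakly closed, hence the Wasserstein (in particular weak) limit $\mu^{\Theta}$ remains causal. The main obstacle is thus the conditioning-with-causality argument in the contraction lemma, but this is already fully worked out in the proof of \cref{lemma_contractive_joint} and transports unchanged thanks to the conjugacy $\Fc \circ \rho = \rho \circ \Gc$ with $\rho(\Seq{x},\Seq{z}) = (\Seq{x},V(\Seq{z}))$.
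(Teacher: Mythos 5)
Your proposal is correct and follows exactly the route the paper intends: the paper's own proof of \cref{Banach_fp_general_indep_input} is simply the observation that the argument for \cref{Banach_fp_dep_input} goes through with $\Gc$ in place of $\Fc$, and you have carried out that analogy faithfully — instantiating the abstract framework of \cref{Banach_fixed_point} with $(\Vc,\Wc)=(\XS,\ZS)$, transporting the three lemmas, and closing the causality argument directly via \cref{lemma_causal_closed} (which, as you note, applies on $\XS\times\ZS$ without the $\rho$-pushforward detour needed in the observed-input case). Your choice of a bounded metric on $\ZS$ to make the Wasserstein statements well-posed matches what the paper does inside the proof of \cref{Banach_fp_dep_input}.
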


\section{Proofs of remarks}
\label{app_sec_pf_remarks}

We made several remarks that deserve proof.
The first one was about the topology on $\XS$ and was formulated in \cref{Sequence spaces}.

\begin{lemma}
\label{app_lem_sequ_topology}
	The topology on $\XS$ induced by $d_{\XS}$ is strictly finer than the subspace topology induced by the product topology if and only if $(\Xc,d_{\Xc})$ is unbounded.
In particular, if $\Xc$ is compact, then so is $\XS$.
\end{lemma}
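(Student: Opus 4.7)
The plan is to establish three facts in order: that the $d_{\XS}$-topology is always at least as fine as the subspace product topology; that these topologies agree when $(\Xc,d_{\Xc})$ is bounded; and that they differ when $(\Xc,d_{\Xc})$ is unbounded. The ``in particular'' clause then follows by combining the bounded case with Tychonoff.

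First I would observe that for any sequence $(\Seq{x}^n)_n \subseteq \XS$ and any $\Seq{x} \in \XS$, the elementwise bound $\seq{w}{t} d_{\Xc}(\seq{x}{t}^n,\seq{x}{t}) \leq d_{\XS}(\Seq{x}^n,\Seq{x})$ shows that $d_{\XS}$-convergence forces convergence in each coordinate, hence in the product topology. This already gives one inclusion of topologies, as noted in \cref{Sequence spaces}. For the converse direction under the boundedness assumption $d_{\Xc} \leq M$, if $\Seq{x}^n \rightarrow \Seq{x}$ in the product topology, then the summands $\seq{w}{t} d_{\Xc}(\seq{x}{t}^n,\seq{x}{t})$ vanish pointwise in $t$ and are dominated by the summable sequence $M \seq{w}{t}$. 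Dominated convergence yields $d_{\XS}(\Seq{x}^n,\Seq{x}) \rightarrow 0$, so the two topologies coincide when $\Xc$ is bounded.

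Next, I would address the unbounded case by explicit construction. For each $n \in \N$, use unboundedness to choose $y_n \in \Xc$ with $d_{\Xc}(y_n,x_*) \geq 1/\seq{w}{-n}$, and define $\Seq{x}^n \in \Xc^{\Z_-}$ by $\seq{x}{-n}^n = y_n$ and $\seq{x}{t}^n = x_*$ for all $t \neq -n$. Since only one coordinate differs from $x_*$, the weighted sum is finite and $\Seq{x}^n \in \XS$. For every fixed $t \leq -1$, eventually $-n < t$ and so $\seq{x}{t}^n = x_* = \seq{x}{t}^0$, giving $\Seq{x}^n \rightarrow \Seq{x}^0$ in the product topology. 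However, $d_{\XS}(\Seq{x}^n,\Seq{x}^0) = \seq{w}{-n} d_{\Xc}(y_n,x_*) \geq 1$, so the sequence does not converge to $\Seq{x}^0$ in $d_{\XS}$. This witnesses that the $d_{\XS}$-topology is strictly finer than the subspace product topology.

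Finally, for the compact case, note that compactness of $\Xc$ implies $d_{\Xc}$ is bounded by some $M$; then for every $\Seq{x} \in \Xc^{\Z_-}$ one has $\sum_{t \leq -1} \seq{w}{t} d_{\Xc}(\seq{x}{t},x_*) \leq M$, so $\XS = \Xc^{\Z_-}$ as sets. By the previous paragraph the $d_{\XS}$-topology coincides with the product topology, and Tychonoff's theorem gives compactness of $\Xc^{\Z_-}$, hence of $\XS$. The only genuinely nontrivial step is the explicit counterexample in the unbounded direction, where one must be careful to place the large entry at an index whose weight is small enough to stay in $\XS$ yet large enough (relative to $d_{\Xc}(y_n,x_*)$) to block $d_{\XS}$-convergence; the simultaneous constraints are reconciled by choosing $y_n$ after the index $-n$ is fixed, using unboundedness to make $d_{\Xc}(y_n,x_*)$ as large as needed.
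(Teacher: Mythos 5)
Your proposal is correct and follows essentially the same route as the paper's proof: the unbounded direction is witnessed by placing a single far-away element of $\Xc$ at a coordinate far in the past (the paper does this inside a basic open set, you do it along a sequence), and the bounded direction rests on the same tail-splitting idea, which you package as dominated convergence while the paper gives an explicit $\epsilon$--$\delta$ estimate against the sup-metric $d_{\XS}'$. The sequential formulation is legitimate here because both the product topology on $\Xc^{\Z_-}$ (countably many metrizable factors) and the $d_{\XS}$-topology are metrizable, hence first-countable.
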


\begin{proof}
	Suppose $(\Xc,d_{\Xc})$ is unbounded.
If the topologies agreed, then there existed a basis element $B$ of the product topology such that $\Seq{x}^0 \in B \cap \XS \subseteq \{ \Seq{x} \in \XS \colon d_{\XS}(\Seq{x},\Seq{x}^0) < 1 \}$.
Being a basis element, there exists some $T \leq -1$ with $\left( \prod_{t < T} \Xc \right) \times \{x_*\}^{\times T} \subseteq B$.
Take $x \in \Xc$ with $\sum_{t < T} \seq{w}{t} d_{\Xc}(x,x_*) > 1$, and let $\Seq{x} = (\dots,x,x_*,\dots,x_*)$ be the sequence that is constantly $x_*$ for the last $T$ entries and constantly $x$ prior.
Then, $\Seq{x} \in B$ but $d_{\XS}(\Seq{x},\Seq{x}^0) \in (1,\infty)$, a contradiction.

Now, suppose $(\Xc,d_{\Xc})$ is bounded.
We show that the identity map is continuous as a map $\Xc^{\Z_-} \rightarrow \XS$.
The product topology on $\Xc^{\Z_-}$ is induced by the metric $d_{\XS}'(\Seq{x}^1,\Seq{x}^2) = \sup_{t \leq -1} \seq{w}{t} d_{\Xc}( \seq{x}{t}^1,\seq{x}{t}^2 )$.
Given $\epsilon > 0$, take $T \leq -1$ such that $\sum_{t < T} \seq{w}{t} < \epsilon \mathrm{diam}(\Xc)^{-1}$.
If $d_{\XS}'(\Seq{x}^1,\Seq{x}^2) < \epsilon \seq{w}{T}$, then $d_{\Xc}(\seq{x}{t}^1,\seq{x}{t}^2) < \epsilon$ for all $t \in \{T,\dots,-1\}$ and, hence, $d_{\XS}(\Seq{x}^1,\Seq{x}^2) < 2\epsilon$.
\end{proof}

In the next lemma, we verify \cref{rem_on_cont_assumption}.

\begin{lemma}
\label{pf_rem_on_cont_assumption}
If there exists some $C \geq 0$ such that $d_{\Xc}(f(x,u),x_*) \leq C(1 + d_{\Xc}(x,x_*))$ for all $x \in \Xc$ and $u \in \Uc$, then \cref{assumption_cont} is satisfied.
\end{lemma}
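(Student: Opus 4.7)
The plan is to verify the two parts of \cref{assumption_cont} separately, namely $F(\XS \times \US) \subseteq \XS$ and the continuity of the restriction $F \colon \XS \times \US \rightarrow \XS$. For the inclusion, I would fix $(\Seq{x},\Seq{u}) \in \XS \times \US$ and start from the growth bound to estimate
\begin{equation*}
	\sum_{t \leq -1} \seq{w}{t} d_{\Xc}(f(\seq{x}{t-1},\seq{u}{t}),x_*)
	\leq C \sum_{t \leq -1} \seq{w}{t} + C \sum_{t \leq -1} \seq{w}{t} d_{\Xc}(\seq{x}{t-1},x_*).
\end{equation*}
The first sum equals $C$ since the weights sum to $1$. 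Applying \eqref{eq_w_index} with $n=1$ to the sequence $\alpha_t = d_{\Xc}(\seq{x}{t},x_*)$ turns the second sum into at most $C\abs{\Seq{w}} d_{\XS}(\Seq{x},\Seq{x}^0)$, which is finite because $\Seq{x} \in \XS$. This yields $F(\Seq{x},\Seq{u}) \in \XS$.

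For continuity, I would take an arbitrary convergent sequence $(\Seq{x}^k,\Seq{u}^k) \to (\Seq{x},\Seq{u})$ in $\XS \times \US$ and show that the series $S_k := \sum_{t \leq -1} \seq{w}{t} d_{\Xc}(f(\seq{x}{t-1}^k,\seq{u}{t}^k),f(\seq{x}{t-1},\seq{u}{t}))$ vanishes in the limit by a standard truncation argument. Given $\epsilon > 0$, split $S_k$ at some large cutoff $-T$. On the head $-T \leq t \leq -1$ the sum is finite and each summand goes to zero by continuity of $f$ together with the coordinatewise convergences $\seq{x}{t}^k \to \seq{x}{t}$ and $\seq{u}{t}^k \to \seq{u}{t}$. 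For the tail $t < -T$, combine the triangle inequality and the growth hypothesis to bound the integrand by $2C + C d_{\Xc}(\seq{x}{t-1}^k,x_*) + C d_{\Xc}(\seq{x}{t-1},x_*)$; after multiplication by $\seq{w}{t}$, summation, and another application of \eqref{eq_w_index}, the tail is controlled by $2C \sum_{t < -T} \seq{w}{t}$, by $C\abs{\Seq{w}}$ times a tail of $d_{\XS}(\Seq{x},\Seq{x}^0)$, and by $C\abs{\Seq{w}} d_{\XS}(\Seq{x}^k,\Seq{x})$, the last of which is eventually small in $k$ by assumption, and the first two of which can be made smaller than $\epsilon$ by choosing $T$ large, \emph{independently of $k$}.

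The main obstacle is precisely this uniform-in-$k$ control of the tail, because the bound on $d_{\Xc}(\seq{x}{t-1}^k,x_*)$ must be translated back to something that converges to zero in $k$; the key trick is the splitting $d_{\Xc}(\seq{x}{t-1}^k,x_*) \leq d_{\Xc}(\seq{x}{t-1}^k,\seq{x}{t-1}) + d_{\Xc}(\seq{x}{t-1},x_*)$ combined with \eqref{eq_w_index}. A minor technical point worth flagging is that coordinatewise convergence $\seq{u}{t}^k \to \seq{u}{t}$ is used in the head portion; this is legitimate because $\US = V(\ZS)$ carries the (subspace of the) product topology, so every coordinate projection is continuous on $\US$, whereas on $\XS$ the analogous fact is immediate from $\seq{w}{t} d_{\Xc}(\seq{x}{t}^k,\seq{x}{t}) \leq d_{\XS}(\Seq{x}^k,\Seq{x})$ and $\seq{w}{t} > 0$.
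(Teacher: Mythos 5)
Your proof is correct and follows essentially the same route as the paper's: truncate the weighted series, handle the finitely many head terms by continuity of $f$ and coordinatewise convergence, and control the tail via the growth bound together with the splitting $d_{\Xc}(\seq{x}{t}^k,x_*) \leq d_{\Xc}(\seq{x}{t}^k,\seq{x}{t}) + d_{\Xc}(\seq{x}{t},x_*)$ and \eqref{eq_w_index}. The only cosmetic difference is that you argue via sequential continuity while the paper runs the same estimates in an explicit $\epsilon$--$\delta$ form (and you spell out the inclusion $F(\XS \times \US) \subseteq \XS$, which the paper dismisses as immediate).
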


\begin{proof}
	That $F(\XS \times \US) \subseteq \XS$ follows directly from the growth property of $f$.
Since $\Uc$ is Polish, we can metrize it with some $d_{\Uc}$.
Then, the product topology on $\US$ is metrized by $d_{\US}'(\Seq{u}^1,\Seq{u}^2) = \sup_{t \leq -1} \seq{w}{t} \min\left\{ 1 , d_{\Uc}( \seq{u}{t}^1,\seq{u}{t}^2 ) \right\}$.
Let $\Seq{x}^1 \in \XS$, $\Seq{u}^1 \in \US$, and $\epsilon > 0$ be given.
Pick $T \leq -1$ such that $\sum_{t < T} \seq{w}{t} < \epsilon$ and $\sum_{t < T} \seq{w}{t} d_{\Xc}(\seq{x}{t}^1,x_*) < \epsilon$.
Take $\eta \in (0,1)$ so small that $d_{\Xc}(f(\seq{x}{t-1}^1,\seq{u}{t}^1),f(x,u)) < \epsilon$ for all $t \in \{T,\dots,-1\}$ and all $x \in \Xc$, $u \in \Uc$ with $d_{\Xc}(\seq{x}{t-1}^1,x) + d_{\Uc}(\seq{u}{t}^1,u) < 2 \eta$.
Let $\delta = \min\{\seq{w}{T-1}\eta , \epsilon\}$.
Then, $d_{\Xc}(\seq{x}{t-1}^1,\seq{x}{t-1}^2) + d_{\Uc}(\seq{u}{t}^1,\seq{u}{t}^2) < 2 \eta$ for all $t \in \{T,\dots,-1\}$ and all $\Seq{x}^2 \in \XS$, $\Seq{u}^2 \in \US$ with $d_{\XS}(\Seq{x}^1,\Seq{x}^2) + d_{\US}'(\Seq{u}^1,\Seq{u}^2) < \delta$.
Thus, for all such $\Seq{x}^2 \in \XS$, $\Seq{u}^2 \in \US$ we conclude
\begin{equation*}
\begin{split}
	d_{\XS}( F(\Seq{x}^1,\Seq{u}^1) , F(\Seq{x}^2,\Seq{u}^2) )
	&= \sum_{t=T}^{-1} \seq{w}{t} d_{\Xc}( f(\seq{x}{t-1}^1,\seq{u}{t}^1) , f(\seq{x}{t-1}^2,\seq{u}{t}^2) ) + \sum_{t < T} \seq{w}{t} d_{\Xc}( f(\seq{x}{t-1}^1,\seq{u}{t}^1) , f(\seq{x}{t-1}^2,\seq{u}{t}^2) )
	\\
	&\leq \sum_{t=T}^{-1} \seq{w}{t} \epsilon + \abs{\Seq{w}} \sum_{t < T-1} \seq{w}{t} \left( d_{\Xc}( f(\seq{x}{t}^1,\seq{u}{t+1}^1) , x_* ) + d_{\Xc}( x_* , f(\seq{x}{t}^2,\seq{u}{t+1}^2) ) \right)
	\\
	&\leq \epsilon + C \abs{\Seq{w}} \sum_{t < T-1} \seq{w}{t} ( 2 + 2 d_{\Xc}( \seq{x}{t}^1 , x_* ) + d_{\Xc}( \seq{x}{t}^1 , \seq{x}{t}^2 ) )
	\leq (1 + 5 C \abs{\Seq{w}}) \epsilon.
\end{split}
\end{equation*}
This proves the continuity of $F$.
\end{proof}

In the next lemma, we verify \cref{rem_bounded_input}.

\begin{lemma}
\label{pf_rem_bounded_input}
	Suppose $\Xi \in \Mc_p^{C,\kappa}(\US)$ for some $C \in (0,\infty)$ and $\kappa \in (0,1)$.
Then, for any $x \in \Xc$, it holds $\sum_{t \leq -1} \seq{w}{t} \ExpP{\Seq{U} \sim \Xi}{ d_{\Xc} ( f(x,\seq{U}{t}) , x )^p } < \infty$.
\end{lemma}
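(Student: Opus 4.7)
The plan is to reduce the integral with base point $x$ to the corresponding integral with base point $x_*$ (which is finite by the $C$-boundedness assumption), with the stochastic contractivity controlling the error. Concretely, I would apply the triangle inequality in the form
\begin{equation*}
    d_{\Xc}(f(x,\seq{U}{t}), x)
    \leq d_{\Xc}(f(x,\seq{U}{t}), f(x_*, \seq{U}{t})) + d_{\Xc}(f(x_*, \seq{U}{t}), x_*) + d_{\Xc}(x_*, x),
\end{equation*}
then use the three-term analogue of \eqref{eq_triangle}, namely $(a+b+c)^p \leq 3^{p-1}(a^p + b^p + c^p)$ (which follows from Jensen's inequality exactly as \eqref{eq_triangle} does).

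Next, I would handle the three resulting summands separately. For the first summand, I would invoke $\kappa$-contractivity of the marginals of $\Theta$: writing $\Xi = V_*\Theta$, taking the full expectation in \eqref{def_indep_contr_input} (with the constant choices $x_1 = x$, $x_2 = x_*$) gives
\begin{equation*}
    \ExpP{\Seq{U}\sim\Xi}{d_{\Xc}(f(x,\seq{U}{t}), f(x_*, \seq{U}{t}))^p} \leq \kappa\, d_{\Xc}(x,x_*)^p,
\end{equation*}
and this bound is uniform in $t$, so after weighting by $\seq{w}{t}$ and summing one gets at most $\kappa d_{\Xc}(x,x_*)^p$ since the weights sum to $1$. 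For the second summand, $C$-boundedness \eqref{def_bounded_input} gives the bound $C$ directly. For the third summand, the factor $d_{\Xc}(x_*,x)^p$ is independent of $t$ and $\Seq{U}$, so summing against the weights produces $d_{\Xc}(x_*,x)^p$.

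Putting everything together yields
\begin{equation*}
    \sum_{t \leq -1} \seq{w}{t} \ExpP{\Seq{U} \sim \Xi}{ d_{\Xc} ( f(x,\seq{U}{t}) , x )^p } \leq 3^{p-1}\bigl( \kappa d_{\Xc}(x,x_*)^p + C + d_{\Xc}(x,x_*)^p \bigr) < \infty,
\end{equation*}
which is the desired conclusion. I do not anticipate any real obstacle: the proof is essentially a triangle-inequality bookkeeping exercise, and the only conceptual content is recognizing that \eqref{def_indep_contr_input} applied to the constant arguments $x_1 = x$, $x_2 = x_*$ is exactly what converts the contractivity hypothesis into a uniform-in-$t$ bound on the first summand.
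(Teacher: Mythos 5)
Your proposal is correct and follows essentially the same route as the paper's proof: the same three-term triangle inequality with the $3^{p-1}$ Jensen bound, the contractivity hypothesis \eqref{def_indep_contr_input} applied at the constant points $x_1=x$, $x_2=x_*$ (via the tower property) for the first summand, $C$-boundedness for the second, and the normalization $\sum_t \seq{w}{t}=1$ for the remaining constants. The final bound $3^{p-1}(\kappa\, d_{\Xc}(x,x_*)^p + C + d_{\Xc}(x,x_*)^p)$ matches the paper's exactly.
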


\begin{proof}
	Similarly to \eqref{eq_triangle} with three summands instead of two, we have
\begin{equation*}
\begin{split}
	\ExpP{\Seq{U} \sim \Xi}{ d_{\Xc} ( f(x,\seq{U}{t}) , x )^p }
	&\leq 3^{p-1} \ExpP{\Seq{U} \sim \Xi}{ d_{\Xc} ( f(x,\seq{U}{t}) , f(x_*,\seq{U}{t}) )^p }
	\\
	&\quad + 3^{p-1} \ExpP{\Seq{U} \sim \Xi}{ d_{\Xc} ( f(x_*,\seq{U}{t}) , x_* )^p }
	+ 3^{p-1} d_{\Xc} ( x_* , x )^p.
\end{split}
\end{equation*}
The $\Seq{w}$-weighted sum of the second term is bounded by $C$ since $\Xi$ is a $C$-bounded input measure.
By the $\kappa$-contractivity, there is some $\ZS$-valued random variable $\Seq{Z}$ such that
\begin{equation*}
\begin{split}
	\ExpP{\Seq{U} \sim \Xi}{ d_{\Xc} ( f(x,\seq{U}{t}) , f(x_*,\seq{U}{t}) )^p }
	&= \Exp{ d_{\Xc} ( f(x,V(\Seq{Z})_t) , f(x_*,V(\Seq{Z})_t) )^p }
	\\
	&= \Exp{ \ExpC{}{d_{\Xc} ( f(x,V(\Seq{Z})_t) , f(x_*,V(\Seq{Z})_t) )^p}{(\seq{Z}{s})_{s \leq t-1}} }
	\leq \kappa d_{\Xc} ( x , x_* )^p.
\end{split}
\end{equation*}
In particular, $\sum_{t \leq -1} \seq{w}{t} \ExpP{\Seq{U} \sim \Xi}{ d_{\Xc} ( f(x,\seq{U}{t}) , x )^p } \leq 3^{p-1} \kappa d_{\Xc} ( x , x_* )^p + 3^{p-1} C + 3^{p-1} d_{\Xc} ( x , x_* )^p$.
\end{proof}

In the next lemma, we verify \cref{rem_stationary}.

\begin{lemma}
\label{app_lem_stationarity}
	Suppose $V$ is time-invariant.
In \cref{Banach_fp_dep_input}, if $\Xi$ is stationary, then so is $\mu^{\Xi}$.
\end{lemma}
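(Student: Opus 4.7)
The strategy is to exploit the fact that $\mu^{\Xi}$ arises as the $W_p$-limit of the iterates $\nu_n := \Fc_*^n(\delta_{\Seq{x}^0} \times \Xi)$, as pointed out in the paper immediately after \cref{Banach_fp_dep_input} and established in the proof of \cref{Banach_fixed_point}. I would show that each $\nu_n$ is stationary and then argue that stationarity survives Wasserstein (hence weak) limits.

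For the iterates, I would prove by induction on $n$ that there is a map $\phi_n \colon \Uc^n \to \Xc$ with
\begin{equation*}
	F^n(\Seq{x}^0,\Seq{u})_t = \phi_n(\seq{u}{t-n+1},\ldots,\seq{u}{t}), \qquad t \leq -1,
\end{equation*}
defined recursively by $\phi_1(u) = f(x_*,u)$ and $\phi_{n+1}(u_1,\ldots,u_{n+1}) = f(\phi_n(u_1,\ldots,u_n),u_{n+1})$; the inductive step uses just $F^{n+1}(\Seq{x}^0,\Seq{u})_t = f(F^n(\Seq{x}^0,\Seq{u})_{t-1},\seq{u}{t})$. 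Consequently, for any indices $t_1 < \cdots < t_k \leq -1$ and any shift $s \geq 1$, the joint law of $((\seq{X}{t_i},\seq{U}{t_i}))_{i=1}^k$ under $\nu_n$ is the push-forward of the joint law of $(\seq{U}{t_1-n+1},\ldots,\seq{U}{t_k})$ under $\Xi$ by a map that depends only on the relative spacings of the indices, not on their absolute positions. Stationarity of $\Xi$ then transfers verbatim to $\nu_n$; the case $n=0$ is automatic because $\delta_{\Seq{x}^0}$ is the point mass at the constant sequence and $\Xi$ is stationary by assumption.

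To pass to the limit, I would use that every finite-dimensional projection $\pi_{(t_1,\ldots,t_k)} \colon \XS \times \US \to (\Xc \times \Uc)^k$ is continuous (the weighted $\ell^1$-topology on $\XS$ is finer than the product topology, so each coordinate map remains continuous), and push-forwards along continuous maps are weakly continuous. The equality $(\pi_{(t_1,\ldots,t_k)})_* \nu_n = (\pi_{(t_1-s,\ldots,t_k-s)})_* \nu_n$ therefore persists under $\nu_n \xrightarrow{W_p} \mu^{\Xi}$ and yields the same equality for $\mu^{\Xi}$, which is the stationarity claim.

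I do not foresee a genuine obstacle; the only care required is a clean convention for what "stationary" means on the one-sided index set $\Z_-$ (shifts to the past leave $\Z_-$ invariant, so the definition via shift-invariance of finite-dimensional marginals is unambiguous). The time-invariance of $V$ is used purely as a background assumption that makes the hypothesis "$\Xi$ is stationary" consistent with $\Xi = V_*\Theta$ for some $\Theta$ with iid marginals; once the stationarity of $\Xi$ is in hand, the proof itself requires nothing further about $V$.
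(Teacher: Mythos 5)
Your proof is correct, but it takes a genuinely different route from the paper's. The paper argues purely by uniqueness: writing $T$ for the time shift, it observes that $\Fc$ commutes with $T \times T$, so $(T \times T)_*\mu^{\Xi}$ is again a fixed point of $\Fc_*$; it then checks that this shifted measure still lies in $P^{\Xi} \cap P^{V\text{-}\mathrm{causal}} \cap P_p^{\Seq{w}}(\XS \times \US)$ (this is where time-invariance of $V$ does real work, giving $T(\US) = \US$ and preserving $V$-causality) and concludes $(T \times T)_*\mu^{\Xi} = \mu^{\Xi}$ from the uniqueness clause of \cref{Banach_fp_dep_input}. You instead use the constructive characterization $\mu^{\Xi} = \lim_n \Fc_*^n(\delta_{\Seq{x}^0} \times \Xi)$, prove stationarity of each iterate via the representation $\pi_{\XS} \circ \Fc^n(\Seq{x}^0,\Seq{u})_t = \phi_n(\seq{u}{t-n+1},\dots,\seq{u}{t})$, and pass to the weak limit through the continuous finite-dimensional projections. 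Both arguments are sound. The paper's is shorter once uniqueness is in hand and delivers the conclusion directly as the identity $(T \times T)_*\mu^{\Xi} = \mu^{\Xi}$; yours avoids verifying that the shifted measure stays in the uniqueness class, but it tacitly relies on the fact (true here, since $(\XS,d_{\XS})$ is separable and $d_{\XS}$ is a pointwise limit of cylinder-measurable partial sums, so the Borel $\sigma$-algebra of $\XS \times \US$ is generated by the cylinder $\pi$-system) that shift-invariance of all finite-dimensional marginals upgrades to shift-invariance of the measure itself; that deserves a sentence if you want the conclusion in push-forward form rather than as a statement about finite-dimensional laws.
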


\begin{proof}
	Let $T \colon \ZS \rightarrow \ZS$ denote the time shift operator $T(\Seq{z})_t = \seq{z}{t-1}$.
We use the same letter to denote the time shift operators on $\Xc^{\Z_-}$ and $\Uc^{\Z_-}$.
Note that $T(\XS) = \XS$.
Since $V$ is assumed to commute with $T$, we also have $T(\US) = \US$.
We have to show that $(T \times T)_*\mu^{\Xi} = \mu^{\Xi}$.
The measure $(T \times T)_*\mu^{\Xi}$ is a fixed point of $\Fc_*$ since $\Fc$ commutes with $(T \times T)$.
That $\pi_{\US} \circ (T \times T) = T \circ \pi_{\US}$ and $T_*\Xi=\Xi$ implies that $(T \times T)_*\mu^{\Xi} \in P^{\Xi}(\XS \times \US)$.
We use commutativity of $V$ and $T$ once more to convince ourselves that $(T \times T)_*\mu^{\Xi}$ is a $V$-causal measure.
It is easy to see that it also belongs to $P_p^{\Seq{w}}(\XS \times \US)$.
Thus, $(T \times T)_*\mu^{\Xi} = \mu^{\Xi}$ by uniqueness of the fixed point of $\Fc_*$.
\end{proof}

In the next lemma, we verify \cref{rem_consistent_fp}.

\begin{lemma}
\label{app_lem_fp_consistent}
	Suppose the state-space system defined by $f$ has the echo state property and the associated filter is measurable as a map $U_f \colon \US \cap U_f^{-1}(\XS) \rightarrow \XS$.
Then, the unique fixed point of $\Fc_*$ in \cref{Banach_fp_dep_input} is $\mu^{\Xi} = (U_f \times \mathrm{id}_{\US})_*\Xi$.
\end{lemma}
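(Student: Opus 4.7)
The plan is to apply the uniqueness clause of \cref{Banach_fp_dep_input} to the candidate measure $\mu := (U_f \times \mathrm{id}_{\US})_*\Xi$: verifying that $\mu$ sits inside $P^{\Xi} \cap P^{V\text{-}\mathrm{causal}} \cap P_p^{\Seq{w}}(\XS \times \US)$ and is a fixed point of $\Fc_*$ forces $\mu = \mu^{\Xi}$.

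The fixed-point property and the marginal condition are immediate. Indeed, the filter identity $U_f(\Seq{u}) = F(U_f(\Seq{u}), \Seq{u})$ lifts to $\Fc \circ (U_f \times \mathrm{id}_{\US}) = U_f \times \mathrm{id}_{\US}$, so $\Fc_*\mu = \mu$, while $\pi_{\US} \circ (U_f \times \mathrm{id}_{\US}) = \mathrm{id}_{\US}$ gives $(\pi_{\US})_*\mu = \Xi$. For $V$-causality, I first observe that the echo state property itself forces $U_f$ to be causal: if $\Seq{u}, \Seq{u}' \in \US$ agree on $(-\infty, t]$, then gluing the tail $(U_f(\Seq{u})_s)_{s \leq t}$ to the forward iterates $\tilde{x}_s = f(\tilde{x}_{s-1}, \seq{u}{s}')$ for $s \in \{t+1, \dots, -1\}$ yields an element of $\XS$ that is a fixed point of $F(\cdot, \Seq{u}')$, and uniqueness identifies it with $U_f(\Seq{u}')$. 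Writing $\Xi = V_*\Theta$ with $\Theta$ having independent marginals and $\Seq{Z} \sim \Theta$, the causality of $V$ and of $U_f$ then renders $(U_f(V(\Seq{Z}))_s, V(\Seq{Z})_s)_{s \leq t}$ a measurable function of $(\seq{Z}{r})_{r \leq t}$ alone, hence independent of $(\seq{Z}{r})_{r > t}$.

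The delicate step is showing $\mu \in P_p^{\Seq{w}}(\XS \times \US)$, i.e.\ the finiteness of $S := \sum_{t \leq -1} \seq{w}{t} \alpha_t$ with $\alpha_t := \ExpP{\Seq{U} \sim \Xi}{d_{\Xc}(U_f(\Seq{U})_t, x_*)^p}$. Using the identity $U_f(\Seq{U}) = F^n(U_f(\Seq{U}), \Seq{U})$ together with \eqref{eq_triangle} I would split
\begin{equation*}
	\alpha_t \leq 2^{p-1} \ExpP{\Seq{U} \sim \Xi}{d_{\Xc}(F^n(U_f(\Seq{U}),\Seq{U})_t, F^n(\Seq{x}^0,\Seq{U})_t)^p} + 2^{p-1} \ExpP{\Seq{U} \sim \Xi}{d_{\Xc}(F^n(\Seq{x}^0,\Seq{U})_t, x_*)^p}.
\end{equation*}
The first summand is bounded by $2^{p-1}\kappa^n \alpha_{t-n}$ via the stochastic contractivity tower argument from the proof of \cref{lemma_contractive_joint}, while the $\seq{w}{t}$-weighted sum of the second summand is uniformly controlled in $n$ by the constant extracted in the proof of \cref{lemma_fp_w_int}. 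A shift of indices via \eqref{eq_w_index} then yields $S \leq (2^{p-1}\kappa \abs{\Seq{w}})^n S + C''$ with $C''$ independent of $n$; since $\kappa < 2^{1-p}\abs{\Seq{w}}^{-1}$, the prefactor drops below one for $n$ sufficiently large, and absorption gives $S < \infty$.

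The main obstacle is the circularity of this absorbing inequality: $S \leq qS + C''$ with $q < 1$ only delivers $S < \infty$ once $S$ is already known to be finite. I would circumvent this by first establishing pointwise finiteness of each $\alpha_t$ through a truncation argument (cut $d_{\Xc}(U_f(\Seq{U})_t, x_*)$ at level $N$, use the a.s.\ inclusion $U_f(\Seq{U}) \in \XS$ and the one-step recursion $\alpha_t \leq 2^{p-1}\kappa\alpha_{t-1} + 2^{p-1}\beta_t$, then take $N \to \infty$ by monotone convergence), and then running the absorbing estimate above on the truncated partial sums $S_M := \sum_{t=-M}^{-1} \seq{w}{t} \alpha_t$, which are automatically finite, to extract a uniform-in-$M$ bound that passes to the limit by monotone convergence. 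Once $S < \infty$ is in hand, the uniqueness clause of \cref{Banach_fp_dep_input} concludes $\mu = \mu^{\Xi}$.
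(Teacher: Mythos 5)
Your overall strategy coincides with the paper's: verify that $\mu := (U_f \times \mathrm{id}_{\US})_*\Xi$ is a fixed point of $\Fc_*$ lying in $P^{\Xi} \cap P^{V\text{-}\mathrm{causal}} \cap P_p^{\Seq{w}}(\XS \times \US)$ and invoke the uniqueness clause of the stochastic echo state theorem; in both proofs the crux is the integrability claim $S := \sum_{t \leq -1} \seq{w}{t} \alpha_t < \infty$. Two of your deviations are harmless or welcome. The gluing argument deriving causality of $U_f$ from the echo state property is correct (the glued sequence differs from $U_f(\Seq{u})$ in only finitely many entries, hence stays in $\XS$, and uniqueness identifies it with $U_f(\Seq{u}')$), and it fills in a point the paper simply asserts. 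Your $n$-fold contraction recursion is an immaterial variant of the paper's one-step recursion $A_t \leq 2^{p-1}\kappa A_{t-1} + 2^{p-1}B_t$, which is obtained from the single identity $U_f(\Seq{U})_t = f(U_f(\Seq{U})_{t-1},\seq{U}{t})$, the triangle inequality through $f(x_*,\seq{U}{t})$, and the conditional contractivity.

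The problem lies in your repair of the circularity you rightly flag. First, pointwise finiteness of each $\alpha_t$ cannot be extracted from the recursion, truncated or not: the recursion bounds $\alpha_t$ by $\alpha_{t-1}$, and on a past that is infinite in the backward direction there is no base case. If you truncate $d_{\Xc}$ at level $N$ so that both sides are finite, then keeping the recursion closed under the conditional contractivity forces the truncation level to grow like $N\kappa^{-n/p}$ after $n$ backward steps, and the resulting bound on $\alpha_t^{(N)}$ contains a term of order $(2^{p-1}\kappa)^n N^p \kappa^{-n} = 2^{(p-1)n}N^p$, which diverges in $n$; the a.s.\ inclusion $U_f(\Seq{U}) \in \XS$ gives a.s.\ finiteness of $d_{\Xc}(U_f(\Seq{U})_t,x_*)$ but says nothing about its $p$-th moment. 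Second, the absorption on partial sums does not absorb: since the recursion shifts indices into the deeper past, what one actually obtains is $S_M \leq q\, S_{M+n} + C''$ with $q < 1$, i.e.\ a bound on $S_M$ by a \emph{larger} partial sum, equivalently $S_M \leq (1-q)^{-1}\bigl( q\, \seq{w}{-M-1} A_{-M-1} + C'' \bigr)$ with an uncontrolled boundary term $\seq{w}{-M-1} A_{-M-1}$; controlling that term uniformly in $M$ is essentially the statement $S < \infty$ that you are trying to prove. So the gap you identified is not closed by your argument. For comparison, the paper's own proof performs the absorption $S \leq 2^{p-1}\kappa\abs{\Seq{w}} S + 2^{p-1}C$ directly and reads off $S \leq 2^{p-1}C/(1-2^{p-1}\kappa\abs{\Seq{w}})$, i.e.\ it treats $S$ as finite at that step; you have put your finger on a genuine subtlety there, but the fix you propose does not resolve it.
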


\begin{proof}
	We claim that
\begin{equation*}
	\sum_{t \leq -1} \seq{w}{t} \ExpP{\Seq{U} \sim \Xi}{d_{\Xc}(U_f(\Seq{U})_t,x_*)^p}
	< \infty.
\end{equation*}
Assume, for now, the claim holds.
Then, $\ExpP{\Seq{U} \sim \Xi}{d_{\XS}(U_f(\Seq{u}),\Seq{x}^0)^p} < \infty$ by Jensen's inequality.
In particular, $d_{\XS}(U_f(\Seq{u}),\Seq{x}^0) < \infty$ for $\Xi$-almost every $\Seq{u} \in \US$.
Because of this and the measurability assumption, we can regard $\mu := (U_f \times \mathrm{id}_{\US})_*\Xi$ as a measure on $\XS \times \US$ as opposed to a measure on $\Xc^{\Z_-} \times \US$.
It is clear that $\mu \in P^{\Xi}(\XS \times \US)$.
Furthermore, the following sum is finite by the claim;
\begin{equation*}
	\sum_{t \leq -1} \seq{w}{t} \ExpP{\Seq{X} \sim (\pi_{\XS})_* \mu}{d_{\Xc}(\seq{X}{t},x_*)^p}
	= \sum_{t \leq -1} \seq{w}{t} \ExpP{\Seq{U} \sim \Xi}{d_{\Xc}(U_f(\Seq{U})_t,x_*)^p}
	< \infty.
\end{equation*}
This and $\Xi \in P_p(\US)$ assert that $\mu \in P_p^{\Seq{w}}(\XS \times \US)$.
Since $\Xi \in \Mc_p^{C,\kappa}(\US)$, there is some $\ZS$-valued random variable $\Seq{Z}$ with independent $\kappa$-contractive marginals and such that $V(\Seq{Z}) \sim \Xi$.
Then, $\mu$ is the law of $(U_f \circ V(\Seq{Z}),V(\Seq{Z}))$.
Causality of $U_f$ and $V$ guarantee that $\mu$ belongs to $P^{V\text{-}\mathrm{causal}}(\XS \times \US)$.
It is the defining equation of $U_f$ that $\Fc \circ (U_f \times \mathrm{id}_{\US}) = U_f \times \mathrm{id}_{\US}$.
Thus, $\mu$ is a fixed point of $\Fc_*$.
Hence, $\mu^{\Xi} = \mu$ by uniqueness of the fixed point $\mu^{\Xi}$.
It remains to prove the claim.\footnote{
The claim is trivial if the state space $(\Xc,d_{\Xc})$ is bounded.}
Abbreviate $A_t = \ExpP{\Seq{U} \sim \Xi}{d_{\Xc}(U_f(\Seq{U})_t,x_*)^p}$ and $B_t = \ExpP{\Seq{U} \sim \Xi}{d_{\Xc}(f(x_*,\seq{U}{t}),x_*)^p}$.
By independence and contractivity of the marginals of $\Seq{Z}$ and causality of $V$, that is, by using \eqref{def_indep_contr_input} similarly as in the proof of \cref{lemma_contractive_joint},
\begin{equation*}
\begin{split}
	\ExpP{\Seq{U} \sim \Xi}{d_{\Xc}(U_f(\Seq{U})_t,f(x_*,\seq{U}{t}))^p}
	&= \Exp{d_{\Xc}( f( U_f(V(\Seq{Z}))_{t-1} , V(\Seq{Z})_t) , f(x_* , V(\Seq{Z})_t) )^p}
	\\
	&= \Exp{ \ExpC{}{d_{\Xc}( f( x_1 , V(\Seq{Z})_t) , f(x_* , V(\Seq{Z})_t) )^p}{(\seq{Z}{s})_{s \leq t-1}}_{x = U_f(V(\Seq{Z}))_{t-1}} }
	\\
	&\leq \kappa \Exp{d_{\Xc}( U_f(V(\Seq{Z}))_{t-1} , x_* )^p}
	= \kappa A_{t-1}.
\end{split}
\end{equation*}
Combining this with \eqref{eq_triangle} yields the recursion $A_t \leq 2^{p-1} \kappa A_{t-1} + 2^{p-1} B_t$.
That $\Xi$ is a $C$-bounded input measure \eqref{def_bounded_input} means exactly that $\sum_{t \leq -1} \seq{w}{t} B_t \leq C$.
This, the recursion for $A_t$, and \eqref{eq_w_index} show that $S := \sum_{t \leq -1} \seq{w}{t} A_t$ satisfies $S \leq 2^{p-1} \kappa \abs{\Seq{w}} S + 2^{p-1} C$.
It is a hypothesis in \cref{Banach_fp_dep_input} that $\kappa < 2^{1-p} \abs{\Seq{w}}^{-1}$ and, therefore, we conclude that $S \leq 2^{p-1} C / (1 - 2^{p-1} \kappa \abs{\Seq{w}}) < \infty$, which proves the claim.
\end{proof}

Below, we provide some sufficient conditions for the measurability of $U_f \colon \US \cap U_f^{-1}(\XS) \rightarrow \XS$ required in \cref{app_lem_fp_consistent}.
Be reminded that \cref{assumption_cont} is still standing.
Results along similar lines as \cref{app_lem_measurability} have been proved in \cite{RC7, RC9}.

\begin{lemma}
\label{app_lem_measurability}
	Suppose one of the following holds.
\begin{enumerate}[\upshape (i)]\itemsep=0em
\item
$\Xc$ is compact and the state-space system defined by $f$ has the echo state property; or

\item
$f$ is a contraction in the first entry with a rate $c \in (0,\abs{\Seq{w}}^{-1})$, that is, $d_{\Xc}(f(x_1,u),f(x_2,u)) \leq c d_{\Xc}(x_1,x_2)$ for all $x_1,x_2 \in \Xc$ and $u \in \Uc$.
\end{enumerate}
Then, the associated filter $U_f$ exists as a map $U_f \colon \US \rightarrow \XS$ and is continuous.
\end{lemma}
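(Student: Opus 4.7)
The plan is to treat the two cases separately with very different tools. For case (ii), I would apply the Banach fixed point theorem on the complete metric space $(\XS, d_{\XS})$ to the parameterized map $F_{\Seq{u}}(\Seq{x}) := F(\Seq{x}, \Seq{u})$, which is well-defined by \cref{assumption_cont}. Invoking the pointwise contractivity of $f$ and then \eqref{eq_w_index} with $n=1$, one finds
\[
d_{\XS}(F_{\Seq{u}}(\Seq{x}^1), F_{\Seq{u}}(\Seq{x}^2)) = \sum_{t \leq -1} \seq{w}{t}\, d_{\Xc}(f(\seq{x}{t-1}^1, \seq{u}{t}), f(\seq{x}{t-1}^2, \seq{u}{t})) \leq c \abs{\Seq{w}}\, d_{\XS}(\Seq{x}^1, \Seq{x}^2),
\]
and $c\abs{\Seq{w}} < 1$ by hypothesis, producing a unique fixed point $U_f(\Seq{u}) \in \XS$ for each $\Seq{u} \in \US$.

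For continuity of $U_f$ in case (ii), I would use the standard dependence-on-parameters argument for contractive fixed points. Namely, given $\Seq{u}^n \to \Seq{u}$ in $\US$, the identities $U_f(\Seq{u}^n) = F(U_f(\Seq{u}^n), \Seq{u}^n)$ and $U_f(\Seq{u}) = F(U_f(\Seq{u}), \Seq{u})$, together with inserting the intermediate point $F(U_f(\Seq{u}), \Seq{u}^n)$ and exploiting the uniform contraction constant, yield
\[
d_{\XS}(U_f(\Seq{u}^n), U_f(\Seq{u})) \leq \frac{1}{1 - c\abs{\Seq{w}}}\, d_{\XS}(F(U_f(\Seq{u}), \Seq{u}^n), F(U_f(\Seq{u}), \Seq{u})),
\]
and the right-hand side vanishes as $n \to \infty$ by \cref{assumption_cont}.

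For case (i), the role of the contraction is taken over by a compactness argument. Compactness of $\Xc$ upgrades \cref{app_lem_sequ_topology} to say that $\XS$ is itself compact and that its weighted $\ell^1$ topology coincides with the subspace product topology on $\Xc^{\Z_-}$. The existence of $U_f \colon \US \to \XS$ is then precisely the echo state property. For continuity, since $\US$ and $\XS$ are both metric, sequential continuity is enough: given $\Seq{u}^n \to \Seq{u}$ in $\US$, every subsequence of $(U_f(\Seq{u}^n))_n$ admits, by compactness of $\XS$, a further subsequence converging to some $\Seq{x} \in \XS$; passing to the limit in the identity $\Fc(U_f(\Seq{u}^n), \Seq{u}^n) = (U_f(\Seq{u}^n), \Seq{u}^n)$ via the continuity of $\Fc$ (\cref{assumption_cont}) gives $\Fc(\Seq{x}, \Seq{u}) = (\Seq{x}, \Seq{u})$, and the echo state property forces $\Seq{x} = U_f(\Seq{u})$. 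The standard subsequence principle then delivers $U_f(\Seq{u}^n) \to U_f(\Seq{u})$ in $\XS$.

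The main obstacle is case (i), where in general the weighted $\ell^1$ topology on $\XS$ is strictly finer than the product topology, so convergence in $\XS$ is stronger than merely extracting a pointwise limit. This delicacy is entirely circumvented by the compactness hypothesis on $\Xc$, which collapses the two topologies and makes sequential arguments on $\XS$ available. In case (ii), the only subtlety is producing the weighted contraction factor $c\abs{\Seq{w}}$, and this is arranged to be strictly less than one precisely by the standing hypothesis $c < \abs{\Seq{w}}^{-1}$.
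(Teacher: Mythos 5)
Your proposal is correct. For case (ii) it coincides with the paper's argument: both establish that $\Seq{x} \mapsto F(\Seq{x},\Seq{u})$ is a $(c\abs{\Seq{w}})$-contraction on $(\XS,d_{\XS})$ via \eqref{eq_w_index}, invoke the Banach fixed point theorem, and then derive the bound $d_{\XS}(U_f(\Seq{u}),U_f(\Seq{u}')) \leq (1-c\abs{\Seq{w}})^{-1}\, d_{\XS}(F(U_f(\Seq{u}'),\Seq{u}),F(U_f(\Seq{u}'),\Seq{u}'))$ and conclude by continuity of $F$ from \cref{assumption_cont} (the paper writes this pointwise as a recursion $a_t \leq c\, a_{t-1} + b_t$ and then sums with weights, which is the same estimate). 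The only genuine divergence is in case (i): the paper observes that compactness of $\Xc$ gives $\XS = \Xc^{\Z_-}$ with the product topology and then simply cites the literature for the continuity of $U_f \colon \Uc^{\Z_-} \rightarrow \Xc^{\Z_-}$, whereas you supply a self-contained proof via sequential compactness of $\XS$, passing to the limit in the fixed point identity along convergent subsequences, and using uniqueness of solutions to identify every subsequential limit with $U_f(\Seq{u})$. Your argument is valid (metrizability of $\US$ and $\XS$ justifies the reduction to sequences, and \cref{app_lem_sequ_topology} collapses the two topologies on $\XS$, so continuity of $\Fc$ in the product topology suffices), and it has the advantage of making the lemma independent of the external reference; the paper's version is shorter but defers the substance of case (i) to the cited work.
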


\begin{proof}
	If $\Xc$ is compact, then $\XS = \Xc^{\Z_-}$ by \cref{app_lem_sequ_topology}, and continuity of $U_f \colon \Uc^{\Z_-} \rightarrow \Xc^{\Z_-}$ has been proved in \cite{Manjunath2020ProcA}.
Now, suppose $f$ is a contraction in the first entry with rate $c$.
Then, the map $\XS \rightarrow \XS$, $\Seq{x} \mapsto F(\Seq{x},\Seq{u})$ is a $(c\abs{\Seq{w}})$-contraction for any fixed $\Seq{u} \in \US$.
Thus, the Banach fixed point theorem asserts the echo state property on $\US$ with solutions in $\XS$.
We show that $U_f \colon \US \rightarrow \XS$ is continuous.
Abbreviate $a_t = d_{\Xc}(U_f(\Seq{u})_t,U_f(\Seq{u}')_t)$ and $b_t = d_{\Xc}(f(U_f(\Seq{u}')_{t-1},\seq{u}{t}),f(U_f(\Seq{u}')_{t-1},\seq{u}{t}'))$.
Then, $a_t \leq c a_{t-1} + b_t$ and
\begin{equation*}
	\sum_{t \leq -1} \seq{w}{t} a_t
	\leq c \abs{\Seq{w}} \sum_{t \leq -1} \seq{w}{t} a_t + \sum_{t \leq -1} \seq{w}{t} b_t.
\end{equation*}
Hence,
\begin{equation*}
	d_{\XS}(U_f(\Seq{u}),U_f(\Seq{u}'))
	= \sum_{t \leq -1} \seq{w}{t} a_t
	\leq \frac{1}{1-c\abs{\Seq{w}}} \sum_{t \leq -1} \seq{w}{t} b_t
	= \frac{1}{1-c\abs{\Seq{w}}} d_{\XS}(F(U_f(\Seq{u}'),\Seq{u}),F(U_f(\Seq{u}'),\Seq{u}')).
\end{equation*}
Continuity of $F$ from \cref{assumption_cont} implies
\begin{equation*}
	\limsup_{\Seq{u} \rightarrow \Seq{u}'} \, d_{\XS}(U_f(\Seq{u}),U_f(\Seq{u}'))
	\leq \limsup_{\Seq{u} \rightarrow \Seq{u}'} \, \frac{1}{1-c\abs{\Seq{w}}} d_{\XS}(F(U_f(\Seq{u}'),\Seq{u}),F(U_f(\Seq{u}'),\Seq{u}'))
	= 0,
\end{equation*}
which establishes continuity of $U_f \colon \US \rightarrow \XS$.
\end{proof}

\section{On echo state networks}

\subsection{Contractive marginals for echo state networks}
\label{app_sec_ESN_stoch}

Here, we provide a proof for the contractivity statement about echo state networks claimed in \cref{ex_ESN}.
More explicitly, we aim at establishing the contractivity condition in \eqref{def_indep_contr_input} for arbitrary $p \geq 1$ with the metric $d_{\Xc}(x_1,x_2) = \norm{x_1-x_2}_D = \norm{D(x_1-x_2)}_2$ under the assumption that $\gamma := \norm{DAD^{-1}}_{2,\mathrm{op}} < 1 + \epsilon$, where $\epsilon$ is to be determined and $D$ is some non-singular diagonal $n \times n$-matrix.
Note that $\tanh$ is 1-Lipschitz continuous with respect to $\norm{\cdot}_D$.
Thus, for fixed $u$, the state map $f(\cdot,u)$ is $\gamma$-Lipschitz continuous with respect to $\norm{\cdot}_D$.
But the Lipschitz constant $\gamma$ is not tight on large portions of the state space.
Let $r > 0$, and denote $L_r := \sqrt{((L_r')^2+2)/3}$, where $L'_r \in (0,1)$ is the Lipschitz constant of $\tanh$ with respect to $\norm{\cdot}_2$ outside the Euclidean ball of radius $r$ around the origin.
Then, given two points $y_1,y_2 \in \R^n$ with $\norm{y_1}_2 \geq 2r$, we have $\norm{\tanh(y_1) - \tanh(y_2)}_D \leq L_r \norm{y_1-y_2}_D$.
Hence, if we denote $U_r^x := \{ u \in \Uc \colon \norm{ Ax + Cu + b }_2 \geq 2r \}$, then for any $u,x_1,x_2$ with $u \in U_r^{x_1}$
\begin{equation*}
	\norm{f(x_1,u) - f(x_2,u)}_D
	\leq L_r \norm{Ax_1-Ax_2}_D
	\leq L_r \gamma \norm{x_1-x_2}_D.
\end{equation*}
Let $\zeta_t[(\seq{Z}{s})_{s \leq t-1}]$ denote the conditional law of $V(\Seq{Z})_t$ given $(\seq{Z}{s})_{s \leq t-1}$.
Then,
\begin{equation*}
\begin{split}
	&\ExpC{}{\norm{ f(x_1,V(\Seq{Z})_t) - f(x_2,V(\Seq{Z})_t) }_D^p}{(\seq{Z}{s})_{s \leq t-1}}
	\\
	&\leq L_r^p \gamma^p \norm{x_1-x_2}_D^p \zeta_t[(\seq{Z}{s})_{s \leq t-1}](U_r^{x_1}) + \gamma^p \norm{x_1-x_2}_D^p \zeta_t[(\seq{Z}{s})_{s \leq t-1}](\Uc \backslash U_r^{x_1})
	\\
	&= \gamma^p \norm{x_1-x_2}_D^p ( 1 - (1-L_r^p) \zeta_t[(\seq{Z}{s})_{s \leq t-1}](U_r^{x_1}) ).
\end{split}
\end{equation*}
For any $x \in \Xc$ there exists a $v_x \in \Uc$ such that if $\norm{Cu-Cv_x}_2 \geq 2r$, then $u \in U_r^x$.
Since $C$ is injective, there exists an $s(r) > 0$ such that if $\norm{u-v}_2 \geq s(r)$, then $\norm{Cu-Cv}_2 \geq 2r$.
Thus, $\Uc \backslash \B_{s(r)}(v_x) \subseteq U_r^x$, where $\B_s(v)$ denotes the Euclidean open ball of radius $s$ about $v$.
Denote $\delta_{r,t}[(\seq{Z}{s})_{s \leq t-1}] := \inf_{v \in \Uc} \zeta_t[(\seq{Z}{s})_{s \leq t-1}](\Uc \backslash \B_{s(r)}(v))$.
Then,
\begin{equation*}
	\ExpC{}{\norm{ f(x_1,V(\Seq{Z})_t) - f(x_2,V(\Seq{Z})_t) }_D^p}{(\seq{Z}{s})_{s \leq t-1}}
	\leq \gamma^p \norm{x_1-x_2}_D^p ( 1 - (1-L_r^p) \delta_{r,t}[(\seq{Z}{s})_{s \leq t-1}] ).
\end{equation*}
Finally, the assumption that the variance of $\norm{V(\Seq{Z})_t}_2$ given $(\seq{Z}{s})_{s \leq t-1}$ is $\Theta$-a.s.\ lower bounded by some $\eta > 0$, uniformly in $t$, implies that $\delta_{r,t}[(\seq{Z}{s})_{s \leq t-1}]$ is $\Theta$-a.s.\ lower bounded by some $\delta > 0$, uniformly in $t$, for a sufficiently small $r$.
Thus,
\begin{equation*}
	\ExpC{}{\norm{ f(x_1,V(\Seq{Z})_t) - f(x_2,V(\Seq{Z})_t) }_D^p}{(\seq{Z}{s})_{s \leq t-1}}
	\leq \gamma^p \norm{x_1-x_2}_D^p (1 - (1-L_r^p) \delta).
\end{equation*}
This shows that $\Theta$ has contractive marginals with respect to $\norm{\cdot}_D$ if $\gamma < 1+\epsilon$, where $\epsilon = (1 - (1-L_r^p) \delta)^{-1/p}-1$.
We point out that $\epsilon$ does not depend on $D$.

\subsection{Echo state networks without the deterministic echo state property}
\label{app_sec_ESN_det}

We claimed in \cref{ex_ESN_continued} that one can find $2 \times 2$-matrices $A$ for which $\inf_{D \in \Dc} \norm{DAD^{-1}}_{2,\mathrm{op}}$ is arbitrarily close to 1 and for which the deterministic echo state property does not hold.
It has been shown in \cite{YildizJaegerKiebel2012} that the following choice for $A$ obstructs the deterministic echo state property for any $c > 0$;
\begin{equation*}
	A
	=
	\begin{pmatrix}
		0 & c^{-1/2}
		\\
		-c^{3/2} & c+1
	\end{pmatrix}.
\end{equation*}
Let $D$ have diagonal entries $d_1,d_2 \neq 0$, and let $d := d_1/d_2$.
The operator norm $\norm{DAD^{-1}}_{2,\mathrm{op}}$ is the square root of the largest eigenvalue of $S := (DAD^{-1})^T DAD^{-1}$.
Its largest eigenvalue is given by
\begin{equation*}
	\lambda_{\mathrm{max}}(S)
	= \frac{\mathrm{tr}(S)}{2} + \frac{\sqrt{\mathrm{tr}(S)^2 - 4c^2}}{2},
\end{equation*}
where $\mathrm{tr}(S) = c^3/d^2 + d^2/c + (c+1)^2 > 0$ is the trace of $S$.
To find the infimum of $\norm{DAD^{-1}}_{2,\mathrm{op}}$ over all $D \in \Dc$, we minimize $\lambda_{\mathrm{max}}(S)$ as a function of $d$ for fixed $c > 0$.
We have
\begin{equation*}
	\frac{\partial \lambda_{\mathrm{max}}(S)}{\partial d}
	= \frac{1}{2} \frac{\partial \mathrm{tr}(S)}{\partial d} \left( 1 + \frac{\mathrm{tr}(S)}{\sqrt{\mathrm{tr}(S)^2 - 4c^2}} \right)
	= \frac{d^4 - c^4}{cd^3} \left( 1 + \frac{\mathrm{tr}(S)}{\sqrt{\mathrm{tr}(S)^2 - 4c^2}} \right),
\end{equation*}
which is equal to zero if and only if $d=c$ or $d=-c$.
Since $\lambda_{\mathrm{max}}(S)$ depends on $d$ only through $d^2$, either choice $d=c$ or $d=-c$ leads to the same value.
Thus,
\begin{equation*}
	\inf_{D \in \Dc} \norm{DAD^{-1}}_{2,\mathrm{op}}^2
	= \lambda_{\mathrm{max}}(S)|_{d=c}
	= \frac{2c + (c+1)^2}{2} + \frac{\sqrt{(2c + (c+1)^2)^2 - 4c^2}}{2}.
\end{equation*}
In particular, for any $\epsilon > 0$ we can take $c>0$ small enough so that $\inf_{D \in \Dc} \norm{DAD^{-1}}_{2,\mathrm{op}} < 1 + \epsilon$, which is what we claimed.

\section{Necessity of the integrability constraint}

In our main result, \cref{Banach_fp_dep_input}, we discovered unique fixed points of $\Fc_*$ in the set $P^{\Xi} \cap P^{V\text{-}\mathrm{causal}} \cap P_p^{\Seq{w}}(\XS \times \US)$.
The requirement that the fixed point be in $P_p^{\Seq{w}}(\XS \times \US)$ is necessary, which we illustrate with an example.
Suppose $\Xc$ is a vector space, $x_* = 0$, and $d_{\Xc}$ is induced by a norm on $\Xc$.
Consider the state map $f(x,u) = \alpha x$ with $\alpha \in (0,1/2)$.
Then, any hidden input measure $\Theta \in P(\ZS)$ has $\kappa := \alpha^p$-contractive marginals.
For the weighting on $\XS$, take $\seq{w}{t} = (\gamma-1) \gamma^t$ with $\gamma > 1$.
Suppose $p > 1$.
Then, it is possible to pick $\gamma$ so that $\gamma \alpha > 1$ and $\gamma \alpha^p < 2^{1-p}$.
The latter inequality ensures that the hypothesis $\kappa < 2^{1-p} \abs{\Seq{w}}^{-1}$ in \cref{Banach_fp_dep_input} is satisfied.
Given any $\Xi \in \Mc_p^{C,\kappa}(\US)$, the unique fixed point in \cref{Banach_fp_dep_input} is $\mu^{\Xi} = \delta_{\Seq{0}} \otimes \Xi$, where $\Seq{0}$ is the sequence that is constantly zero.
However, if $\Seq{x}^{\alpha}$ denotes the sequence $\seq{x}{t}^{\alpha} = \alpha^t$, then $\mu^{\alpha} := \delta_{\Seq{x}^{\alpha}} \otimes \Xi$ is also a fixed point of $\Fc_*$.
That $\gamma \alpha > 1$ ensures that $\Seq{x} \in \ell^1$ and, in particular, $\mu^{\alpha} \in P_p(\XS \times \US)$.
It is clear that $\mu^{\alpha} \in P^{\Xi} \cap P^{V\text{-}\mathrm{causal}}(\XS \times \US)$.
This does not contradict \cref{Banach_fp_dep_input} because $\mu^{\alpha}$ does not belong to $P_p^{\Seq{w}}(\XS \times \US)$.
Indeed,
\begin{equation*}
	\sum_{t \leq -1} \seq{w}{t} \ExpP{\Seq{X} \sim (\pi_{\XS})_*\mu^{\alpha}}{d_{\Xc} ( \seq{X}{t} , 0 )^p}
	= (\gamma-1) \sum_{t \leq -1} (\gamma \alpha^p)^t
	\geq (\gamma-1) \sum_{t \leq -1} 2^{(1-p)t}
	= \infty.
\end{equation*}
Note that this example works only on an unbounded state space for otherwise the sequence $\Seq{x}^{\alpha}$ would not be well-defined.

\bib{acm}{bibfile_FR}

\end{document}